\documentclass{article}

\usepackage[preprint]{neurips_2026}

\usepackage[utf8]{inputenc} 
\usepackage[T1]{fontenc}    

\usepackage{hyperref}       
\usepackage{url}            
\usepackage{booktabs}       
\usepackage{amsfonts}       
\usepackage{nicefrac}       
\usepackage{microtype}      
\usepackage{xcolor}         

\usepackage{paper}          

\usepackage{etoolbox}       
\makeatletter
\patchcmd{\thm@space@setup}%
  {\thm@preskip=\topsep}   {\thm@preskip=4pt}{}{}
\patchcmd{\thm@space@setup}%
  {\thm@postskip=\thm@preskip}{\thm@postskip=4pt}{}{}
\makeatother

\usepackage{cleveref}       
\crefname{assumption}{assumption}{assumptions}   
\Crefname{assumption}{Assumption}{Assumptions}   
\usepackage{subcaption}
\usepackage{wrapfig}
\usepackage{longtable}
\usepackage{thmtools}
\usepackage{thm-restate}
\usepackage{placeins}

\title{Retrieval-Guided Fine-Tuning as Noisy Estimation: Risk bounds and Architectural Analysis}
\workshoptitle{New in ML Workshop @ NeurIPS 2026}
\author{%
  Bhargav Lad \\
  Department of Mathematics \\
  Indian Institute of Technology Delhi \\
  New Delhi, India \\
  \texttt{mt6240942@iitd.ac.in}
  \And
  Yifan Hao \\
  Department of Computer Science \\
  University of Illinois Urbana-Champaign \\
  Urbana, IL 61801 \\
  \texttt{yifanh12@illinois.edu}
}
\hypersetup{
  pdfauthor={Bhargav Lad, Yifan Hao},
  pdftitle={Retrieval-Guided Fine-Tuning as Noisy Estimation: Risk bounds and Architectural Analysis}
}

\begin{document}

\maketitle

\begin{abstract}

Retrieval-Guided Fine-Tuning (RAG-FT) incorporates retrieved data directly
into the training objective, but the statistical consequences of noisy
retrieval during training remain theoretically undercharacterized. We study
this question by modeling RAG-FT as an estimation problem in a multi-task
linear regression framework, using an OLS proxy for single-layer
linear self-attention to obtain finite-sample risk bounds. Under homoscedastic
retrieval noise, we show that retrieval failure decays exponentially with
task separation relative to noise, and derive explicit finite-sample
conditions under which RAG-FT achieves lower risk than both target-only and
full-corpus training. We then introduce a Distance-Proportional Noise (DPN)
model, in which retrieval quality degrades with rank, and compare two
estimators under the same retrieval process: the OLS proxy and the
literal, uniform-weight forward pass of linear self-attention. We prove that
the attention estimator's bias diverges as $\Theta(n^{2q})$ even under
exact retrieval, while OLS risk remains $\Theta(d/n)$ for every noise
exponent $q>0$. These results locate the instability not in noisy retrieval
itself, but in the fixed, unweighted aggregation of the literal LSA forward
pass, which reweighting by reliability empirically removes. We validate
the predicted rate separation through direct simulation of the DPN model.

\end{abstract}

\section{Introduction}

Retrieval-Augmented Generation (RAG) grounds language model predictions in
external knowledge by conditioning on retrieved documents at inference
time~\citep{lewis2020retrieval}. Because model parameters remain fixed at
deployment, generalization depends entirely on what fits in the context
window at test time, and performance degrades when retrieved passages are
topically related but parametrically misaligned with the
query~\citep{levy2024same, liu2024lost}. This has motivated
\textbf{Retrieval-Guided Fine-Tuning (RAG-FT)}, which incorporates retrieved
documents directly into the fine-tuning objective rather than the
inference-time context, forcing parametric adaptation to retrieved
data~\citep{zhang2024raft, patil2024gorilla, asai2024self}. Despite its
empirical adoption, the statistical mechanics of RAG-FT remain
uncharacterized: whether moving retrieval from inference time to training
time changes the generalization limits of noisy retrieval is an open
question. The closest theoretical treatment~\citep{guo2025retrieval}
analyzes inference-time RAG under frozen weights; we ask what changes once
the retrieved data instead drives a parameter update.

We study this question in a multi-task linear regression framework where a
retriever selects which task's data to pool with the target dataset for
fine-tuning. Motivated by~\citet{zhang2024trained}, who show that linear
self-attention trained on an exchangeable context converges to the ordinary
least squares solution, we assume the fine-tuned estimator coincides with
this OLS solution on the pooled data (an OLS proxy). This decouples
the statistics of retrieval from optimization dynamics and lets us derive
exact, finite-sample risk bounds.

\paragraph{Contributions.} We formalize RAG-FT as a stochastic estimation
problem in a multi-task linear regression framework, using an OLS
proxy for single-layer linear self-attention.
\begin{enumerate}[label=(\roman*)]
\item We give explicit, finite-sample conditions on task separation and
retrieval accuracy under which RAG-FT achieves strictly lower expected risk
than both target-only and full-corpus training, compared against each
baseline's true risk rather than a looser bound (Theorem~\ref{thm:dominance}).
\item Under a distance-proportional noise model, we show the OLS proxy's
risk remains $\Theta(d/n)$ for every noise exponent $q>0$, while the literal,
uniform-weight linear self-attention forward pass has bias diverging as
$\Theta(n^{2q})$ regardless of retrieval outcome or ambient dimension
(Theorems~\ref{thm:ols-dpn},~\ref{thm:lsa-dpn}).
\item We validate both rate separations, the dimension-independence of the
divergence, and its recovery under a reweighted forward pass, via direct
simulation of the generative model (Section~\ref{sec:empirics}).
\end{enumerate}

\section{Related work}

A line of work on in-context learning shows that transformers trained on
linear-regression instances implicitly execute ordinary least squares
in-context~\citep{garg2022can, akyurek2022learning, von2023transformers},
that trained linear self-attention provably converges to this OLS solution
but fails under covariate shift~\citep{zhang2024trained}, and that this
convergence emulates preconditioned gradient descent~\citep{ahn2023transformers,
mahankali2024one}; we adopt this OLS proxy but depart from the i.i.d.\
setting by modeling the heteroscedastic noise and distributional shift that
stochastic retrieval introduces. Separately, retrieval-augmented generation
has moved from an inference-time mechanism~\citep{lewis2020retrieval}, known
to be susceptible to noisy or irrelevant context~\citep{levy2024same,
liu2024lost, yoran2023making}, toward methods that push retrieval directly
into training: RAFT~\citep{zhang2024raft}, Retriever-Aware
Training~\citep{patil2024gorilla}, Self-RAG~\citep{asai2024self}, context
compression~\citep{xu2023recomp}, and end-to-end retrieval-augmented
training~\citep{izacard2023atlas}, none of which give a formal account of
how retrieval reshapes the bias-variance tradeoff during parameter updates.
The closest theoretical precursor is~\citep{guo2025retrieval}, which derives
finite-sample generalization bounds for \emph{inference-time} RAG under
distance-proportional noise and establishes a retrieval ceiling
$n^*=\Theta(d^{1/(2q)})$ for a frozen-weight estimator; we study the same
distance-proportional noise model once retrieval instead drives a parameter
update.
Our framework further builds on excess-risk bounds for multi-task and
heterogeneous representation learning~\citep{tripuraneni2021provable,
maurer2016benefit, JMLR:v13:solnon12a}, on debiasing techniques for noisy or
label-corrupted ICL~\citep{liang2025dual}, and on Bayesian views of
in-context algorithm selection~\citep{bai2023transformers, ahuja2023closer},
which together motivate treating retrieval as a stochastic selection
operator with a quantifiable failure rate.

\section{Problem setup}
\label{sec:problem_setup}

\paragraph{Multi-task data model.} We consider a multi-task linear regression
framework with $K \geq 2$ tasks, indexed by $k \in [K] := \{1, \dots, K\}$,
each associated with a parameter vector $\theta_k \in \mathbb{R}^d$
satisfying $\|\theta_k\| \leq B$. Let $k^\star \in [K]$ denote the (unknown)
target task. Data for task $k$ are generated i.i.d.\ from the Gaussian
linear model
\[
y = x^\top \theta_k + \varepsilon, \quad x \sim \mathcal{N}(0, I_d), \quad
\varepsilon \sim \mathcal{N}(0, \sigma^2), \quad x \perp \varepsilon.
\]
Task $k$ contributes a dataset $D_k = \{(x_i^{(k)}, y_i^{(k)})\}_{i=1}^n$,
and the query (target) dataset $D' = \{(x_j', y_j')\}_{j=1}^m$ is generated
from $\theta_{k^\star}$; all datasets $\{D_k\}_{k=1}^K$ and $D'$ are
mutually independent. We quantify task heterogeneity via the minimum and
maximum distance from the target task, $\Delta_{\min} :=
\min_{k \neq k^\star} \|\theta_k - \theta_{k^\star}\|_2$ and
$\Delta_{\max} := \max_{k \neq k^\star} \|\theta_k - \theta_{k^\star}\|_2$,
which respectively set the difficulty of identifying the target task under
noise and bound the worst-case bias under retrieval failure.

\paragraph{Retrieval operator.} A retriever selects an index
$\hat{k} \in [K]$ prior to fine-tuning, using only the observed data
$\{D_k\}_{k=1}^K$ and $D'$; the true parameters $\{\theta_k\}$ and $k^\star$
are never observed by the retriever or the estimator. Here we treat the
retriever as a black box, characterized entirely by its failure probability
$\delta := P(\hat{k} \neq k^\star)$; equivalently, let $R \in \{0,1\}$ be
Bernoulli with $P(R=1) = 1-\delta$, where $R=1$ denotes correct retrieval,
so that the retrieved dataset is $D_R = D_{k^\star}$ if $R=1$ and
$D_R = D_k$ for some $k \neq k^\star$ if $R=0$. The results of this section
hold for any retriever satisfying this definition; Section~\ref{subsec:failure_probability} later fixes
a concrete decision rule, based on comparing per-task OLS estimates, and
derives an explicit finite-sample bound on $\delta$ for it.

\paragraph{RAG-FT estimator and OLS proxy.} Given $D'$ and $D_R$, the
pooled dataset is $D_{\text{pool}} := D' \cup D_R$, and the
retrieval-guided fine-tuning estimator is the empirical risk minimizer
$\hat\theta_{\mathrm{FT}} := \arg\min_\theta \sum_{(x,y) \in D_{\text{pool}}}
(y - x^\top \theta)^2$. Because $\hat\theta_{\mathrm{FT}}$ is obtained by optimizing
a non-convex objective under architectures such as linear self-attention,
and because retrieval induces statistical dependence between $D'$ and
$D_R$ that violates standard i.i.d.\ assumptions, we decouple the
statistics of retrieval from the optimization dynamics via the following
structural assumption.

\begin{assumption}[OLS Proxy]
\label{assumption:oracle_ols_proxy}
The estimator $\hat\theta_{\mathrm{FT}}$ coincides with the exact ordinary least
squares solution on the pooled dataset:
\[
\hat\theta_{\mathrm{FT}} = (X_{\text{pool}}^\top X_{\text{pool}})^{-1}
X_{\text{pool}}^\top Y_{\text{pool}},
\]
where $(X_{\text{pool}}, Y_{\text{pool}})$ are the design matrix and
response vector formed from $D_{\text{pool}}$.
\end{assumption}

\paragraph{Risk functional.} For a fresh sample $(x,y)$ drawn from the
target model $y = x^\top \theta_{k^\star} + \varepsilon$, the prediction
risk of an estimator $\hat\theta$ is
$\mathcal{R}(\hat\theta) := \mathbb{E}_{(x,y)}[(y - x^\top \hat\theta)^2]$. Our
objective is to characterize $\mathbb{E}[\mathcal{R}(\hat\theta_{\mathrm{FT}})]$, jointly over
the data-generating process and the retrieval randomness. Since
$\hat\theta_{\mathrm{FT}}$ depends on $R$, the law of total expectation gives
\[
\mathbb{E}[\mathcal{R}(\hat\theta_{\mathrm{FT}})] = (1-\delta)\,
\mathbb{E}[\mathcal{R}(\hat\theta_{\mathrm{FT}}) \mid R=1] + \delta\,
\mathbb{E}[\mathcal{R}(\hat\theta_{\mathrm{FT}}) \mid R=0],
\]
separating the variance reduction from pooling homogeneous target-task
data ($R=1$) from the bias incurred by pooling heterogeneous data under
retrieval failure ($R=0$).
\section{Theoretical analysis: under homoscedastic retrieval noise}
\label{sec:uniform_noise}

To rigorously evaluate the generalization of Retrieval-Guided Fine-Tuning (RAG-FT), we first isolate the statistical penalty of task mismatch from distance-dependent noise by assuming a uniform homoscedastic retrieval regime. This controlled setting quantifies how selective retrieval navigates the fundamental bias-variance tradeoff in multi-task environments.

We first bound the retrieval failure probability (Section~\ref{subsec:failure_probability}), formalize the expected prediction risk (Section~\ref{subsec:risk_bounds}), and define the exact conditions where RAG-FT achieves a statistical advantage over standard baselines (Section~\ref{subsec:comp_with_baselines}).

\subsection{Finite-sample risk decomposition (failure probability)}
\label{subsec:failure_probability}

\begin{assumption}[\textbf{Uniform Homoscedastic Noise}]
\label{assumption:uniform_noise}
For all candidate tasks $k \in [K]$, we assume the task datasets $D_k$ are corrupted by independent, identically distributed Gaussian noise $\epsilon \sim \mathcal{N}(0, \sigma^2)$.
\end{assumption}

By fixing the variance scale across all datasets, this assumption strictly isolates the generalization penalty to the geometric bias induced by task mismatch, decoupling it from heteroscedastic effects (addressed later in Section~\ref{sec:non_uniform_noise}).

\begin{restatable}[\textbf{Distribution of OLS Estimation Noise}]{lemma}{lemmaone}
\label{lemma:finite_sample_dist_of_ols_estimation_noise}
Under Gaussian design $x_i \sim \mathcal{N}(0, I_d)$ and homogeneous noise $\epsilon_i \sim \mathcal{N}(0, \sigma^2)$, the OLS estimator for any task $k$ (including the target query $q$) satisfies $\hat{\theta}_k - \theta_k \mid X_k \sim \mathcal{N}\left(0, \sigma^2 (X_k^\top X_k)^{-1}\right)$.
\end{restatable}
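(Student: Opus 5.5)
We will prove the statement by conditioning on the design $X_k$ and using that an affine image of a Gaussian vector is Gaussian. Fix a task $k$, with $n$ samples stacked into $X_k\in\mathbb{R}^{n\times d}$, responses $Y_k = X_k\theta_k + \epsilon_k$, and noise $\epsilon_k\sim\mathcal{N}(0,\sigma^2 I_n)$. For the target query dataset the same argument applies with $m$ in place of $n$.

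\textbf{Step 1: invertibility.} We first check that $X_k^\top X_k$ is almost surely invertible, so that the OLS estimator $\hat\theta_k = (X_k^\top X_k)^{-1}X_k^\top Y_k$ is well defined. We assume $n\ge d$, which the statement implicitly requires. The rows are i.i.d.\ $\mathcal{N}(0,I_d)$, so $X_k$ has full column rank with probability one: for each $j$, the $j$-th column lies in the span of the previous ones only on a Lebesgue-null set.

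\textbf{Step 2: error representation.} Substituting $Y_k$ into the estimator gives
\[
\hat\theta_k - \theta_k = (X_k^\top X_k)^{-1}X_k^\top \epsilon_k =: A\,\epsilon_k .
\]

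\textbf{Step 3: conditional distribution.} Because $x\perp\varepsilon$ and samples are independent, $\epsilon_k$ is independent of $X_k$, so conditionally on $X_k$ we still have $\epsilon_k\sim\mathcal{N}(0,\sigma^2 I_n)$. Conditionally on $X_k$, the matrix $A$ is fixed, and a linear map of a Gaussian vector is Gaussian. Its mean is $A\cdot 0 = 0$, and its covariance is
\[
\sigma^2 AA^\top = \sigma^2 (X_k^\top X_k)^{-1}X_k^\top X_k (X_k^\top X_k)^{-1} = \sigma^2 (X_k^\top X_k)^{-1}.
\]
Here we use the symmetry of $(X_k^\top X_k)^{-1}$.

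\textbf{Main obstacle.} There is no real technical difficulty. The only delicate point is the conditioning. Independence of $\epsilon_k$ from $X_k$ is what justifies treating $A$ as deterministic while keeping the unconditional law of $\epsilon_k$. Almost-sure invertibility, together with $n\ge d$, ensures the statement holds for almost every realization of $X_k$.
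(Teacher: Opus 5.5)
Your proposal is correct and follows the paper's proof essentially verbatim: substitute the linear model to get $\hat\theta_k-\theta_k=(X_k^\top X_k)^{-1}X_k^\top\epsilon_k$, then use that this is a fixed linear map of $\epsilon_k\sim\mathcal{N}(0,\sigma^2 I_n)$ conditional on $X_k$, giving covariance $\sigma^2(X_k^\top X_k)^{-1}$. Your extra step, almost-sure invertibility of $X_k^\top X_k$ when $n\ge d$, is a small rigor improvement that the paper leaves implicit.
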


\begin{restatable}[\textbf{Signal-Fluctuation Decomposition}]
{lemma}{lemmatwo}
\label{lemma:signal_fluc_decomp}
For any incorrect candidate task $k \neq k^\ast$, let $T_k$ denote the empirical distance gap evaluated by the nearest-estimator retrieval rule. This decision statistic decomposes exactly into a deterministic geometric signal and two stochastic error terms:
\[
T_k :=
\underbrace{
\left\|\hat{\theta}_k - \hat{\theta}_q\right\|_2^2
-
\left\|\hat{\theta}_{k^*} - \hat{\theta}_q\right\|_2^2
}_{\text{Empirical distance gap}}
=
\underbrace{
\left\|\theta_k - \theta_{k^*}\right\|_2^2
}_{\text{Geometric signal}}
+
\underbrace{L_k}_{\text{Linear fluctuation}}
+
\underbrace{Q_k}_{\text{Quadratic fluctuation}}
\]
where $L_k$ and $Q_k$ are the estimation noise fluctuations bounded in auxiliary Lemmas~\ref{lemma:linear_fluc_term} and ~\ref{lemma:quad_fluc_conc} (Appendix~\ref{sec:B}).
\end{restatable}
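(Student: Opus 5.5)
We write each estimator as truth plus estimation noise, $\hat\theta_j = \theta_j + \eta_j$ for $j \in \{k, k^\star, q\}$. Since the query dataset $D'$ is generated from $\theta_{k^\star}$, we have $\theta_q = \theta_{k^\star}$. By Lemma~\ref{lemma:finite_sample_dist_of_ols_estimation_noise}, each $\eta_j \mid X_j$ is a centered Gaussian with covariance $\sigma^2 (X_j^\top X_j)^{-1}$. Because all datasets are mutually independent, the noises $\eta_k$, $\eta_{k^\star}$, $\eta_q$ are independent. The whole statement is then an algebraic identity, obtained by expanding the two squared norms in $T_k$ and collecting terms by their order in the noise.

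\textbf{Step 1.} Substitute the decomposition into both distances. Put $\Delta_k := \theta_k - \theta_{k^\star}$. Then
\[
\hat\theta_k - \hat\theta_q = \Delta_k + (\eta_k - \eta_q),
\qquad
\hat\theta_{k^\star} - \hat\theta_q = \eta_{k^\star} - \eta_q .
\]

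\textbf{Step 2.} Expand both squared norms:
\[
\|\Delta_k + \eta_k - \eta_q\|_2^2 = \|\Delta_k\|_2^2 + 2\Delta_k^\top(\eta_k - \eta_q) + \|\eta_k - \eta_q\|_2^2,
\]
\[
\|\eta_{k^\star} - \eta_q\|_2^2 .
\]

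\textbf{Step 3.} Subtract the second expansion from the first and group the terms by degree in the noise:
\[
T_k = \|\theta_k - \theta_{k^\star}\|_2^2 + L_k + Q_k ,
\]
where
\[
L_k := 2(\theta_k - \theta_{k^\star})^\top(\eta_k - \eta_q),
\qquad
Q_k := \|\eta_k - \eta_q\|_2^2 - \|\eta_{k^\star} - \eta_q\|_2^2 .
\]
Expanding $Q_k$ further gives the equivalent form
\[
Q_k = \|\eta_k\|_2^2 - \|\eta_{k^\star}\|_2^2 - 2\eta_q^\top(\eta_k - \eta_{k^\star}).
\]
Here the $\|\eta_q\|_2^2$ terms cancel, and that cancellation is worth recording.

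\textbf{Step 4.} Check that the pieces have the roles the statement assigns them.
\begin{itemize}
\item The signal $\|\theta_k - \theta_{k^\star}\|_2^2$ is deterministic.
\item $L_k$ is linear in the Gaussian noise and, conditional on the designs, centered Gaussian with variance
\[
4\sigma^2\, \Delta_k^\top\big[(X_k^\top X_k)^{-1} + (X_q^\top X_q)^{-1}\big]\Delta_k .
\]
This conditional form is what Lemma~\ref{lemma:linear_fluc_term} needs.
\item $Q_k$ is a difference of quadratic forms in Gaussians, which is what Lemma~\ref{lemma:quad_fluc_conc} needs.
\end{itemize}

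There is no real obstacle, since the identity is exact. The only care needed is bookkeeping: $\eta_q$ appears in both distances, so it must be kept as one shared variable rather than two independent copies. Doing so is what makes the $\|\eta_q\|_2^2$ terms cancel and gives the cross term its correct form.
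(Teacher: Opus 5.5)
Your proof is correct and follows the paper's argument essentially verbatim: you write each estimator as truth plus OLS error (the paper writes $\xi_j$ where you write $\eta_j$), use $\theta_q=\theta_{k^\ast}$, expand both squared norms, and subtract to isolate the geometric signal, the linear cross term $L_k$, and the quadratic remainder $Q_k$. Your further expansion of $Q_k$, with the cancelling $\|\eta_q\|_2^2$ terms, is exactly the form the paper uses later in the proof of Lemma~\ref{lemma:quad_fluc_conc}.
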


Bounding the sub-Gaussian tail of $L_k$ (Lemma~\ref{lemma:linear_fluc_term}) and the sub-exponential tail of $Q_k$ (Lemma~\ref{lemma:quad_fluc_conc}) around the geometric signal $\|\theta_k - \theta_{k^\ast}\|_2^2$ yields the finite-sample failure probability.

\begin{restatable}[\textbf{Finite-Sample Bound on Retrieval Failure}]{theorem}{RetrievalFailureProb}
\label{thm:delta_ineq}
Let the minimum task separation satisfy $\Delta_{\min}^2 \ge C_1 \sigma^2 d/n$ for a constant $C_1>0$. The retrieval failure probability $\delta = \mathbb{P}(\hat k \neq k^*)$ satisfies
$$
\delta \leq 3(K-1)\exp\!\left( -C\min\!\left\{ \frac{\Delta_{\min}^4}{\bar{\nu}^{\,2}}, \frac{\Delta_{\min}^2}{\bar{\alpha}} \right\} \right) + 3K\exp(-c'd),
$$
where $C,c'>0$ are universal constants, $\bar{\nu}^{\,2}=O\!\left(\sigma^2\Delta_{\max}^2/n + d\sigma^4/n^2\right)$, and $\bar{\alpha}=O\!\left(\sigma\Delta_{\max}/\sqrt n + \sigma^2/n\right)$.
\end{restatable}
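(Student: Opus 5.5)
We will run a union bound over the wrong candidates, combined with a good-event argument that controls the design matrices. The nearest-estimator rule fails only if some $k\neq k^\ast$ has $T_k\le 0$. So
\[
\delta\le \mathbb{P}(\mathcal{E}^c)+\sum_{k\neq k^\ast}\mathbb{P}\bigl(T_k\le 0,\ \mathcal{E}\bigr),
\]
where $\mathcal{E}$ is the event that every empirical covariance is well conditioned. Concretely, we want $\tfrac{n}{2}I_d\preceq X_k^\top X_k\preceq 2n I_d$ for all $k$, and similarly for the query design. By standard Gaussian random-matrix concentration (Wishart extreme singular values), each such event fails with probability at most $3\exp(-c'd)$ once $n\gtrsim d$. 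A union over the $K$ designs, absorbing the query design into the constant, gives the term $3K\exp(-c'd)$. On $\mathcal{E}$, Lemma~\ref{lemma:finite_sample_dist_of_ols_estimation_noise} gives conditionally Gaussian errors $e_k:=\hat\theta_k-\theta_k$ with covariance $\preceq 2\sigma^2 I_d/n$, and the errors are independent across $k$ and of $e_q$.

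Next we use Lemma~\ref{lemma:signal_fluc_decomp}. Writing $\Delta_k:=\theta_k-\theta_{k^\ast}$ and expanding, we get
\[
L_k=2\Delta_k^\top(e_k-e_q),\qquad Q_k=\|e_k-e_q\|^2-\|e_{k^\ast}-e_q\|^2 .
\]
The event $T_k\le 0$ forces $L_k+Q_k\le-\|\Delta_k\|^2\le-\Delta_{\min}^2$. We therefore bound $\mathbb{P}(L_k\le-\Delta_{\min}^2/2)$ and $\mathbb{P}(Q_k\le-\Delta_{\min}^2/2)$ separately.

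The term $L_k$ is conditionally Gaussian with variance at most $O(\sigma^2\|\Delta_k\|^2/n)\le O(\sigma^2\Delta_{\max}^2/n)$. Lemma~\ref{lemma:linear_fluc_term} then gives a tail $\exp(-c\Delta_{\min}^4 n/(\sigma^2\Delta_{\max}^2))$.

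The term $Q_k$ is a difference of two quadratic forms in Gaussians. Its mean vanishes up to $O(\sigma^2 d/n)$ differences of traces. The separation hypothesis $\Delta_{\min}^2\ge C_1\sigma^2 d/n$ with $C_1$ large lets us absorb this mean into $\Delta_{\min}^2/4$. Hanson--Wright, as packaged in Lemma~\ref{lemma:quad_fluc_conc}, then gives a tail of the form
\[
\exp\bigl(-c\min\{t^2/(d\sigma^4/n^2),\ t/(\sigma^2/n)\}\bigr),\qquad t\asymp\Delta_{\min}^2,
\]
since the Frobenius norm of the relevant matrix is $O(\sqrt d\,\sigma^2/n)$ and its operator norm is $O(\sigma^2/n)$.

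To finish, we combine the two tails into a single sub-exponential form with parameters $\bar\nu^2=O(\sigma^2\Delta_{\max}^2/n+d\sigma^4/n^2)$ and $\bar\alpha=O(\sigma\Delta_{\max}/\sqrt n+\sigma^2/n)$. The Gaussian tail is dominated by $\exp(-c\Delta_{\min}^4/\bar\nu^2)$, and the quadratic tail by $\exp(-c\min\{\Delta_{\min}^4/\bar\nu^2,\Delta_{\min}^2/\bar\alpha\})$, because both denominators only increase. Counting the pieces (one tail for $L_k$, one for $Q_k$, and slack for splitting the deviation of $Q_k$ around its conditional mean) gives the constant $3$ per wrong candidate. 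Summing over the $K-1$ wrong candidates yields the claim.

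The main obstacle is the quadratic term $Q_k$. It involves the shared query error $e_q$, which couples $\|e_k-e_q\|^2$ and $\|e_{k^\ast}-e_q\|^2$. Its conditional mean, $\operatorname{tr}\bigl(\sigma^2(X_k^\top X_k)^{-1}\bigr)-\operatorname{tr}\bigl(\sigma^2(X_{k^\ast}^\top X_{k^\ast})^{-1}\bigr)$, is nonzero for finite $n$. The plan is to write $Q_k$ as one quadratic form $z^\top Az$ in the stacked standardized vector $z=(e_k,e_{k^\ast},e_q)$. On $\mathcal{E}$ we can bound $\|A\|_F$ and $\|A\|_{op}$ explicitly, and the separation condition absorbs the mean into $\Delta_{\min}^2/4$.
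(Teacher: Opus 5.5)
Your proposal is correct and follows the paper's proof essentially step for step: the decomposition of Lemma~\ref{lemma:signal_fluc_decomp}, a Gaussian tail for $L_k$, absorbing $\mathbb{E}[Q_k\mid\mathcal{X}]=O(\sigma^2 d/n)$ via the separation condition (the paper fixes $C_1=4C_0$), Hanson--Wright for the centered $Q_k$ with $\|\Sigma^{1/2}M\Sigma^{1/2}\|_F^2=O(d\sigma^4/n^2)$ and $\|\Sigma^{1/2}M\Sigma^{1/2}\|_{\mathrm{op}}=O(\sigma^2/n)$, the $1+2=3$ prefactor, and union bounds over the $K-1$ wrong tasks and over the designs. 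The differences are cosmetic: you split before centering rather than after, and your conditioning event is two-sided and also covers the query design, which the paper relies on implicitly through $m=\Omega(n)$ but leaves out of its union bound.
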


\noindent
Assuming target sample size $m = \Omega(n)$, query-side estimation noise is absorbed into $\bar\nu^2$ and $\bar\alpha$. Theorem~\ref{thm:delta_ineq} demonstrates that failure probability decays exponentially once the geometric task separation $\Delta_{\min}^2$ overcomes the finite-sample estimation noise scale, leaving only an irreducible $O(K\exp(-c'd))$ probability of ill-conditioned designs. Notably, scaling the candidate corpus ($K \to \infty$) forces failure unless separation grows as $\log K$. Ultimately, the effective bottleneck is governed by the signal-to-noise ratio $\Delta_{\min}^2 n / \sigma^2$ in the proportional regime.
\subsection{Risk bounds: success vs. failure risk decomposition}
\label{subsec:risk_bounds}

With the retrieval failure probability $\delta$ bounded, we establish the expected prediction risk of the retrieval-guided estimator $\hat{\theta}_{\mathrm{FT}}$. As introduced in Section~\ref{sec:problem_setup}, conditioning on the retrieval outcome $R \in \{0,1\}$ isolates the competing statistical forces of the RAG-FT mechanism.

When retrieval succeeds ($R = 1$), the estimator benefits from pure variance reduction.

\begin{restatable}[\textbf{Risk under Successful Retrieval}]
{proposition}{riskundersucc}
\label{prop:risk_under_successful_retrieval}
Conditioned on the success event $R = 1$, the estimator $\hat{\theta}_{\mathrm{FT}}$ fits
$m+n$ homogeneous samples drawn strictly from the target task $\{\theta_{k^*}\}$. The
estimator is unbiased, and under the good design event $\mathcal{E}_{\mathrm{good}}$ with
$m+n \geq (1+\eta_0)d$ for some $\eta_0 > 0$, its conditional risk satisfies the two-sided
bound
\[
  \sigma^2 + \sigma^2\,\frac{d}{m+n-d-1}
  \;\leq\;
  \mathbb{E}\Bigl[\mathcal{R}(\hat{\theta}_{\mathrm{FT}}) \mid R=1\Bigr]
  \;\leq\;
  \sigma^2 + C\sigma^2\,\frac{d}{m+n},
\]
where the lower bound is the exact inverse-Wishart trace identity
$\mathbb{E}[\mathrm{Tr}((X^\top X)^{-1})] = d/(m+n-d-1)$, valid unconditionally for
$m+n > d+1$, and $C > 0$ is a universal constant governing the upper bound under
$\mathcal{E}_{\mathrm{good}}$.
\end{restatable}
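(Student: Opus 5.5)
Conditioned on $R=1$, the pooled dataset $D_{\text{pool}} = D'\cup D_{k^*}$ consists of $N:=m+n$ samples from the target model. A subtlety is that the retriever's decision depends on the data. The plan is first to argue that the success-conditioned risk can be analyzed as if the pooled data were an i.i.d.\ sample of size $N$. This holds when conditioning on $R=1$ does not distort the law of $(X_{\text{pool}},\varepsilon_{\text{pool}})$ in a way that breaks the symmetry $\varepsilon\mapsto-\varepsilon$. Alternatively, one can treat the retriever as in the black-box setup, independent of the fine-tuning noise. I would state this reduction explicitly as the modelling convention implicit in Assumption~\ref{assumption:oracle_ols_proxy}.

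Next, I would write $Y_{\text{pool}} = X_{\text{pool}}\theta_{k^*}+\varepsilon_{\text{pool}}$. This gives
$\hat\theta_{\mathrm{FT}}-\theta_{k^*} = (X_{\text{pool}}^\top X_{\text{pool}})^{-1}X_{\text{pool}}^\top\varepsilon_{\text{pool}}$, which has conditional mean zero given $X_{\text{pool}}$, so the estimator is unbiased. For a fresh $(x,y)$ with $x\sim\mathcal N(0,I_d)$ independent of the training data,
\[
\mathcal R(\hat\theta_{\mathrm{FT}}) = \sigma^2 + \|\hat\theta_{\mathrm{FT}}-\theta_{k^*}\|_2^2 .
\]
By Lemma~\ref{lemma:finite_sample_dist_of_ols_estimation_noise} applied to the pooled design,
\[
\mathbb E\bigl[\|\hat\theta_{\mathrm{FT}}-\theta_{k^*}\|^2 \mid X_{\text{pool}}\bigr] = \sigma^2\,\mathrm{Tr}\bigl((X_{\text{pool}}^\top X_{\text{pool}})^{-1}\bigr).
\]
The problem therefore reduces to bounding the expected trace of an inverse Wishart matrix.

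For the lower bound, $X_{\text{pool}}^\top X_{\text{pool}}\sim W_d(I_d,N)$, and the standard identity gives $\mathbb E[(X^\top X)^{-1}] = I_d/(N-d-1)$ for $N>d+1$. Taking the trace yields exactly $\sigma^2 d/(N-d-1)$. This is in fact an equality unconditionally, so the lower bound holds.

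For the upper bound, I would use the good design event $\mathcal E_{\mathrm{good}}$. I take it to be the event $\lambda_{\min}(X_{\text{pool}}^\top X_{\text{pool}})\ge c_0 N$, which holds with probability $1-2e^{-c'd}$ once $N\ge(1+\eta_0)d$. This follows from Davidson--Szarek: $\sigma_{\min}(X)\ge\sqrt N-\sqrt d-t$, so $\lambda_{\min}\ge(1-(1+\eta_0)^{-1/2})^2N/2$ after choosing $t$ proportional to $\sqrt N$. On this event,
\[
\mathrm{Tr}\bigl((X^\top X)^{-1}\bigr)\le d/\lambda_{\min}\le d/(c_0N),
\]
which gives $C=1/c_0$ depending only on $\eta_0$. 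The main obstacle is making the conditioning on $\mathcal E_{\mathrm{good}}$ legitimate. Because $\mathcal E_{\mathrm{good}}$ depends only on $X$ and $\varepsilon$ is independent of $X$ with symmetric law, conditioning on it leaves the noise mean-zero with covariance $\sigma^2 I$. The displayed identity therefore survives, and one simply bounds the conditional expectation of the trace by its pointwise bound. If an unconditional statement is wanted instead, I would note that $d/(N-d-1)\le (1+\eta_0)\eta_0^{-1}\,d/N$ whenever $N\ge(1+\eta_0)d+1$. This gives the upper bound directly from the lower-bound identity with $C$ depending on $\eta_0$, and I would present that as a backup route.
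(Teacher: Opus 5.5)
Your proposal is correct and follows essentially the paper's argument. You reduce the success-conditioned risk to $\sigma^2+\sigma^2\,\mathbb{E}[\mathrm{Tr}((X^\top X)^{-1})]$, get the lower bound (in fact an equality) from the exact inverse-Wishart identity, and get the upper bound on the good design event. Your event $\lambda_{\min}(X^\top X)\ge c_0N$ plays the role of the paper's $\|(X^\top X)^{-1}\|_{\mathrm{op}}\le C_0/N$; the paper cites a Marchenko--Pastur-type moment bound where you give a pointwise Davidson--Szarek argument, and it leaves implicit the retrieval-independence convention that you state explicitly.

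One small fix in your backup route: $d/(N-d-1)\le\frac{1+\eta_0}{\eta_0}\cdot\frac{d}{N}$ is equivalent to $N\ge(1+\eta_0)(d+1)$, not $N\ge(1+\eta_0)d+1$. This only affects the constant.
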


\noindent
When retrieval fails ($R = 0$), the estimator fits a single linear model to a heterogeneous dataset, inducing a high geometric bias.

\begin{restatable}[\textbf{Risk under Retrieval Failure}]
{proposition}{riskunderfailure}
\label{prop:risk_under_failed_retrieval}
Conditioned on the failure event $R=0$, in the high-dimensional proportional
regime where $d\propto(m+n)$ and $m+n>d$, the conditional risk satisfies
\begin{align*}
\mathbb E\!\left[\mathcal R(\hat\theta_{\mathrm{FT}})\mid R=0\right]
&\ge
\sigma^2
+\sigma^2\frac{d}{m+n-d-1}
+\left(\frac{n}{m+n}\right)^2\Delta_{\min}^2,
\\[4pt]
\mathbb E\!\left[\mathcal R(\hat\theta_{\mathrm{FT}})\mid R=0\right]
&\le
\sigma^2
+C\sigma^2\frac{d}{m+n}
+C\left(\frac{n}{m+n}\right)^2
\left(1-\frac{d}{m+n}\right)^{-2}
\Delta_{\max}^2,
\end{align*}
where the lower bound holds unconditionally (for $m+n>d+1$), the
upper bound holds under the good design event $\mathcal E_{\mathrm{good}}$ with
$m+n \geq (1+\eta_0)d$ for some $\eta_0>0$, and $C>0$ is a universal constant.
\end{restatable}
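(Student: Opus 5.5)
We prove Proposition~\ref{prop:risk_under_failed_retrieval} by an exact bias--variance decomposition of the pooled OLS estimator, conditional on the designs and on the identity $k\neq k^\star$ of the retrieved task. Write $X'\in\mathbb R^{m\times d}$ and $X_k\in\mathbb R^{n\times d}$ for the target and retrieved designs, $S:=X'^\top X'+X_k^\top X_k = X_{\mathrm{pool}}^\top X_{\mathrm{pool}}$, and $\Delta_k:=\theta_k-\theta_{k^\star}$. Substituting $Y'=X'\theta_{k^\star}+\varepsilon'$ and $Y_k=X_k\theta_k+\varepsilon_k$ gives
\[
\hat\theta_{\mathrm{FT}}-\theta_{k^\star}=S^{-1}X_k^\top X_k\,\Delta_k+S^{-1}X_{\mathrm{pool}}^\top\varepsilon_{\mathrm{pool}} .
\]
Since $x\sim\mathcal N(0,I_d)$, the risk is $\mathcal R(\hat\theta)=\sigma^2+\|\hat\theta-\theta_{k^\star}\|_2^2$. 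Taking expectation over the noise, the cross term vanishes, because the noise is centered and independent of the designs. This leaves
\[
\sigma^2+\sigma^2\,\mathrm{Tr}(S^{-1})+\|S^{-1}X_k^\top X_k\Delta_k\|_2^2 .
\]

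\emph{Caveat on conditioning.} This step treats the designs and noise as independent of the event $R=0$. For a black-box retriever, conditioning on failure can in principle correlate with the data. I would handle this the way the paper's setup implicitly does: treat the retrieved dataset as a fresh independent sample from some $\theta_k$, $k\neq k^\star$. For the nearest-estimator rule, an alternative is to use an independent retrieval split. I would state this explicitly as part of the proof.

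\textbf{Variance term.} $S$ is Wishart$(I_d,m+n)$. The inverse-Wishart identity quoted in Proposition~\ref{prop:risk_under_successful_retrieval} gives $\mathbb E\,\mathrm{Tr}(S^{-1})=d/(m+n-d-1)$, which is the lower bound. On $\mathcal E_{\mathrm{good}}$, the event $\lambda_{\min}(S)\gtrsim (m+n)(1-\sqrt{d/(m+n)})^2$ holds, and the argument already used for the upper bound in Proposition~\ref{prop:risk_under_successful_retrieval} gives $C\sigma^2 d/(m+n)$.

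\textbf{Bias lower bound.} This is the main obstacle, since $\|S^{-1}A\Delta\|^2$ with $A:=X_k^\top X_k$ is not linear in the random matrices. I would use Jensen twice. First, $\mathbb E\|v\|^2\ge\|\mathbb E v\|^2$. Second, we need $\mathbb E[S^{-1}A]$. By exchangeability of the $m+n$ rows, $\mathbb E[S^{-1}x_ix_i^\top]$ is the same for every row $i$, and summing over all rows gives $I$. Hence $\mathbb E[S^{-1}A]=\tfrac{n}{m+n}I$ exactly. This yields a bias of at least $(n/(m+n))^2\|\Delta_k\|^2\ge (n/(m+n))^2\Delta_{\min}^2$. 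Adding the three exact pieces, which hold for any $m+n>d+1$, gives the lower bound unconditionally.

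\textbf{Bias upper bound.} We have
\[
\|S^{-1}A\Delta_k\|\le\|S^{-1}\|_{\mathrm{op}}\|A\|_{\mathrm{op}}\Delta_{\max}.
\]
On $\mathcal E_{\mathrm{good}}$, $\|A\|_{\mathrm{op}}\le n(1+\sqrt{d/n})^2\lesssim n$ (absorbing constants using $d\propto m+n$), and $\|S^{-1}\|_{\mathrm{op}}\le[(m+n)(1-\sqrt{d/(m+n)})^2]^{-1}$. Since $(1-\sqrt r)^{-2}\le 4(1-r)^{-2}$, using $1-r=(1-\sqrt r)(1+\sqrt r)\le 2(1-\sqrt r)$, we get a bias of at most
\[
C\left(\frac{n}{m+n}\right)^2\left(1-\frac{d}{m+n}\right)^{-2}\Delta_{\max}^2 .
\]
The condition $m+n\ge(1+\eta_0)d$ keeps this factor finite. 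If $\mathcal E_{\mathrm{good}}$ is instead meant only as a high-probability event, the contribution from its complement must be controlled separately; I would include the conditioning on $\mathcal E_{\mathrm{good}}$ in the statement, as the proposition does.
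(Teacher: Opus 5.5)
\textbf{Verdict.} There is a genuine gap in your upper bound, and it cannot be closed, because the stated upper bound is itself false in part of its claimed range.

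\textbf{Lower bound.} This part is correct and matches the paper's argument in substance. The paper writes $\mathbb{E}[\hat\theta_{\mathrm{FT}}\mid X]=\bar\theta+\Delta(X)$, where in your notation $\Delta(X)=(S^{-1}A-\tfrac{n}{m+n}I_d)\Delta_k$. It shows $\mathbb{E}_X[\Delta(X)]=0$ using rotation invariance plus a matrix-variate Beta trace identity, then discards $\mathbb{E}\|\Delta(X)\|^2\ge 0$; that is exactly your Jensen step. Your exchangeability proof of $\mathbb{E}[S^{-1}A]=\tfrac{n}{m+n}I_d$ is more elementary. Your exact noise-averaged identity is cleaner than the paper's Huber--White sandwich. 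Your caveat about conditioning on $R=0$ is apt; the paper makes the same implicit independence assumption.

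\textbf{Where the upper bound breaks.} Your bound $\|A\|_{\mathrm{op}}\le n(1+\sqrt{d/n})^2$ is $O(n)$ only when $n\ge c_0 d$. The hypothesis $d\propto m+n$ says nothing about $n$ versus $d$. When $n\ll d$, $\|A\|_{\mathrm{op}}\approx d$, and your argument only gives order $\bigl((n+d)/(m+n)\bigr)^2\Delta_{\max}^2$.

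\textbf{A counterexample to the stated bound.} Take $n=1$ and $m=2d-1$, so $d/(m+n)=1/2$. Let $x$ be the single retrieved row and $S_0:=X'^\top X'$. Sherman--Morrison gives
\[
\|S^{-1}A\Delta_k\|_2^2=\frac{x^\top S_0^{-2}x\,(x^\top\Delta_k)^2}{(1+x^\top S_0^{-1}x)^2}.
\]
Here $x^\top S_0^{-1}x\approx 1$ and $x^\top S_0^{-2}x\approx 2/d$. So its expectation, also on $\mathcal E_{\mathrm{good}}$, is about $\|\Delta_k\|^2/(2d)$. The claimed bias term is only $C\|\Delta_k\|^2/d^2$. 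With $\sigma^2\ll\|\Delta_k\|^2/d$ and $d$ large, the upper bound fails for any fixed $C$.

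\textbf{The paper's proof has the same hole.} It bounds the retrieved rows' shift $\tfrac{m}{m+n}\|\Delta_k\|$ by $\Delta_{\max}$, which discards the factor $n/(m+n)$ that it then asserts in the bound. Its stated scale $O(\Delta_{\max}^2mn/(m+n)^3)$ for $\mathbb{E}\|\Delta(X)\|^2$ is also missing a factor of order $d$.

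\textbf{How to repair it.} Either of these works:
\begin{itemize}
\item Add the hypothesis $n\ge c_0 d$; under it your argument is complete.
\item Replace $\|A\|_{\mathrm{op}}^2\Delta_{\max}^2$ by $\mathbb{E}\|A\Delta_k\|_2^2=n(n+d+1)\|\Delta_k\|^2$. On $\mathcal E_{\mathrm{good}}$ this gives the valid term $C\,n(n+d)(m+n)^{-2}\Delta_{\max}^2$, which reduces to the stated form when $n\ge c_0 d$.
\end{itemize}

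\textbf{A smaller slip.} Write $r:=d/(m+n)$. Squaring $\|S^{-1}\|_{\mathrm{op}}\le[(m+n)(1-\sqrt r)^2]^{-1}$ gives $(1-\sqrt r)^{-4}\le 16(1-r)^{-4}$, not $(1-r)^{-2}$. This is harmless only because $1-r\ge\eta_0/(1+\eta_0)$, so your $C$ depends on $\eta_0$ rather than being universal.
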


Both propositions isolate the same two statistical forces. Successful retrieval pools $m+n$ homogeneous samples, achieving the exact inverse-Wishart variance floor $d/(m+n-d-1)$. Retrieval failure pools a heterogeneous dataset, inducing a geometric bias scaled by the mixture proportion $n/(m+n)$ and amplified by high-dimensional spectral inflation $(1-d/(m+n))^{-2}$.

Combining these exact conditional bounds with the unconditional failure probability $\delta$ yields the core statistical guarantee of RAG-FT under uniform noise.

\begin{restatable}[\textbf{Total Expected Risk Bound for RAG-FT}]
{theorem}{boundsforRAGFT}
\label{thm:ev_for_ragft}
Under the uniform homoscedastic noise assumption, let the ambient dimension scale proportionally as $d \propto (m + n)$ with $m + n > d$. The expected prediction risk of the retrieval-guided fine-tuning estimator $\hat{\theta}_{\mathrm{FT}}$, evaluated on a fresh sample from the target task, satisfies
\begin{align*}
\mathbb{E}\big[\mathcal{R}(\hat{\theta}_{\mathrm{FT}})\big]
&\ge
\sigma^2
+\sigma^2\frac{d}{m+n-d-1}
+\,\delta\left(\frac{n}{m+n}\right)^2\Delta_{\min}^2,
\\[4pt]
\mathbb{E}\big[\mathcal{R}(\hat{\theta}_{\mathrm{FT}})\big]
&\le
\sigma^2
+C_1\sigma^2\frac{d}{m+n}
+C_2\,\delta\left(\frac{n}{m+n}\right)^2
\left(1-\frac{d}{m+n}\right)^{-2}
\Delta_{\max}^2,
\end{align*}

where $C_1, C_2 > 0$ are universal constants, $\delta$ is the retrieval failure probability bounded in Theorem~\ref{thm:delta_ineq}, and the lower bound is unconditional, with the exact variance floor following from the inverse-Wishart identities underlying Propositions~\ref{prop:risk_under_successful_retrieval} and~\ref{prop:risk_under_failed_retrieval} (valid for m+n>d+1).
\end{restatable}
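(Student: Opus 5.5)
The proof combines the two conditional bounds through the law of total expectation from Section~\ref{sec:problem_setup}:
\[
\mathbb{E}[\mathcal{R}(\hat\theta_{\mathrm{FT}})] = (1-\delta)\,\mathbb{E}[\mathcal{R}\mid R=1] + \delta\,\mathbb{E}[\mathcal{R}\mid R=0].
\]
I then substitute Proposition~\ref{prop:risk_under_successful_retrieval} for the first conditional term and Proposition~\ref{prop:risk_under_failed_retrieval} for the second. No new probabilistic estimate is needed; the work is in confirming that the conditional bounds apply on each branch and tracking constants.

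\textbf{Lower bound.} Both propositions share the same variance floor $V:=\sigma^2+\sigma^2 d/(m+n-d-1)$. The failure branch adds the bias term $(n/(m+n))^2\Delta_{\min}^2$. Writing $(1-\delta)V+\delta\bigl(V+(n/(m+n))^2\Delta_{\min}^2\bigr)$, the convex combination of the two floors collapses to $V$, and what remains is $\delta(n/(m+n))^2\Delta_{\min}^2$. This holds unconditionally for $m+n>d+1$, because both conditional lower bounds rest on the exact inverse-Wishart trace identity.

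One point needs justification. The conditional identity $\mathbb{E}[\mathrm{Tr}((X^\top X)^{-1})\mid R]=d/(m+n-d-1)$ is only clean if conditioning on $R$ does not distort the law of the pooled design. I will argue this first. Conditional on the retrieval outcome and on the chosen index, the pooled design consists of $m+n$ Gaussian rows. The retriever's decision depends on the designs only through the per-task OLS estimates, whose Gaussian noise is independent of the design once we condition (Lemma~\ref{lemma:finite_sample_dist_of_ols_estimation_noise}). If full independence fails, I will take the propositions at face value, since they are stated conditionally on $R$, and cite them as given.

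\textbf{Upper bound.} I bound $(1-\delta)\mathbb{E}[\mathcal{R}\mid R=1]\le \sigma^2+C\sigma^2 d/(m+n)$ using $1-\delta\le1$. For the failure branch, I use $\delta\,\mathbb{E}[\mathcal{R}\mid R=0]\le \delta\sigma^2+\delta C\sigma^2 d/(m+n)+\delta C(n/(m+n))^2(1-d/(m+n))^{-2}\Delta_{\max}^2$. Collecting terms, the $\sigma^2$ contributions combine to exactly $\sigma^2$ through the weights $(1-\delta)+\delta=1$, so I keep those weights rather than bounding each by $1$. The variance terms combine into $C_1\sigma^2 d/(m+n)$ with $C_1=C$. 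The bias term gives $C_2=C$. Both upper bounds are valid on $\mathcal{E}_{\mathrm{good}}$ under $m+n\ge(1+\eta_0)d$, which is implied by the proportional-regime hypothesis.

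\textbf{Main obstacle.} The delicate step is the upper bound's dependence on $\mathcal{E}_{\mathrm{good}}$. The statement reads as an unconditional expectation, but the propositions give the upper bounds only on the good-design event. My approach is to interpret the theorem, like the propositions, as holding on $\mathcal{E}_{\mathrm{good}}$. Alternatively, I can add the complement contribution: by Theorem~\ref{thm:delta_ineq} its probability is $O(K e^{-c'd})$, and inverse-Wishart moments remain finite for $m+n>d+3$. Cauchy--Schwarz then bounds this contribution by $O(\sqrt{K}e^{-c'd/2})\cdot\mathrm{poly}$, which is dominated by $\sigma^2 d/(m+n)$ in the proportional regime and can be absorbed into $C_1$. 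I would present the first reading and note the second as the route to a truly unconditional statement.
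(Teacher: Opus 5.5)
Your argument is the paper's: condition on $R$, substitute Propositions~\ref{prop:risk_under_successful_retrieval} and~\ref{prop:risk_under_failed_retrieval}, and collapse the shared $\sigma^2$ and variance terms through $(1-\delta)+\delta=1$. The paper also mentions $1-\delta\le 1$ but, like you, actually keeps the weights, and your explicit $\mathcal{E}_{\mathrm{good}}$ caveat is more careful than the paper, which substitutes the good-event upper bounds without comment. Drop the proposed independence justification, however: given $X_k$ the estimate $\hat\theta_k$ has covariance $\sigma^2(X_k^\top X_k)^{-1}$, and $\hat k$ depends on the very label noise that gets pooled, so $R$ is independent of neither the pooled design nor the pooled noise; the only sound route is yours and the paper's fallback, namely invoking the propositions as stated conditionally on $R$.
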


This bound illustrates how RAG-FT interpolates between target-only fine-tuning and full-corpus aggregation. The bias penalty enters only upon retrieval failure, strictly gated by $\delta$. Because $\delta$ decays exponentially with sufficient task separation, RAG-FT achieves a favorable bias-variance trade-off precisely in this well-separated regime.

\subsection{Strict dominance over training baselines}
\label{subsec:comp_with_baselines}

To contextualize the RAG-FT guarantees, we compare against two standard paradigms: fine-tuning on the target dataset alone, and fine-tuning on the full heterogeneous corpus. These represent the extremes of the bias-variance tradeoff. Target-only training provides an unbiased estimator but suffers high variance $\mathcal{O}(d/m)$. Conversely, full-corpus aggregation minimizes variance to $\mathcal{O}(d/N_{\mathrm{full}})$ but introduces an irreducible bias. Specifically, unless tasks are adversarially anti-correlated, pooling $K$ tasks forces a population bias proportional to the minimum task separation $\Delta_{\min}^2$. The formal risk bounds for these baselines (Propositions~\ref{prop:target-only} and \ref{prop:full-corpus}, alongside the necessary bounded anti-correlation assumption) are deferred to Appendix~\ref{sec:appendix_baselines}.

We establish the exact conditions where RAG-FT strictly dominates both baselines simultaneously by comparing the upper bound of Theorem~\ref{thm:ev_for_ragft} against the lower bounds of the standard estimators.

\noindent
Writing $N_{\mathrm{full}} := Kn+m$, $\Lambda := \dfrac{(m+n-d)^2}{C_2\,
n^2\Delta_{\max}^2}$, and $g(N) := \dfrac{1}{N-d-1} - \dfrac{C_1}{m+n}$ for
$N>d+1$, define the critical thresholds
\[
\delta_{\mathrm{target}} := \Lambda\, \sigma^2 d\, g(m),
\qquad
\delta_{\mathrm{full}} := \Lambda \left( \sigma^2 d\, g(N_{\mathrm{full}})
+ \left(\frac{n}{N_{\mathrm{full}}}\right)^2 (K-1)\big[1-(K-2)\rho\big]
\Delta_{\min}^2 \right).
\]

\begin{restatable}[\textbf{Strict Dominance of RAG-FT}]{theorem}{StrictDominance}
\label{thm:dominance}
Let the expected risks of the standard baselines be bounded as in
Appendix~\ref{sec:appendix_baselines}, the full-corpus bound under Assumption~\ref{assumption:bounded-anticorrelation}. In the
high-dimensional proportional regime $d \propto (m+n)$ with $m+n>d$,
retrieval-guided fine-tuning strictly outperforms both target-only and
full-corpus fine-tuning if
\[
\delta < \min\{\delta_{\mathrm{target}}, \delta_{\mathrm{full}}\}.
\]
Both thresholds must be positive for the claim to be non-vacuous; Remark~\ref{rmk:pos}
gives the precise conditions.
\end{restatable}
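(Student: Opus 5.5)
The argument is a direct comparison of bounds: we place the upper bound of Theorem~\ref{thm:ev_for_ragft} below the lower bounds on the two baselines from Appendix~\ref{sec:appendix_baselines}, and solve each inequality for $\delta$. We take the target-only lower bound (Proposition~\ref{prop:target-only}) to be the inverse-Wishart floor $\sigma^2 + \sigma^2 d/(m-d-1)$. We take the full-corpus lower bound (Proposition~\ref{prop:full-corpus}, under Assumption~\ref{assumption:bounded-anticorrelation}) to be $\sigma^2 + \sigma^2 d/(N_{\mathrm{full}}-d-1) + (n/N_{\mathrm{full}})^2 (K-1)[1-(K-2)\rho]\Delta_{\min}^2$. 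The only structural fact needed is that the RAG-FT upper bound is affine in $\delta$ with positive slope, so each comparison yields a threshold on $\delta$.

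First, simplify the bias coefficient. Since $(n/(m+n))^2(1-d/(m+n))^{-2} = n^2/(m+n-d)^2$, the RAG-FT upper bound reads
\[
\sigma^2 + C_1\sigma^2\frac{d}{m+n} + \frac{\delta}{\Lambda},
\qquad \text{since } \frac{1}{\Lambda} = \frac{C_2 n^2\Delta_{\max}^2}{(m+n-d)^2}.
\]

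Against target-only, strict dominance holds if this upper bound is strictly below $\sigma^2 + \sigma^2 d/(m-d-1)$. Cancelling $\sigma^2$, this is equivalent to $\delta/\Lambda < \sigma^2 d\,[1/(m-d-1) - C_1/(m+n)] = \sigma^2 d\, g(m)$, that is, $\delta < \delta_{\mathrm{target}}$.

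Against full-corpus, the same cancellation gives $\delta/\Lambda < \sigma^2 d\, g(N_{\mathrm{full}}) + (n/N_{\mathrm{full}})^2(K-1)[1-(K-2)\rho]\Delta_{\min}^2$, that is, $\delta<\delta_{\mathrm{full}}$.

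Both comparisons hold simultaneously exactly when $\delta<\min\{\delta_{\mathrm{target}},\delta_{\mathrm{full}}\}$. Chaining $\mathbb{E}[\mathcal{R}(\hat\theta_{\mathrm{FT}})] \le \text{upper bound} < \text{baseline lower bound} \le \text{baseline risk}$ then gives strict dominance against each baseline's true risk, not just its bound.

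The main obstacle is bookkeeping about validity regimes rather than the algebra.

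\textbf{Matching regimes.} The RAG-FT upper bound holds under $\mathcal{E}_{\mathrm{good}}$ and $m+n\ge(1+\eta_0)d$, while the baseline lower bounds are unconditional. I would state the comparison for the expected risk as given in Theorem~\ref{thm:ev_for_ragft}, where the bad-design contribution is absorbed into the constants $C_1,C_2$. The residual $3K e^{-c'd}$ failure is already included in $\delta$ via Theorem~\ref{thm:delta_ineq}.

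\textbf{Well-definedness.} We need $m>d+1$ so that $g(m)$ is defined and the target-only floor is finite. If $m\le d+1$, the target-only risk is infinite or undefined and dominance is trivial. We also need $N_{\mathrm{full}}>d+1$, which is automatic here.

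\textbf{Non-vacuity.} The thresholds are only meaningful when positive. This requires $g(m)>0$, i.e.\ $(m+n)>C_1(m-d-1)$, which holds when $n$ is large relative to $m$. For $\delta_{\mathrm{full}}>0$, the bias term must dominate the possibly negative $g(N_{\mathrm{full}})$, and $1-(K-2)\rho>0$ must hold. I would defer these conditions to Remark~\ref{rmk:pos}.
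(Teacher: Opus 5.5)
Your proposal is correct and follows the paper's proof essentially step for step: simplify the spike factor to $n^2/(m+n-d)^2$, place the Theorem~\ref{thm:ev_for_ragft} upper bound strictly below each baseline's lower bound, cancel $\sigma^2$, divide by the positive coefficient of $\delta$ to recover $\delta_{\mathrm{target}}$ and $\delta_{\mathrm{full}}$, and intersect the two conditions. Your explicit chain $\mathbb{E}[\mathcal{R}(\hat\theta_{\mathrm{FT}})] \le \text{(upper bound)} < \text{(baseline lower bound)} \le \text{(baseline risk)}$ and your validity-regime caveats usefully make explicit what the paper leaves implicit; in particular, the paper's ``if and only if'' in its intersection step holds for the comparison of bounds, not for the true risks, so your one-directional ``if'' is the right claim.
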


Theorem~\ref{thm:dominance} demonstrates that RAG-FT achieves a favorable bias-variance trade-off whenever task separation and sample size are jointly sufficient to drive the retrieval failure probability below these critical thresholds (which is guaranteed by Theorem~\ref{thm:delta_ineq}).

The homoscedastic model is, however, a structural idealization. In practice, the noise variance scales with retrieval rank, and the uniform-weight constraint of linear self-attention cannot accommodate this. Section~\ref{sec:non_uniform_noise} examines the consequences.
\section{Theoretical analysis: under heteroscedastic retrieval noise}
\label{sec:non_uniform_noise}

Section~\ref{sec:uniform_noise} shows that RAG-FT navigates the bias-variance tradeoff well under
homoscedastic noise: the failure probability $\delta$ decays exponentially
in $\Delta_{\min}^2$, and the induced bias is attenuated accordingly. That
result rests on Assumption~\ref{assumption:uniform_noise}, which assigns every candidate dataset the same
noise variance $\sigma^2$ regardless of its distance from the query,
an idealization real corpora violate, since retrieval quality varies
systematically with rank.
This section replaces that idealization with a distance-proportional
noise model and compares how two estimators of the same retrieval process
respond to it.

\subsection{The Distance-Proportional Noise (DPN) model}
\label{subsec:dpn-model}
Let $x_q \in \mathbb{R}^d$ denote the query covariate, $x_q \sim
\mathcal{N}(0,I_d)$. Retrieved samples are indexed by rank $i \in
\{1,\dots,n\}$, with $i=1$ the nearest neighbor under the retrieval metric.
Each retrieved covariate is modeled as
\[
x_{\mathrm{rag}}^{(i)} = x_q + r_i, \qquad r_i \sim \mathcal{N}(0,\delta_i^2I_d),
\]
where $r_i$ is an independent Gaussian offset encoding the geometric
displacement from the query at rank $i$, mutually independent across $i$ and
independent of both $x_q$ and the label noise. Conditional on $x_q$,
$x_{\mathrm{rag}}^{(i)} \sim \mathcal{N}(x_q,\delta_i^2I_d)$; since
$\delta_i^2$ varies with rank, the retrieved covariates are neither i.i.d.
across $i$ nor marginally standard Gaussian.

The label model for a retrieved sample pooled from task $k$ is
\[
y_i = (x_{\mathrm{rag}}^{(i)})^\top \theta_k + \epsilon_i, \qquad
\mathbb{E}[\epsilon_i \mid x_{\mathrm{rag}}^{(i)}] = 0, \qquad
\mathrm{Var}(\epsilon_i \mid x_{\mathrm{rag}}^{(i)}) = \sigma_i^2,
\]
where $\theta_k$ is the parameter of whichever task was retrieved,
coinciding with $\theta_{k^\star}$ under successful retrieval and differing
from it otherwise, as in Section~\ref{sec:uniform_noise}.

\paragraph{Power-law retrieval geometry.} Semantic relevance degrades with
retrieval rank. We model this with a shared power-law scaling of the
covariate offset and the label noise:
\[
\delta_i^2 = \gamma i^q,
\qquad
\sigma_i^2 = \gamma \sigma^2 i^q,
\qquad q>0,
\]
where $\gamma>0$ sets the scale and $q$ governs the rate of noise growth;
coupling the two exponents ensures noisier labels are also,
geometrically farther from the query. This yields independent noise
variables $\varepsilon_i \sim \mathcal{N}(0,\gamma\sigma^2 i^q)$, with
covariance matrix
\[
\Omega := \mathrm{Var}(\varepsilon \mid X) =
\mathrm{diag}\!\left(\gamma\sigma^2 \cdot 1^q,\; \gamma\sigma^2 \cdot 2^q,\;
\ldots,\; \gamma\sigma^2 \cdot n^q\right).
\]
$\Omega$ is diagonal but non-uniform, violating the homoscedasticity
required for OLS to be the optimal linear estimator.

\subsection{OLS risk under Distance-Proportional Noise}

We now turn to the OLS proxy under the DPN model of Section~\ref{subsec:dpn-model}. To
isolate the effect of retrieval volume from the proportional dimensional
scaling used for the OLS baselines of Section~\ref{sec:uniform_noise}, we hold the ambient
dimension $d$, the target sample size $m$, and the noise exponent $q>0$
fixed, and let only the retrieval volume $n$ grow.

\begin{restatable}[\textbf{OLS Risk under DPN}]
{theorem}{OLSriskunderDPN}
\label{thm:ols-dpn}
Under the DPN model, with $d$, $m$, and $q>0$ fixed and $n\to\infty$, the
prediction-risk variance of the OLS proxy,
\[
V_{\mathrm{OLS}}(n) := \mathrm{Tr}\!\left[(X^\top X)^{-1}(X^\top\Omega X)(X^\top X)^{-1}\right],
\]
the Huber--White sandwich covariance trace of Theorem~\ref{thm:white},
where $X \in \mathbb{R}^{(m+n)\times d}$ is the pooled design and
$\Omega \in \mathbb{R}^{(m+n)\times(m+n)}$ is the pooled noise covariance
(diagonal, with target entries $\sigma^2$ and retrieved entries
$\sigma_{\mathrm{rag},i}^2 = \gamma\sigma^2 i^q$), satisfies
\[
\mathbb{E}[V_{\mathrm{OLS}}(n)]
=
\frac{d\,\sigma^2(q+1)^2}{2q+1}
\cdot
\frac1n
\,(1+o(1))
=
\Theta\!\left(\frac dn\right).
\]
for every fixed noise exponent $q>0$, and no divergence occurs at any $q$.
\end{restatable}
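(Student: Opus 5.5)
The plan is to compute the sandwich trace to leading order. I would show that both $X^\top X$ and $X^\top\Omega X$ are, after suitable normalization, asymptotically deterministic multiples of $I_d$, with the retrieved rows dominating. Then I would pass from convergence in probability to convergence of the expectation.

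\textbf{Step 1 (Gram matrix).} Split the pooled design into the $m$ target rows $X_{\mathrm{tgt}}$ and the $n$ retrieved rows $x_q + r_i$. Then
\[
X^\top X = X_{\mathrm{tgt}}^\top X_{\mathrm{tgt}} + n\,x_qx_q^\top + x_q\Big(\sum_i r_i\Big)^{\!\top} + \Big(\sum_i r_i\Big)x_q^\top + \sum_{i=1}^n r_ir_i^\top .
\]
\begin{itemize}
\item The last term has mean $\gamma S_q I_d$ with $S_q := \sum_{i\le n} i^q = \frac{n^{q+1}}{q+1}(1+o(1))$.
\item Its fluctuation has entrywise variance of order $\sum_i i^{2q}\asymp n^{2q+1}$, that is, standard deviation $n^{q+1/2} = o(n^{q+1})$.
\item The rank-one term is $O_P(n)$ and the cross terms are $O_P(n^{(q+1)/2})$; with $d,m$ fixed, both are $o(n^{q+1})$ since $q>0$.
\end{itemize}
Hence $\frac{q+1}{\gamma n^{q+1}}X^\top X \to I_d$ in probability, and in fact in $L^p$ via a weighted Chebyshev or Bernstein bound.

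\textbf{Step 2 (meat matrix).} In the same way,
\[
X^\top\Omega X = \sigma^2X_{\mathrm{tgt}}^\top X_{\mathrm{tgt}} + \gamma\sigma^2\sum_i i^q (x_q+r_i)(x_q+r_i)^\top .
\]
The dominant piece is $\gamma\sigma^2\sum_i i^q r_ir_i^\top$, with mean $\gamma^2\sigma^2\sum_i i^{2q}\, I_d = \frac{\gamma^2\sigma^2 n^{2q+1}}{2q+1}(1+o(1))\,I_d$ and fluctuations of order $n^{2q+1/2}$.
\begin{itemize}
\item The $x_qx_q^\top$ piece is $O_P(n^{q+1})$.
\item The cross piece is $O_P(n^{(3q+1)/2})$.
\item The target piece is $O_P(1)$.
\end{itemize}
All of these are $o(n^{2q+1})$ for $q>0$. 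Plugging Steps 1 and 2 into the trace gives
\[
n\,V_{\mathrm{OLS}}(n) \to d\sigma^2\frac{(q+1)^2}{2q+1} \quad\text{in probability},
\]
by the continuous mapping theorem applied to $(A,B)\mapsto \mathrm{Tr}(A^{-1}BA^{-1})$ at $(I_d, I_d)$.

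\textbf{Step 3 (upgrading to expectations).} This is the main obstacle, because $(X^\top X)^{-1}$ can be large on rare events. I would control $\mathbb{E}[(nV_{\mathrm{OLS}})^{1+\epsilon}]$ to get uniform integrability.

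First, use $\Omega \preceq \gamma\sigma^2 \max(1,n^q)\, I$ (taking $\gamma\ge 1$ w.l.o.g., or adjusting constants), which gives
\[
V_{\mathrm{OLS}} \le \gamma\sigma^2 n^q\,\mathrm{Tr}\big((X^\top X)^{-1}\big).
\]
This reduces the task to bounding moments of $n^{q+1}\lambda_{\min}(X^\top X)^{-1}$.

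Second, on the high-probability event $\lambda_{\min}\big(\sum_i r_ir_i^\top\big)\ge c\gamma n^{q+1}$, this quantity is $O(1)$. By a matrix Bernstein bound with weights $i^q$, the complement has probability decaying faster than any polynomial in $n$.

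Third, on the complement I would fall back on $X^\top X \succeq \sum_{i>n/2} r_ir_i^\top$. Conditionally this is a sum of at least $n/2 \gg d$ Wishart-type terms with scales $\asymp n^q$. The inverse-Wishart moment identity (as used in Proposition~\ref{prop:risk_under_successful_retrieval}) then gives $\mathbb{E}[\lambda_{\min}^{-p}] = O(n^{-p(q+1)})$ for fixed $p$ once $n$ is large.

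Hölder's inequality then yields uniform integrability, so $\mathbb{E}[V_{\mathrm{OLS}}(n)] = \frac{d\sigma^2(q+1)^2}{2q+1}\frac1n(1+o(1))$. The constant $\frac{(q+1)^2}{2q+1}\in(1,\infty)$ is finite for every $q>0$, so the rate is $\Theta(d/n)$ with no divergence.
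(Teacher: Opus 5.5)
Your proposal is correct in outline, but it reaches the limit by a different route from the paper's. The paper keeps the rank-one piece inside the conditional means, $\bar\Sigma=\mu I_d+n\,x_qx_q^\top$ and $\bar\Sigma_\Omega=\mu_\Omega I_d+s_\Omega\,x_qx_q^\top$. It inverts them exactly by Sherman--Morrison and reads off the closed form $\bar V(x_q)=(d-1)\mu_\Omega/\mu^2+(\mu_\Omega+s_\Omega\rho)/D^2$ in the shared eigenbasis. It then treats the fluctuations $F,G$ by a second-order perturbative expansion with matrix-Bernstein operator-norm bounds, claiming an $O(d^2n^{-2})$ remainder. You instead drop $n\,x_qx_q^\top$ as $O_P(n)=o(n^{q+1})$, normalize both Gram matrices to $I_d$, and apply continuous mapping.

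Your route gives no explicit finite-$n$ expression (the paper's $\bar V(x_q)$ is what its theory curves plot) and no rate inside the $o(1)$. It is shorter, however, and your Step~3 supplies something the paper's proof leaves implicit. The paper takes expectations of expansions that are valid only with high probability, and never controls the event where $X^\top X$ is nearly singular. Your uniform-integrability argument is what actually justifies passing from $nV_{\mathrm{OLS}}(n)\to d\sigma^2(q+1)^2/(2q+1)$ in probability to the claim about $\mathbb{E}[V_{\mathrm{OLS}}(n)]$.

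One inequality in Step~3 is false as written and needs repair. The bound $X^\top X\succeq\sum_{i>n/2}r_ir_i^\top$ fails because the retrieved rows are $x_q+r_i$, not $r_i$, and the cross terms $x_qr_i^\top+r_ix_q^\top$ are indefinite. For $d=1$ and a single row, $x_q=-r_i$ makes the left side zero. The same conflation appears in your event $\lambda_{\min}(\sum_i r_ir_i^\top)\ge c\gamma n^{q+1}$, which by itself does not bound $\lambda_{\min}(X^\top X)$.

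A clean fix is to pair disjoint indices $i,j>n/2$ and use $aa^\top+bb^\top\succeq\tfrac12(a-b)(a-b)^\top$, which eliminates the common shift $x_q$. This gives $X^\top X\succeq\tfrac12\sum_{\mathrm{pairs}}(r_i-r_j)(r_i-r_j)^\top\succeq\gamma(n/2)^q\,W$ with $W\sim\mathcal{W}_d(\lfloor n/4\rfloor,I_d)$. Standard negative moments of $\lambda_{\min}(W)$ then give $\mathbb{E}[(n^{q+1}/\lambda_{\min}(X^\top X))^{p}]=O(1)$ unconditionally. These moments go beyond the first-moment inverse-Wishart trace identity you cite, but they are finite for $p=1+\epsilon$ once $\lfloor n/4\rfloor-d$ is large.

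Combined with your bound $nV_{\mathrm{OLS}}\le d\sigma^2\max(1,\gamma n^q)\,n/\lambda_{\min}(X^\top X)$, this yields uniform integrability directly. The split into a high-probability event and its complement, and the H\"older step, then become unnecessary.
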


\noindent
\textbf{Proof sketch.} The pooled design matrix decomposes as a mean term
$\mu I_d + n\,x_qx_q^\top$ ($\mu = m+s_n = \Theta(n^{q+1})$) plus a
fluctuation of operator norm $O(\sqrt{d}\,n^{q+1/2})$, negligible relative
to the mean for every $q>0$. The mean term is scalar-plus-rank-one and
inverts in closed form via Sherman--Morrison; the same decomposition applied
to the noise-weighted design $X^\top\Omega X$ yields a matching closed form.
Their product has a diagonal trace in the shared eigenbasis that simplifies
to $d\sigma^2(q+1)^2/(2q+1)\cdot n^{-1}$ at leading order, with remaining
approximation error $o(1/n)$ uniformly in $q$.

This absence of divergence mirrors the homoscedastic case of Section~\ref{sec:uniform_noise}:
OLS's inverse-covariance correction adapts to whatever noise structure is
present, so increasing heteroscedasticity changes the constant in $d/n$ but
not the rate. This is the baseline against which
Section~\ref{sec:lsa-dpn} evaluates the literal LSA forward pass.

\subsection{LSA risk under Distance-Proportional Noise}
\label{sec:lsa-dpn}

Assumption~\ref{assumption:oracle_ols_proxy} is not invoked in this section:
retrieval rank in the DPN model is a fixed index, with covariates at each
rank drawn independently of any data-dependent statistic, so no
adaptive-selection step stands between the estimator and the data. This
lets us study the untrained LSA forward pass directly, alongside the
OLS proxy, as two separate, explicitly named estimators of the same DPN
process. The following theorem shows the literal estimator behaves very
differently from the OLS proxy of Theorem~\ref{thm:ols-dpn}.

The literal LSA forward pass produces the prediction $\hat y_q = \frac1n
\sum_{i=1}^n (x_q^\top x_{\mathrm{rag}}^{(i)})\,y_i$, a uniform average over
the retrieved context with no mechanism to weight ranks by their
reliability. Unlike the OLS proxy, this estimator has no matrix inversion
step to correct for the noise structure of the pooled data.
\medskip

\begin{restatable}[\textbf{LSA Risk under DPN}]
{theorem}{LSAriskunderDPN}
\label{thm:lsa-dpn}
Under the DPN model, with $d, m$, and $q > 0$ fixed, $n \to \infty$, and the
retrieved task signal bounded away from zero ($\|\theta_k\| \geq b_0$ for
some fixed constant $b_0 > 0$), the literal LSA forward-pass estimator has
prediction risk
\[
\mathcal{R}_{\mathrm{LSA}}(n) = \sigma^2 + \mathbb{E}[\mathrm{Bias}(x_q)^2] +
\mathbb{E}[\mathrm{Var}(\hat y_q\mid x_q)],
\]
where both terms admit finite-$n$ expressions (Appendix~\ref{subsec:C.2},
exact for the bias, leading-order for the variance) and satisfy, as
$n\to\infty$ for every fixed $q>0$,
\[
\mathbb{E}[\mathrm{Bias}(x_q)^2] = \Theta(n^{2q}),
\qquad
\mathbb{E}[\mathrm{Var}(\hat y_q\mid x_q)] = \Theta(d\cdot n^{2q-1}).
\]
The bias term dominates the variance term for every fixed $q>0$ (their
ratio is $\Theta(n/d)\to\infty$) and diverges regardless of whether the
retrieved task matches the target task, so
\[
\mathcal{R}_{\mathrm{LSA}}(n) = \sigma^2 + \Theta(n^{2q}) + \Theta(d\cdot n^{2q-1})
\to \infty \qquad \text{as } n\to\infty, \text{ for every } q>0.
\]
\end{restatable}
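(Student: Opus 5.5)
The plan is a direct moment computation for the estimator $\hat y_q = \frac1n\sum_{i=1}^n (x_q^\top x_{\mathrm{rag}}^{(i)})\,y_i$, conditioning on $x_q$ throughout. Write $x_{\mathrm{rag}}^{(i)} = x_q + r_i$ and $y_i = (x_q+r_i)^\top\theta_k + \epsilon_i$, with target $x_q^\top\theta_{k^\star}$. Expanding,
\[
\hat y_q = \frac1n\sum_{i=1}^n \bigl(\|x_q\|^2 + x_q^\top r_i\bigr)\bigl(x_q^\top\theta_k + r_i^\top\theta_k + \epsilon_i\bigr).
\]
Taking the conditional expectation given $x_q$ kills every term linear in $r_i$ or $\epsilon_i$. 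The surviving terms are $\|x_q\|^2 x_q^\top\theta_k$ and $\frac1n\sum_i \mathbb{E}[(x_q^\top r_i)(r_i^\top\theta_k)] = \bar\delta^2_n\, x_q^\top\theta_k$, where $\bar\delta^2_n := \frac1n\sum_i \gamma i^q = \frac{\gamma n^q}{q+1}(1+o(1))$. So
\[
\mathrm{Bias}(x_q) = \bigl(\|x_q\|^2 + \bar\delta_n^2\bigr)x_q^\top\theta_k - x_q^\top\theta_{k^\star}.
\]
This expression is exact for every $n$, which supplies the finite-$n$ bias formula.

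Squaring and taking the expectation over $x_q\sim\mathcal N(0,I_d)$ uses only Gaussian moments of degree at most $6$, e.g.\ $\mathbb{E}[\|x_q\|^4 (x_q^\top\theta)^2] = (d+2)(d+4)\|\theta\|^2$. The leading term is $\bar\delta_n^4\,\|\theta_k\|^2 = \Theta(n^{2q})$. The cross terms are $O(n^q)$ and the rest is $O(1)$, with $d$ fixed. The hypothesis $\|\theta_k\|\ge b_0$ gives the matching lower bound. Since the leading coefficient involves only $\theta_k$ and not $\theta_{k^\star}$, the divergence holds whether or not retrieval succeeded.

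For the variance, I condition on $x_q$. The summands are independent across $i$, so $\mathrm{Var}(\hat y_q\mid x_q) = n^{-2}\sum_i \mathrm{Var}(Z_i\mid x_q)$, where $Z_i := (\|x_q\|^2 + x_q^\top r_i)(x_q^\top\theta_k + r_i^\top\theta_k+\epsilon_i)$. The dominant pieces of $\mathrm{Var}(Z_i\mid x_q)$ are:
\begin{itemize}
\item the quadratic-in-$r_i$ term $(x_q^\top r_i)(r_i^\top\theta_k)$, whose variance is $\delta_i^4\bigl(\|x_q\|^2\|\theta_k\|^2 + (x_q^\top\theta_k)^2\bigr)$ by Isserlis' formula;
\item the noise term $(x_q^\top r_i)\epsilon_i$, with variance $\delta_i^2\|x_q\|^2\sigma_i^2 = \gamma^2\sigma^2 i^{2q}\|x_q\|^2$;
\item lower-order pieces of size $\delta_i^2\cdot O(\|x_q\|^6)$ and $\sigma_i^2\|x_q\|^4$, which are $O(i^q)$.
\end{itemize}
Summing gives $n^{-2}\sum_i i^{2q} = \frac{n^{2q-1}}{2q+1}(1+o(1))$. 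Taking the expectation over $x_q$ gives $\mathbb{E}\|x_q\|^2 = d$, so the leading term is $\frac{\gamma^2 d\,(\|\theta_k\|^2+\sigma^2)}{2q+1}\, n^{2q-1}(1+o(1))$, plus an $O(\|\theta_k\|^2)$ correction of the same order. This is $\Theta(d\, n^{2q-1})$, bounded below again using $b_0$.

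Finally, the risk decomposition $\sigma^2 + \mathbb{E}[\mathrm{Bias}^2] + \mathbb{E}[\mathrm{Var}]$ is the standard one. The fresh test noise is independent of the context, and the tower property over $x_q$ turns the conditional MSE into bias squared plus variance. The ratio of the two rates is $n^{2q}/(d n^{2q-1}) = n/d\to\infty$.

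The main obstacle is careful bookkeeping in the variance. I must verify that no cross-covariance between the quadratic-in-$r_i$ and $\epsilon_i$ pieces or the $\|x_q\|^2$ pieces produces a term of order $n^{2q-1}$ with a different $d$-dependence. I also need to check that the $O(i^q)$ terms contribute only $O(n^{q-1})$, which is lower order for every $q>0$. I would organize this by writing $Z_i - \mathbb{E}[Z_i\mid x_q]$ as a sum of orthogonal Wick (Hermite) components in $(r_i,\epsilon_i)$, so that the variance is a sum of squared norms with no cross terms.
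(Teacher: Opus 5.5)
Your proposal is correct and follows the paper's proof essentially step for step. It uses the same exact conditional bias $(\|x_q\|^2 + s_n/n)\,x_q^\top\theta_k - x_q^\top\theta_{k^\star}$, whose Gaussian moments give the leading $(s_n/n)^2\|\theta_k\|^2$ term, and the same Isserlis computation of the $(x_q^\top r_i)(r_i^\top\theta_k)$ and $(x_q^\top r_i)\epsilon_i$ pieces, yielding $\gamma^2[(d+1)\|\theta_k\|^2 + d\sigma^2]\,n^{2q-1}/(2q+1)$. Your orthogonal-component bookkeeping is actually slightly sharper: every cross-covariance vanishes by odd-moment parity, so the linear pieces that the paper simply drops are kept exactly, as an $O(n^{q-1})$ remainder.
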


\noindent
\textbf{Proof sketch.}
Conditioning on $x_q$ and writing $x_q = \sqrt\rho\,\omega$ with $\omega$
uniform on the unit sphere, the bias $\mathbb{E}[\hat y_q\mid x_q] -
x_q^\top\theta_{k^\star}$ reduces to a quadratic form in $\rho$; taking
expectation over $\rho \sim \chi^2(d)$ gives an exact closed-form second
moment whose leading term is governed by $(s_n/n)^2\|\theta_k\|^2$, with
$s_n/n = \Theta(n^q)$ and no dependence on $d$ at leading order. The
variance $\mathrm{Var}(\hat y_q\mid x_q)$ combines two independent sources
computed via the same conditioning argument, fluctuation of the retrieved
offset $r_i$ and label noise $\epsilon_i$, giving $\Theta(d\cdot n^{2q-1})$.
Since their ratio is $\Theta(n/d)\to\infty$, the bias term governs the total
risk for every $q>0$.

\subsection{Architectural separation under DPN}

Theorems~\ref{thm:ols-dpn} and~\ref{thm:lsa-dpn} describe the same
retrieval process through two estimators that respond to it very
differently. The OLS proxy's inverse-covariance correction absorbs
whatever heteroscedasticity DPN introduces, so its risk stays
$\Theta(d/n)$ for every $q>0$: the constant in front of $d/n$ grows with
$q$, but the rate does not. The literal LSA forward pass has no such
correction; its prediction is a uniform, unweighted average over retrieved
context, and under DPN this fixed averaging leaves the bias uncalibrated,
diverging as $\Theta(n^{2q})$ for every $q>0$ regardless of retrieval
outcome. Because this divergence carries no leading-order dependence on
$d$, it admits no dimension-dependent ceiling of the kind that bounds risk
in the frozen-weight setting of~\citep{guo2025retrieval}: increasing $d$
does not restore stability. Within this estimator's fixed parameterization
the gap cannot be closed by choice of attention weights, since uniform
averaging is a property of the forward pass we analyze, not a learned
quantity; the failure therefore lies in the absence of a reweighting
mechanism, not in retrieval or the noise model itself. Figure~\ref{fig:app_no-ceiling-reweight} in the
appendix confirms both properties directly.
\section{Empirical validation}
\label{sec:empirics}

The homoscedastic results of Section~\ref{sec:uniform_noise} rest on exact
finite-sample identities, so we target the asymptotic separation asserted by
Theorems~\ref{thm:ols-dpn} and~\ref{thm:lsa-dpn}. Every Monte Carlo point
simulates the generative model of Section~\ref{subsec:dpn-model} and evaluates
each estimator from its defining formula, never from the closed-form risk
expressions plotted as theory curves, so agreement between markers and curves
tests the derivation rather than restating it.

\begin{figure}[htbp]
    \centering
    \includegraphics[width=0.95\textwidth]{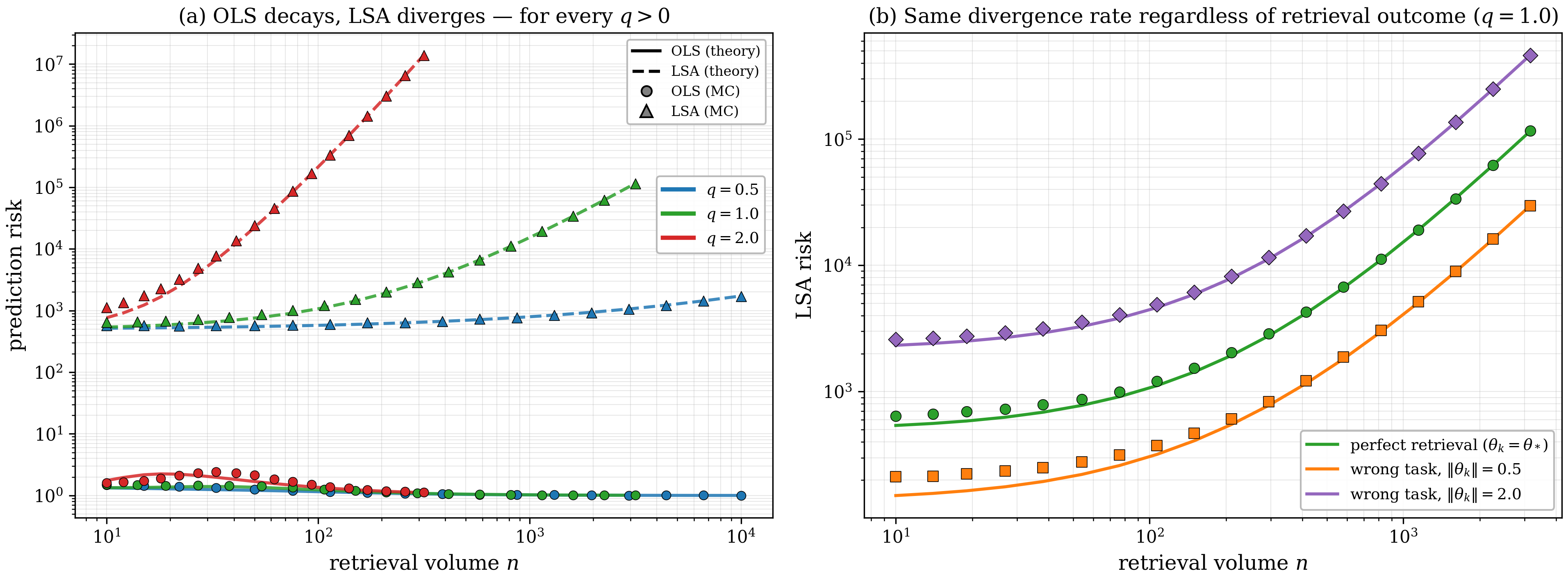}
    \caption{Risk vs.\ retrieval volume $n$ under DPN ($d=20$, $m=50$).
    MC: Monte Carlo; curves: finite-$n$ theory.}
    \label{fig:headline}
\end{figure}

Panel~(a) sweeps retrieval volume $n$ at $d=20$, $m=50$ for
$q\in\{0.5,1,2\}$. OLS follows the predicted $\Theta(d/n)$ decay at every
exponent, while LSA turns upward at $\Theta(n^{2q})$, diverging earlier as $q$
grows. Panel~(b) fixes $q=1$ and varies the retrieved task: exact retrieval
$(\theta_k=\theta_{k^\star})$ diverges at the same rate as two mismatched
tasks, the signature of a bias governed by the aggregation rule rather than by
retrieval error. Appendix~\ref{app:empirical-details} reports two further
predictions of Theorem~\ref{thm:lsa-dpn}: the rate is unchanged across
$d\in\{5,20,50,100\}$, matching a bias term with no leading-order $d$
dependence, and reweighting the same forward pass by reliability restores
stability exactly where uniform averaging fails
(Figure~\ref{fig:app_no-ceiling-reweight}).

\section{Conclusion}

We analyzed Retrieval-Guided Fine-Tuning as a stochastic estimation problem in a
multi-task linear regression framework. Under homoscedastic noise, the retrieval
failure probability decays exponentially in $\Delta_{\min}^2 n/\sigma^2$, and
RAG-FT's risk bound falls strictly below the lower bounds for target-only and
full-corpus training once this failure probability is small enough
(Theorems~\ref{thm:delta_ineq}--\ref{thm:dominance}). Under Distance-Proportional Noise, we separated two estimators
of the same retrieval process: the OLS proxy's risk remains $\Theta(d/n)$
for every noise exponent $q>0$, while the literal LSA forward pass's bias
diverges as $\Theta(n^{2q})$, dominating its variance and growing regardless of
retrieval outcome or ambient dimension (Theorems~\ref{thm:ols-dpn}--\ref{thm:lsa-dpn}). We trace this divergence
to the forward pass's fixed, unweighted aggregation rather than to retrieval
noise itself, and confirm both rate separations by direct simulation
(Section~\ref{sec:empirics}).

\section{Limitations}

Our analysis rests on several simplifying choices. The OLS proxy for the
fine-tuned estimator is an assumption, not something we derive from training
dynamics, so our risk bounds hold only to the extent that this proxy is
accurate. The LSA divergence in Section~\ref{sec:non_uniform_noise} is proven for the fixed, untrained
forward pass; whether a trained attention layer can learn to reweight retrieved
context and close this gap remains an open question. Sections~\ref{sec:uniform_noise} and~\ref{sec:non_uniform_noise} also use
two different asymptotic regimes: one scales dimension with sample size, the
other fixes dimension and grows retrieval volume, and we do not unify them.
More broadly, the model itself is idealized: it assumes Gaussian covariates, a
linear ground-truth relationship, a fixed set of candidate tasks with a single
retrieval step, and a specific noise model for retrieval quality, none of which
we test against real retrieval corpora or trained non-linear models. Finally,
our finite-sample bounds involve unspecified constants and are thus not numerically
tight, and the full-corpus baseline requires a bounded task anti-correlation condition.

\newpage

%

\medskip

{
\small
\bibliographystyle{plainnat}
\bibliography{ref}
}


\newpage
\appendix

\renewcommand{\thetheorem}{\thesection.\arabic{theorem}}
\setcounter{theorem}{0}
\renewcommand{\thelemma}{\thesection.\arabic{lemma}}
\setcounter{lemma}{0}

\section{Technical preliminaries}
\label{sec:A}
\label{app:prelims}

This section collects the standard results from high-dimensional probability and
random matrix theory that are invoked in the proofs of the main theorems.
Statements are given in the forms used in this paper; we refer the reader to the
cited sources for generality and proof.

Several proofs condition on the event that the relevant design matrix is
well-conditioned. For a design matrix $X \in \mathbb{R}^{n\times d}$ with
i.i.d.\ rows drawn from $\mathcal{N}(0,I_d)$, and a fixed constant $C_0>0$,
define the good design event
\[
  \mathcal{E}_{\mathrm{good}} \;:=\; \bigl\{\|(X^\top X)^{-1}\|_{\mathrm{op}} \leq C_0/n\bigr\}.
\]
The design matrix and sample size $n$ instantiating this event vary by context:
in Theorem~\ref{thm:delta_ineq}, $X$ ranges over the per-task design matrices
$X_k$ with $n$ the per-task sample size; in Propositions~\ref{prop:risk_under_successful_retrieval}--\ref{prop:full-corpus},
$X$ is the relevant pooled design matrix, with $n$ replaced by $m+n$, $m$, or
$N$ as appropriate.

\begin{theorem}[Gaussian Tail Bound]
\label{thm:gaussian-tail}
Let $Z \sim \mathcal{N}(0, \sigma^2)$.
For any $t > 0$,
\[
  \mathbb{P}(Z \leq -t) \;\leq\; \exp\!\Bigl(-\frac{t^2}{2\sigma^2}\Bigr).
\]
\end{theorem}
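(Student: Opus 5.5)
We prove the lower-tail bound $\mathbb{P}(Z\le -t)\le \exp(-t^2/(2\sigma^2))$ by the Chernoff method, using the Gaussian moment generating function.

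\emph{Reduction.} By symmetry of $\mathcal{N}(0,\sigma^2)$, $-Z$ has the same law as $Z$. Hence $\mathbb{P}(Z\le -t)=\mathbb{P}(Z\ge t)$, and it suffices to bound the upper tail.

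\emph{The MGF.} Completing the square in the Gaussian integral gives
\[
\mathbb{E}[e^{\lambda Z}] = e^{\lambda^2\sigma^2/2}\quad\text{for every } \lambda\in\mathbb{R}.
\]

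\emph{Markov step.} Fix $\lambda>0$. The map $z\mapsto e^{\lambda z}$ is increasing, so applying Markov's inequality to the nonnegative variable $e^{\lambda Z}$ yields
\[
\mathbb{P}(Z\ge t)\le e^{-\lambda t}\,\mathbb{E}[e^{\lambda Z}] = \exp\!\bigl(\lambda^2\sigma^2/2-\lambda t\bigr).
\]

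\emph{Optimization.} The exponent is a convex quadratic in $\lambda$, minimized at $\lambda^\ast=t/\sigma^2$, which is admissible since $t>0$. Substituting gives the exponent $t^2/(2\sigma^2)-t^2/\sigma^2=-t^2/(2\sigma^2)$, so
\[
\mathbb{P}(Z\le -t)=\mathbb{P}(Z\ge t)\le \exp\!\Bigl(-\frac{t^2}{2\sigma^2}\Bigr).
\]

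The only computation is the MGF identity, and it is standard. The same argument extends verbatim to any $\sigma^2$-sub-Gaussian variable. This is how the bound is later used for the linear fluctuation term $L_k$ in Lemma~\ref{lemma:linear_fluc_term}.
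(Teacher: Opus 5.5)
Your Chernoff argument is correct and complete: the symmetry reduction, the Gaussian MGF, the Markov step and the choice $\lambda^\ast=t/\sigma^2$ (with $\sigma>0$ implicit) give exactly the stated bound. The paper does not prove this statement; it cites \citet{Vershynin_2018}, and your argument is the standard one behind that reference. One small note: in the paper's application, $L_k$ is exactly Gaussian conditional on $\mathcal{X}$ (Lemma~\ref{lemma:linear_fluc_term}), so the sub-Gaussian extension you mention is not what is used there.
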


This bound is applied in Step~4 of the proof of Theorem~\ref{thm:delta_ineq}
to control the probability that the linear fluctuation term $L_k$ falls below a
threshold determined by the task separation $\Delta_{\min}$.
See \citet{Vershynin_2018} for a standard reference.

\begin{theorem}[Hanson--Wright Inequality~\citep{rudelson2013hanson}]
\label{thm:hanson-wright}
Let $Z = (Z_1, \ldots, Z_p)^\top \in \mathbb{R}^p$ be a random vector with independent,
mean-zero, sub-Gaussian components satisfying $\|Z_i\|_{\psi_2} \leq K$.
Let $A \in \mathbb{R}^{p \times p}$ be a fixed symmetric matrix.
Then there exists an absolute constant $c > 0$ such that for all $t > 0$,
\[
  \mathbb{P}\bigl(|Z^\top A Z - \mathbb{E}[Z^\top A Z]| \geq t\bigr)
  \;\leq\;
  2\exp\!\left(-c\min\!\left\{\frac{t^2}{K^4 \|A\|_F^2},\,
                               \frac{t}{K^2 \|A\|_{\mathrm{op}}}\right\}\right).
\]
\end{theorem}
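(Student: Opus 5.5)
We would follow the standard route of Rudelson and Vershynin. First normalize $K=1$ by replacing $Z$ with $Z/K$ and $t$ with $t/K^2$. Then write
\[
Z^\top A Z - \mathbb{E}[Z^\top A Z] = \sum_i a_{ii}\bigl(Z_i^2-\mathbb{E}Z_i^2\bigr) + \sum_{i\neq j} a_{ij} Z_i Z_j =: S_{\mathrm{diag}} + S_{\mathrm{off}}.
\]
Using $\mathbb{P}(|S_{\mathrm{diag}}+S_{\mathrm{off}}|\ge t)\le \mathbb{P}(|S_{\mathrm{diag}}|\ge t/2)+\mathbb{P}(|S_{\mathrm{off}}|\ge t/2)$, it suffices to prove the mixed-tail bound for each piece separately.

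\textbf{Diagonal part.} Each $Z_i^2-\mathbb{E}Z_i^2$ is centered and sub-exponential with $\|\cdot\|_{\psi_1}\lesssim \|Z_i\|_{\psi_2}^2\le 1$. Bernstein's inequality for independent sub-exponential variables with weights $a_{ii}$ then gives
\[
\mathbb{P}(|S_{\mathrm{diag}}|\ge t)\le 2\exp\!\Bigl(-c\min\Bigl\{\tfrac{t^2}{\sum_i a_{ii}^2},\ \tfrac{t}{\max_i|a_{ii}|}\Bigr\}\Bigr).
\]
This is already of the required form, since $\sum_i a_{ii}^2\le\|A\|_F^2$ and $\max_i|a_{ii}|\le\|A\|_{\mathrm{op}}$.

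\textbf{Off-diagonal part.} We aim for the moment generating function bound
\[
\mathbb{E}\exp(\lambda S_{\mathrm{off}})\le \exp\bigl(C\lambda^2\|A\|_F^2\bigr)\quad\text{for } |\lambda|\le c/\|A\|_{\mathrm{op}}.
\]
A Chernoff bound with $\lambda$ optimized over this range then yields the two-regime tail. The steps in order are as follows.
\begin{enumerate}
\item Decouple: by the decoupling inequality for convex functions of off-diagonal chaoses, $\mathbb{E}\exp(\lambda S_{\mathrm{off}})\le\mathbb{E}\exp(4\lambda Z^\top A Z')$, where $Z'$ is an independent copy of $Z$.
\item Condition on $Z'$: then $Z^\top AZ'$ is a weighted sum of independent sub-Gaussians, so $\mathbb{E}_Z\exp(4\lambda Z^\top A Z')\le\exp(C\lambda^2\|AZ'\|_2^2)$.
\item Gaussian replacement: with $g,g'$ independent standard Gaussian vectors, $\exp(C\lambda^2\|AZ'\|^2)=\mathbb{E}_g\exp(\sqrt{2C}\lambda\, g^\top A Z')$. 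Swapping the order of expectation and applying the sub-Gaussian MGF bound to $Z'$ once more reduces the problem to bounding $\mathbb{E}\exp(C'\lambda^2\|A^\top g\|_2^2)$, and hence to the MGF of the Gaussian chaos $g^\top A g'$.
\item Diagonalize the Gaussian chaos: using rotation invariance and the SVD $A=\sum_i s_i u_iv_i^\top$, write $g^\top A g'\overset{d}{=}\sum_i s_i g_i g'_i$. Each product $g_i g'_i$ has MGF $(1-\mu^2)^{-1/2}$, which is at most $e^{\mu^2}$ for $|\mu|\le1/2$. This gives $\exp(C\lambda^2\sum_i s_i^2)=\exp(C\lambda^2\|A\|_F^2)$ provided $\lambda\max_i s_i=\lambda\|A\|_{\mathrm{op}}$ is small.
\end{enumerate}

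The main obstacle is step 3: moving from the data-dependent quantity $\|AZ'\|^2$ for a general sub-Gaussian $Z'$ to a Gaussian quantity while keeping the admissible range of $\lambda$ of order $1/\|A\|_{\mathrm{op}}$. We would handle it by applying the Gaussian integration identity twice, first over $Z$ and then over $Z'$, tracking constants so that the final constraint remains $|\lambda|\lesssim 1/\|A\|_{\mathrm{op}}$. Undoing the normalization at the end restores the $K^4$ and $K^2$ factors in the two branches of the bound.
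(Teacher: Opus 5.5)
Your proposal is correct and is essentially the standard Rudelson--Vershynin argument in its textbook form: decoupling against an independent copy, comparison with the Gaussian chaos $g^\top A g'$, and diagonalization via the SVD. The paper gives no proof of its own and simply cites this result, so this is the argument it relies on. Two cosmetic remarks: steps 2--3 hold for every $\lambda\in\mathbb{R}$, because the sub-Gaussian MGF bound has no range restriction, so the constraint $|\lambda|\le c/\|A\|_{\mathrm{op}}$ enters only at the Gaussian-chaos step and step 3 is not really an obstacle; and the union bound over the two pieces and both signs gives a prefactor $4$, which becomes the stated $2$ after halving $c$ (since probabilities are at most $1$).
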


In the proof of Lemma~\ref{lemma:quad_fluc_conc}, the quadratic fluctuation term
$Q_k = Z^\top M Z$ is expressed as a quadratic form in the concatenated OLS error vector
$Z \sim \mathcal{N}(0, \Sigma)$, and Theorem~\ref{thm:hanson-wright} is applied with
$A = \Sigma^{1/2} M \Sigma^{1/2}$ to obtain its sub-exponential tail bound.
The Frobenius and operator norms of this symmetrized matrix contribute the
quadratic-fluctuation term to $\nu^2$ and $\alpha$ in Theorem~\ref{thm:delta_ineq}.

\begin{theorem}[Marchenko--Pastur Law]
\label{thm:mp}
Let $X \in \mathbb{R}^{n \times d}$ have i.i.d.\ rows drawn from $\mathcal{N}(0, I_d)$,
and let $\kappa = d/n \in (0,1)$ be fixed as $n \to \infty$.
The empirical spectral distribution of $\frac{1}{n}X^\top X$ converges weakly to the
Marchenko--Pastur distribution with ratio $\kappa$, supported on
$[(1 - \sqrt{\kappa})^2, (1+\sqrt{\kappa})^2]$.
In the proportional regime $d \propto n$, the following non-asymptotic moment estimate
holds: under the good design event $\mathcal{E}_{\mathrm{good}}$, there exists a
constant $C>0$ such that
\[
  \mathbb{E}\bigl[\mathrm{Tr}((X^\top X)^{-1})\;\big|\;\mathcal{E}_{\mathrm{good}}\bigr]
  \;\leq\;
  \frac{Cd}{n},
\]
and the spectral norm of the inverse satisfies
\[
  \mathbb{E}\bigl[\|(X^\top X)^{-1}\|_{\mathrm{op}}\;\big|\;\mathcal{E}_{\mathrm{good}}\bigr]
  \;\leq\;
  \frac{C}{n}\!\left(1 - \frac{d}{n}\right)^{-2}.
\]
\end{theorem}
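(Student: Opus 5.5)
The weak-convergence statement is the classical Marchenko--Pastur theorem, and I would cite it (e.g.\ Bai--Silverstein) rather than reprove it. The content to verify is the pair of non-asymptotic moment estimates conditioned on $\mathcal{E}_{\mathrm{good}}$. My plan is to get them deterministically on that event, and then check that the event is non-trivial with a constant $C_0$ of the advertised size.

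\emph{Step 1 (bounds on the event).} On $\mathcal{E}_{\mathrm{good}}$ we have $\|(X^\top X)^{-1}\|_{\mathrm{op}}\le C_0/n$ pointwise. Since $X^\top X$ is positive definite there, $\mathrm{Tr}((X^\top X)^{-1})\le d\,\|(X^\top X)^{-1}\|_{\mathrm{op}}\le C_0 d/n$ pointwise. Taking conditional expectations gives the trace bound with $C=C_0$. For the operator norm, the same pointwise bound gives $\mathbb{E}[\|(X^\top X)^{-1}\|_{\mathrm{op}}\mid\mathcal{E}_{\mathrm{good}}]\le C_0/n$. Since $(1-d/n)^{-2}\ge 1$ whenever $d<n$, this is at most $\frac{C_0}{n}(1-d/n)^{-2}$. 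So both displayed inequalities hold with $C=C_0$, and no random-matrix input is needed for them.

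\emph{Step 2 (the event is likely, with the stated $\kappa$-dependence).} To show the statement is not vacuous, I would bound the smallest singular value using the Davidson--Szarek/Gordon inequality: $s_{\min}(X)\ge\sqrt n-\sqrt d-t$ with probability at least $1-e^{-t^2/2}$. Taking $t=\tfrac12(\sqrt n-\sqrt d)$ gives
\[
\lambda_{\min}(X^\top X)\ge \tfrac14\, n\,(1-\sqrt{\kappa})^2
\]
with probability at least $1-\exp(-c\,n(1-\sqrt\kappa)^2)$. I would then use
\[
1-\kappa=(1-\sqrt\kappa)(1+\sqrt\kappa)\le 2(1-\sqrt\kappa),
\]
which gives $(1-\sqrt\kappa)^{-2}\le 4(1-\kappa)^{-2}$. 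Hence $\mathcal{E}_{\mathrm{good}}$ holds with $C_0=16(1-d/n)^{-2}$ with exponentially high probability. This is exactly where the factor $(1-d/n)^{-2}$ comes from, and it is bounded by a constant in the proportional regime $m+n\ge(1+\eta_0)d$ used in Propositions~\ref{prop:risk_under_successful_retrieval}--\ref{prop:risk_under_failed_retrieval}.

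\emph{Step 3 (consistency with the unconditional identity).} As a sanity check, I would compare with the inverse-Wishart identity $\mathbb{E}[\mathrm{Tr}((X^\top X)^{-1})]=d/(n-d-1)$. In the proportional regime this is also $\Theta(d/n)$, so conditioning only changes constants.

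The main obstacle is not technical but one of bookkeeping. One must make sure that $C$ is allowed to depend on $C_0$, equivalently on $\kappa$ through $(1-\kappa)^{-2}$, and is not claimed universal independently of the event's definition. I would state this dependence explicitly, so that downstream uses in Theorem~\ref{thm:ev_for_ragft} remain correct.
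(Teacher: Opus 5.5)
Your proof is correct, but it takes a different route from the paper. The paper gives no argument of its own: it cites Bai--Silverstein and Vershynin and asserts that both bounds ``follow from inverse Wishart moment calculations,'' meaning identities such as $\mathbb{E}[\mathrm{Tr}((X^\top X)^{-1})]=d/(n-d-1)$ and the Marchenko--Pastur lower edge $n(1-\sqrt{\kappa})^2$ for $\lambda_{\min}(X^\top X)$.

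Like the paper, you cite the weak-convergence part. For the moment bounds, you instead observe that $\mathcal{E}_{\mathrm{good}}$ is \emph{defined} by $\|(X^\top X)^{-1}\|_{\mathrm{op}}\le C_0/n$, so both inequalities hold pointwise on the event with $C=C_0$. The random-matrix input (Davidson--Szarek) is then needed only to show the event is likely and to fix $C_0$ of order $(1-\sqrt{\kappa})^{-2}\le 4(1-\kappa)^{-2}$.

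Your route buys transparency in two ways. First, it shows that the $(1-d/n)^{-2}$ factor in the second display is redundant for a fixed $C_0$. Second, if $C_0$ is chosen to make the event high-probability, the trace constant inherits the same $\kappa$-dependence. So the ``universal'' constants in Propositions~\ref{prop:risk_under_successful_retrieval}--\ref{prop:risk_under_failed_retrieval} really depend on $\eta_0$, which is exactly the bookkeeping you flag.

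The moment-based route the paper gestures at would give a sharper, $C_0$-free constant for the trace. Since the trace is nonnegative, $\mathbb{E}[\mathrm{Tr}((X^\top X)^{-1})\mid\mathcal{E}_{\mathrm{good}}]\le d/\bigl((n-d-1)\,\mathbb{P}(\mathcal{E}_{\mathrm{good}})\bigr)$. This scales like $\kappa/(1-\kappa)$, whereas your worst-eigenvalue bound scales like $\kappa/(1-\sqrt{\kappa})^2$. That route still needs your Step~2 to guarantee $\mathbb{P}(\mathcal{E}_{\mathrm{good}})\ge 1/2$.
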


\noindent
The trace bound is used in the proof of Proposition~\ref{prop:risk_under_successful_retrieval} to
bound the effective variance $\mathbb{E}_x[x^\top(X^\top X)^{-1}x]$ under the pooled
design matrix.
The spectral norm bound appears in the proof of Proposition~\ref{prop:risk_under_failed_retrieval}
as the spectral inflation factor $(1 - d/(m+n))^{-2}$ amplifying the geometric bias
under retrieval failure.
Both bounds follow from inverse Wishart moment calculations;
see \citet{bai2009spectral} for the rigorous random matrix treatment and
\citet{Vershynin_2018} for the non-asymptotic formulation.

\begin{theorem}[Huber--White Covariance Estimator~{\citep{RePEc:ecm:emetrp:v:48:y:1980:i:4:p:817-38}}]
\label{thm:white}
Let $\hat{\theta} = (X^\top X)^{-1} X^\top Y$ be the OLS estimator under the
heteroscedastic model $Y = X\theta^* + \varepsilon$ with
$\mathrm{Var}(\varepsilon_i \mid x_i) = \sigma_i^2$.
The conditional covariance of $\hat{\theta}$ given $X$ is
\[
  \mathrm{Var}(\hat{\theta} \mid X)
  \;=\;
  (X^\top X)^{-1}\!\left(\sum_{i=1}^n \sigma_i^2 x_i x_i^\top\right)\!(X^\top X)^{-1}.
\]
Taking expectation over $X$ with $\mathbb{E}[x_ix_i^\top]=I_d$, and whenever the noise
variances are uniformly bounded by $\sigma^2$,
\[
\mathbb{E}[\mathrm{Tr}(\mathrm{Var}(\hat\theta\mid X))] \leq C\sigma^2\frac{d}{n}.
\]
\end{theorem}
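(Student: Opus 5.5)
The plan is to prove the two assertions of Theorem~\ref{thm:white} separately: first an exact conditional covariance identity, then a Loewner-order comparison that reduces the expected trace to a homoscedastic inverse-Wishart quantity. The comparison step is what lets the heteroscedastic weights be bounded by their maximum.

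\textbf{Conditional covariance.} Write $Y = X\theta^* + \varepsilon$, so that
\[
\hat\theta - \theta^* = (X^\top X)^{-1}X^\top \varepsilon .
\]
Since $\mathbb{E}[\varepsilon\mid X]=0$, the estimator is conditionally unbiased. Hence
\[
\mathrm{Var}(\hat\theta\mid X) = (X^\top X)^{-1}X^\top\,\mathrm{Var}(\varepsilon\mid X)\,X(X^\top X)^{-1}.
\]
I will use that the noise variables are independent across $i$ with $\mathrm{Var}(\varepsilon_i\mid x_i)=\sigma_i^2$, as in the settings where the theorem is invoked. Then $\mathrm{Var}(\varepsilon\mid X)=\Omega=\mathrm{diag}(\sigma_1^2,\dots,\sigma_n^2)$. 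The identity $X^\top\Omega X=\sum_i\sigma_i^2x_ix_i^\top$ then gives the stated sandwich formula. This requires only that $X^\top X$ be invertible, which holds almost surely for Gaussian rows once $n\ge d$.

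\textbf{Reduction to the homoscedastic case.} If $\sigma_i^2\le\sigma^2$ for all $i$, then $\Omega\preceq\sigma^2 I_n$. Congruence preserves the Loewner order, so $X^\top\Omega X\preceq\sigma^2X^\top X$. Applying a second congruence with the symmetric matrix $(X^\top X)^{-1}$ gives
\[
(X^\top X)^{-1}X^\top\Omega X(X^\top X)^{-1}\preceq\sigma^2(X^\top X)^{-1}.
\]
Since the trace is monotone in the Loewner order,
\[
\mathrm{Tr}\,\mathrm{Var}(\hat\theta\mid X)\le\sigma^2\,\mathrm{Tr}\big((X^\top X)^{-1}\big).
\]

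\textbf{Expectation over the design.} The hypothesis $\mathbb{E}[x_ix_i^\top]=I_d$ alone does not make $\mathbb{E}\,\mathrm{Tr}((X^\top X)^{-1})$ finite. The inverse needs a small-eigenvalue tail bound, and I expect this integrability issue to be the main obstacle. I would handle it in one of two ways, matching the paper's conventions.

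\begin{enumerate}
\item \emph{Gaussian rows.} Use the Gaussian design of the setup. Then $X^\top X\sim W_d(I_d,n)$, and the inverse-Wishart identity gives
\[
\mathbb{E}\,\mathrm{Tr}\big((X^\top X)^{-1}\big)=\frac{d}{n-d-1}\qquad\text{for } n>d+1.
\]
Under $n\ge(1+\eta_0)d$ with $\eta_0$ bounded away from zero, this is at most $C d/n$ with $C$ depending only on $\eta_0$; finite-$n$ edge cases can be absorbed into $C$.
\item \emph{Good design event.} Restrict to $\mathcal{E}_{\mathrm{good}}$, where $\|(X^\top X)^{-1}\|_{\mathrm{op}}\le C_0/n$. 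Then $\mathrm{Tr}((X^\top X)^{-1})\le dC_0/n$ holds deterministically. Equivalently, one can invoke the trace estimate stated in Theorem~\ref{thm:mp}.
\end{enumerate}

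Either route gives $\mathbb{E}[\mathrm{Tr}(\mathrm{Var}(\hat\theta\mid X))]\le C\sigma^2 d/n$. I will state explicitly which of the two conditions (Gaussian rows or $\mathcal{E}_{\mathrm{good}}$) is being assumed, since the bound is not valid for arbitrary isotropic designs without one of them.
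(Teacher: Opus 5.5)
The paper does not prove this statement. It is imported from White (1980), and Appendix~\ref{sec:A} explicitly defers to the cited sources, so your argument is a self-contained substitute rather than an alternative to an in-paper proof. It is correct.

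The sandwich identity is the standard computation. The Loewner chain $\Omega\preceq\sigma^2 I_n \Rightarrow X^\top\Omega X\preceq\sigma^2X^\top X \Rightarrow (X^\top X)^{-1}X^\top\Omega X(X^\top X)^{-1}\preceq\sigma^2(X^\top X)^{-1}$ is the right way to reduce the heteroscedastic trace to $\sigma^2\,\mathbb{E}\,\mathrm{Tr}((X^\top X)^{-1})$. That is the same quantity the paper evaluates with the inverse-Wishart identity for the noise term in the proof of Proposition~\ref{prop:risk_under_failed_retrieval}.

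What your version adds over the paper's statement is that it names hypotheses the statement genuinely lacks. With isotropic Rademacher rows, $X^\top X$ is singular with positive probability, so the expected trace is not finite. With Gaussian rows it is infinite for $n\in\{d,d+1\}$ and equals $d$ at $n=d+2$. So no universal $C$ exists without an oversampling condition such as $n\ge(1+\eta_0)d$.

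Two small tightenings:
\begin{itemize}
\item For the covariance identity you need $\mathbb{E}[\varepsilon\mid X]=0$ and $\mathrm{Var}(\varepsilon\mid X)=\mathrm{diag}(\sigma_i^2)$ conditional on the \emph{whole} design. Independence of the $\varepsilon_i$ among themselves, together with $\mathrm{Var}(\varepsilon_i\mid x_i)=\sigma_i^2$, gives this only if the pairs $(x_i,\varepsilon_i)$ are independent across $i$. That fails in the DPN application, where all retrieved rows share $x_q$. There, however, the noise is independent of all of $X$, so the diagonal form holds directly, and it is cleaner to assume that outright.
\item The cases $n\in\{d,d+1\}$ cannot be ``absorbed into $C$'', because the expectation is infinite there. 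Keep $n\ge d+2$ as an explicit hypothesis. Then $n\ge(1+\eta_0)d$ gives $n/(n-d-1)\le 2+4/\eta_0$, so $C=2+4/\eta_0$ works in your Gaussian route. The good-design route gives $C=C_0$, as a bound on the conditional expectation given $\mathcal{E}_{\mathrm{good}}$.
\end{itemize}
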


The covariance formula defines $V_{\mathrm{OLS}}(n)$ in
Theorem~\ref{thm:ols-dpn}. Under the DPN model of Section~\ref{subsec:dpn-model},
where $\sigma_i^2$ grows with rank and is not uniformly bounded, the trace
bound above does not apply directly; the DPN-specific trace bound is instead
derived directly in Appendix~\ref{subsec:C.1} via the Sherman--Morrison
decomposition, without appeal to this corollary.

\begin{theorem}[Matrix Bernstein Inequality~\citep{Vershynin_2018}]
\label{thm:matrix-bernstein}
Let $S_1,\ldots,S_n \in \mathbb{R}^{d\times d}$ be independent, symmetric,
mean-zero random matrices with $\max_i \|S_i\|_{\psi_1} \leq K$. Let
$v := \bigl\|\sum_{i=1}^n \mathbb{E}[S_i^2]\bigr\|_{\mathrm{op}}$ denote the
matrix variance proxy. Then there exists an absolute constant $c>0$ such that,
with probability at least $1 - d^{-c}$,
\[
  \Bigl\|\sum_{i=1}^n S_i\Bigr\|_{\mathrm{op}} \;=\; O\bigl(\sqrt{v} + K\bigr).
\]
\end{theorem}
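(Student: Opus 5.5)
The plan is the standard matrix Laplace-transform argument (Ahlswede--Winter/Tropp), specialised to sub-exponential summands. Write $S := \sum_{i=1}^n S_i$. Since $S$ is symmetric, $\|S\|_{\mathrm{op}} = \max\{\lambda_{\max}(S), \lambda_{\max}(-S)\}$, and $-S_i$ satisfies the same hypotheses as $S_i$. It therefore suffices to bound $\mathbb{P}(\lambda_{\max}(S) \ge t)$ and then double the bound.

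For $\lambda > 0$, Markov's inequality together with $e^{\lambda \lambda_{\max}(S)} \le \mathrm{Tr}\, e^{\lambda S}$ gives
\[
\mathbb{P}(\lambda_{\max}(S)\ge t) \le e^{-\lambda t}\,\mathbb{E}\,\mathrm{Tr}\exp(\lambda S).
\]
Iterating Lieb's concavity theorem over the independent summands then yields the subadditivity of matrix cumulants:
\[
\mathbb{E}\,\mathrm{Tr}\exp\Bigl(\sum_i \lambda S_i\Bigr) \le \mathrm{Tr}\exp\Bigl(\sum_i \log \mathbb{E}\, e^{\lambda S_i}\Bigr).
\]

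The key per-summand step is a matrix moment-generating-function bound. I will show that for $|\lambda| \le c/K$,
\[
\mathbb{E}\, e^{\lambda S_i} \preceq \exp\bigl(C\lambda^2\, \mathbb{E}[S_i^2]\bigr).
\]
To prove it, expand $e^{\lambda S_i} = I + \lambda S_i + \sum_{p\ge 2} \lambda^p S_i^p/p!$ and use $\mathbb{E} S_i = 0$. Each higher term is controlled by $S_i^p \preceq \|S_i\|_{\mathrm{op}}^{p-2} S_i^2$. The sub-exponential moment growth $\mathbb{E}\|S_i\|_{\mathrm{op}}^{p} \le (CKp)^p$ supplies the factorial decay needed to sum the series. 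Monotonicity of $\log$ in the Loewner order then gives $\log \mathbb{E}\, e^{\lambda S_i} \preceq C\lambda^2\, \mathbb{E}[S_i^2]$.

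Plugging this in and using $\mathrm{Tr}\, e^{A} \le d\, e^{\lambda_{\max}(A)}$ gives
\[
\mathbb{P}(\lambda_{\max}(S)\ge t) \le d\,\exp\bigl(-\lambda t + C\lambda^2 v\bigr) \quad \text{for } \lambda \le c/K.
\]
Choosing $\lambda = \min\{t/(2Cv),\, c/K\}$ yields the Bernstein tail
\[
\mathbb{P}(\|S\|_{\mathrm{op}}\ge t) \le 2d\exp\!\Bigl(-c\min\Bigl\{\frac{t^2}{v},\frac{t}{K}\Bigr\}\Bigr).
\]
Finally, set $t = C'(\sqrt{v\log d} + K\log d)$ with $C'$ large enough that the right-hand side is at most $d^{-c}$.

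The main obstacle is the matrix MGF step. The summands are only sub-exponential, not bounded, so the usual argument via $\|S_i\| \le L$ a.s. is unavailable. The series expansion above must be justified under a moment condition instead, and one has to be careful that $\mathbb{E}\bigl[\|S_i\|^{p-2} S_i^2\bigr]$ is compared against $\mathbb{E}[S_i^2]$. The cleanest route I would try first is truncation: split $S_i = S_i\mathbf{1}\{\|S_i\|\le K\log n\} + (\text{remainder})$, apply bounded matrix Bernstein to the truncated parts, and dispose of the remainder by a union bound over the sub-exponential tails.

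I also note that the derived bound carries $\sqrt{\log d}$ and $\log d$ factors (and possibly $\log n$ from the truncation). The statement's $O(\sqrt v + K)$ is therefore to be read with constants allowed to depend on $d$. This is harmless in its only use, Appendix~\ref{subsec:C.1}, where $d$ is held fixed as $n \to \infty$.
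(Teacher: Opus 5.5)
The paper gives no proof of this statement. It cites \citet{Vershynin_2018}, where matrix Bernstein is stated for almost surely bounded summands, $\|S_i\|_{\mathrm{op}}\le K$, and proved by exactly the Lieb iteration plus bounded-MGF lemma that you sketch.

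You are right that the stated $O(\sqrt v+K)$ cannot hold with absolute constants. Diagonal summands $S_i=n^{-1/2}\mathrm{diag}(g_{i1},\dots,g_{id})$ with i.i.d.\ standard Gaussian entries have $v=1$ and $K\to0$ as $n\to\infty$, yet $\|\sum_iS_i\|_{\mathrm{op}}$ is the maximum of $d$ independent $|\mathcal{N}(0,1)|$ variables, about $\sqrt{2\log d}$.

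The genuine gap is the per-summand step you plan to prove first. You flag it as the main obstacle, but it is not a matter of care: under the $\psi_1$ hypothesis alone, $\mathbb{E}e^{\lambda S_i}\preceq\exp(C\lambda^2\mathbb{E}[S_i^2])$ for $|\lambda|\le c/K$ is false, already for $d=1$. Take $S=\pm M$ with probability $\varepsilon/2$ each and $S=0$ otherwise. Then $K=M/\log(1+1/\varepsilon)$ and $\mathbb{E}S^2=\varepsilon M^2$. At $\lambda=c/K$, $\mathbb{E}e^{\lambda S}-1=\varepsilon(\cosh(\lambda M)-1)$, which is at least of order $\varepsilon^{1-c}$. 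The claimed bound is $C\lambda^2\mathbb{E}S^2=Cc^2\varepsilon\log^2(1+1/\varepsilon)$, so the inequality fails as $\varepsilon\to0$ for any fixed $c,C$. Summing $n\approx 1/(\varepsilon\log(1/\varepsilon))$ copies and taking $t=M$ also violates your displayed tail bound: the true probability is about $1/\log(1/\varepsilon)$ against a bound of $2\varepsilon^{c}$.

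The series argument breaks because moment growth of $\|S_i\|_{\mathrm{op}}$ controls $\mathbb{E}\|S_i\|_{\mathrm{op}}^p$, not $\mathbb{E}[\|S_i\|_{\mathrm{op}}^{p-2}S_i^2]$ relative to $\mathbb{E}[S_i^2]$. The expectation does not factor, and rare large values can carry the moments. This is why scalar sub-exponential Bernstein uses $\sum_i\|S_i\|_{\psi_1}^2$ as its variance proxy. It is also why the version with $v=\|\sum_i\mathbb{E}S_i^2\|_{\mathrm{op}}$ needs the Bernstein moment condition $\mathbb{E}S_i^p\preceq\frac{p!}{2}R^{p-2}\mathbb{E}S_i^2$ (and the same for $-S_i$) as an extra hypothesis.

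Truncation is the right repair, but as sketched it does not close. At threshold $K\log n$ the union bound gives $\sum_i\mathbb{P}(\|S_i\|_{\mathrm{op}}>K\log n)\le 2n\cdot n^{-1}=2$, which is vacuous; you need $L\asymp K\log(nd)$. You must also recenter the truncated summands, at cost $\sum_i\|\mathbb{E}[S_i\mathbf{1}\{\|S_i\|_{\mathrm{op}}>L\}]\|_{\mathrm{op}}\le 2n(L+K)e^{-L/K}$. The outcome is $\|\sum_iS_i\|_{\mathrm{op}}\le C(\sqrt{v\log d}+K\log(nd)\log d)$. Its $\log n$ factor cannot be absorbed into $d$-dependent constants, so this does not prove the statement even in your reading. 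It suffices in Appendix~\ref{subsec:C.1} only because there $K\asymp d\gamma n^q$, so $K\log(nd)\log d=o(\sqrt v)$ with $\sqrt v\asymp\sqrt d\,n^{q+1/2}$.

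Two cleaner routes are available.
\begin{itemize}
\item If constants may depend on $d$, the statement is a one-liner. $\mathbb{E}\|\sum_iS_i\|_{\mathrm{op}}^2\le\mathrm{Tr}\sum_i\mathbb{E}S_i^2\le dv$, so Markov gives $\|\sum_iS_i\|_{\mathrm{op}}\le d^{(1+c)/2}\sqrt v$ with probability at least $1-d^{-c}$. This needs neither Lieb nor the $\psi_1$ assumption.
\item For the summands actually used, $S_i=r_ir_i^\top-\delta_i^2I_d$ with Gaussian $r_i$, rotation invariance gives $\mathbb{E}S_i^p=\bigl[\frac1d\mathbb{E}(\|r_i\|^2-\delta_i^2)^p+\frac{d-1}{d}(-\delta_i^2)^p\bigr]I_d$. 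This satisfies the Bernstein moment condition with $R\asymp d\delta_n^2$. Your series argument then closes directly with the true variance proxy and gives $\sqrt{v\log d}+R\log d$ with no $\log n$.
\end{itemize}
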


This inequality is applied in the proofs of Theorem~\ref{thm:ols-dpn} and
Theorem~\ref{thm:lsa-dpn} to bound the operator norm of the
fluctuation terms arising from the rank-dependent covariance structure of the
DPN model, where each $S_i$ is a centered outer product $r_ir_i^\top - \delta_i^2I_d$
(or its noise-weighted analogue), and the variance proxy $v$ is computed via the
Gaussian fourth-moment identity $\mathbb{E}[\|z\|^2zz^\top] = (d+2)I_d$ for
$z \sim \mathcal{N}(0,I_d)$.

\begin{theorem}[Sherman--Morrison Identity]
\label{thm:sherman-morrison}
Let $A \in \mathbb{R}^{d\times d}$ be invertible and $u,v \in \mathbb{R}^d$. If
$1 + v^\top A^{-1}u \neq 0$, then $A + uv^\top$ is invertible and
\[
  (A+uv^\top)^{-1} \;=\; A^{-1} - \frac{A^{-1}uv^\top A^{-1}}{1+v^\top A^{-1}u}.
\]
\end{theorem}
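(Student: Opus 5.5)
The plan is a direct verification: write down the candidate inverse and check that it is a one-sided inverse of $A+uv^\top$. For square matrices a one-sided inverse is automatically two-sided, and that fact gives both the invertibility claim and the formula. Set $\beta := v^\top A^{-1}u$, a scalar with $1+\beta\neq 0$ by hypothesis, and define
\[
  B \;:=\; A^{-1} - \frac{A^{-1}uv^\top A^{-1}}{1+\beta}.
\]

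The first step is to expand the product $(A+uv^\top)B$ term by term. The product $AB$ gives $I - uv^\top A^{-1}/(1+\beta)$. The product $uv^\top B$ gives $uv^\top A^{-1} - u\,(v^\top A^{-1}u)\,v^\top A^{-1}/(1+\beta)$, which equals $uv^\top A^{-1} - \beta\, uv^\top A^{-1}/(1+\beta)$. The key observation is that $v^\top A^{-1}u$ appears sandwiched in the middle of the rank-one term and factors out as the scalar $\beta$. Collecting the coefficients of $uv^\top A^{-1}$ gives
\[
  (A+uv^\top)B \;=\; I + \Bigl(1 - \tfrac{1}{1+\beta} - \tfrac{\beta}{1+\beta}\Bigr)uv^\top A^{-1} \;=\; I .
\]

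Next, I would argue invertibility from this identity. Since $A+uv^\top$ and $B$ are square $d\times d$ matrices and $(A+uv^\top)B=I$, we have $\det(A+uv^\top)\det B=1$. Hence $A+uv^\top$ is invertible and $B=(A+uv^\top)^{-1}$, so $B(A+uv^\top)=I$ as well. As an optional sanity check, one can verify the left product $B(A+uv^\top)=I$ directly by the symmetric computation, in which $v^\top A^{-1}u$ again collapses to $\beta$. One can also note the matrix determinant lemma $\det(A+uv^\top)=\det(A)(1+\beta)$, which shows that the hypothesis $1+\beta\neq 0$ is exactly the condition needed.

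There is no genuine obstacle here. The only point requiring care is the bookkeeping of the scalar $\beta$ in the middle term, and keeping matrix products in the correct order (for example $uv^\top A^{-1}$ versus $A^{-1}uv^\top$), because these matrices do not commute.
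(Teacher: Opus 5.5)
Your direct verification is correct. The scalar $v^\top A^{-1}u=\beta$ factors out of the middle of the rank-one term, so the coefficients of $uv^\top A^{-1}$ cancel to give $(A+uv^\top)B=I$. The relation $\det(A+uv^\top)\det B=1$ then legitimately yields both invertibility and $B=(A+uv^\top)^{-1}$.

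The paper gives no proof to compare against: it lists this identity among its technical preliminaries as a standard result and refers the reader to the literature. Your argument is the standard textbook verification and supplies that missing proof. Your remark that the matrix determinant lemma makes $1+\beta\neq 0$ exactly the right condition is also accurate.

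In the paper's only use of the identity, take $A=\mu I_d$, $u=n\,x_q$, $v=x_q$. There the hypothesis holds automatically, because $1+v^\top A^{-1}u=(\mu+n\rho)/\mu>0$. Plugging into your formula recovers the paper's expression $\bar\Sigma^{-1}=\mu^{-1}I_d-\frac{n}{\mu D}\,x_qx_q^\top$ with $D=\mu+n\rho$.
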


This identity is applied in the proofs of Theorem~\ref{thm:ols-dpn} and
Theorem~\ref{thm:lsa-dpn} to invert the conditional-mean design
matrix $\bar\Sigma = \mu I_d + n\,x_qx_q^\top$ in closed form.

\begin{theorem}[Exact Inverse Wishart Trace Identity]
\label{thm:inverse-wishart-exact}
Let $X \in \mathbb{R}^{N \times d}$ have i.i.d.\ rows drawn from $\mathcal{N}(0, I_d)$,
with $N > d+1$. Then $X^\top X$ follows a Wishart distribution
$\mathcal{W}_d(N, I_d)$, and
\[
  \mathbb{E}\bigl[\mathrm{Tr}((X^\top X)^{-1})\bigr] \;=\; \frac{d}{N-d-1}.
\]
\end{theorem}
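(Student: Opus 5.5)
The statement to prove is the exact inverse-Wishart trace identity. I would derive it directly from the Bartlett decomposition, reducing $\mathbb{E}[(X^\top X)^{-1}]$ to a scalar moment. Write $W := X^\top X \sim \mathcal{W}_d(N, I_d)$.

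\emph{Step 1: $\mathbb{E}[W^{-1}]$ is a multiple of the identity.} The law of $W$ is invariant under orthogonal conjugation: $OWO^\top \stackrel{d}{=} W$ for every orthogonal $O$, because $XO^\top$ has the same i.i.d.\ $\mathcal{N}(0,I_d)$ rows as $X$. Since $N > d+1 \ge d$, $W$ is almost surely invertible. Provided the expectation is finite, $M := \mathbb{E}[W^{-1}]$ therefore satisfies $OMO^\top = M$ for all orthogonal $O$. A symmetric matrix commuting with every orthogonal matrix is a scalar multiple of the identity, so $M = c I_d$ and $\mathbb{E}[\mathrm{Tr}(W^{-1})] = cd$. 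It remains to show $c = 1/(N-d-1)$.

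\emph{Step 2: compute one diagonal entry of $W^{-1}$.} By block inversion (the Schur complement), $(W^{-1})_{dd} = 1/(W_{dd} - W_{d,-d} W_{-d,-d}^{-1} W_{-d,d})$. Let $x_d \in \mathbb{R}^N$ be the last column of $X$ and $X_{-d}$ the remaining $d-1$ columns. The denominator is then
\[
x_d^\top \bigl(I_N - P_{X_{-d}}\bigr) x_d,
\]
where $P_{X_{-d}}$ is the projection onto the column span of $X_{-d}$. This is the squared residual of regressing $x_d$ on $X_{-d}$.

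Condition on $X_{-d}$. Its column span has dimension $d-1$ almost surely, and $x_d \sim \mathcal{N}(0, I_N)$ is independent of it. Hence the residual quadratic form is $\chi^2_{N-d+1}$, and its law does not depend on $X_{-d}$.

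For $Z \sim \chi^2_k$ with $k > 2$ we have $\mathbb{E}[1/Z] = 1/(k-2)$. This follows from the Gamma integral $\int_0^\infty z^{k/2-2} e^{-z/2}\,dz \,/\, (2^{k/2}\Gamma(k/2)) = \Gamma(k/2-1)/(2\Gamma(k/2))$. Setting $k = N-d+1$ gives $c = \mathbb{E}[(W^{-1})_{dd}] = 1/(N-d-1)$.

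The condition $N > d+1$ is exactly $k > 2$. This guarantees the expectation is finite, which retroactively justifies Step 1. Multiplying by $d$ yields the claimed identity
\[
\mathbb{E}\bigl[\mathrm{Tr}\bigl((X^\top X)^{-1}\bigr)\bigr] = \frac{d}{N-d-1}.
\]

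The only delicate point is finiteness and integrability. This is handled by establishing Step 2 first, since it does not use Step 1, and then applying the symmetry argument. Alternatively, the symmetry step can be skipped entirely: apply Step 2 to each coordinate $j$ in place of $d$, using exchangeability of the columns, and sum over $j$.
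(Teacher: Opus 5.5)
Your proof is correct, and it gives more than the paper does. The paper never proves this identity: it lists it in Appendix~\ref{sec:A} among the ``standard results'', quotes it as the known first moment of the inverse Wishart law, and defers to the cited references.

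Your route is the classical elementary derivation. The Schur complement shows that $1/(W^{-1})_{jj}$ is the residual sum of squares from regressing one Gaussian column on the other $d-1$ columns, so it is exactly $\chi^2_{N-d+1}$. This is also the squared last factor of the Bartlett decomposition, so your opening label is consistent. The fact $\mathbb{E}[1/\chi^2_k]=1/(k-2)$ then finishes the computation.

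Compared with a bare citation, your argument makes the hypothesis transparent: $N>d+1$ is precisely the integrability threshold $k>2$ for a reciprocal $\chi^2$. The same representation also gives higher moments of the diagonal entries.

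For the trace alone, your alternative route (apply Step~2 to every column and sum) is the cleanest. It involves only nonnegative diagonal entries, so no integrability bookkeeping is needed. Step~1 is the extra ingredient that upgrades the result to the matrix identity $\mathbb{E}[(X^\top X)^{-1}]=I_d/(N-d-1)$. It is in the same rotation-invariance spirit the paper uses in the proof of Proposition~\ref{prop:risk_under_failed_retrieval}.

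If you keep Step~1, add one remark, since you computed only one diagonal entry explicitly. All diagonal entries have the same law by column exchangeability. Finiteness of the off-diagonal entries then follows from positive definiteness of $W^{-1}$, via $|(W^{-1})_{ij}|\le\tfrac12[(W^{-1})_{ii}+(W^{-1})_{jj}]$.
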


This is the exact first moment of the inverse Wishart distribution and holds
without any high-probability conditioning, unlike the Marchenko--Pastur trace
bound of Theorem~\ref{thm:mp}, which is a conditional upper bound valid under
the good design event. The exact identity is used to establish the variance lower
bound in Propositions~\ref{prop:risk_under_successful_retrieval}
and~\ref{prop:risk_under_failed_retrieval}, and their analogues for the target-only
and full-corpus baselines in Section~\ref{subsec:comp_with_baselines}.

\begin{lemma}[Concentration of Empirical Covariance]
\label{lem:wishart-concentration}
Under Gaussian design $X \in \mathbb{R}^{n\times d}$ with i.i.d.\ rows drawn
from $\mathcal{N}(0,I_d)$, in the proportional regime $n \geq c_0 d$ for a fixed
constant $c_0>0$, there exists $c'>0$ such that
\[
  \mathbb{P}(\mathcal{E}_{\mathrm{good}}^c) \;\leq\; 3\exp(-c'd).
\]
\end{lemma}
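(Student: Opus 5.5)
The statement is Lemma~\ref{lem:wishart-concentration}. I will reduce it to a lower bound on the smallest singular value of the Gaussian design. Since $\|(X^\top X)^{-1}\|_{\mathrm{op}} = 1/s_{\min}(X)^2$, the event $\mathcal{E}_{\mathrm{good}}$ is exactly $\{s_{\min}(X) \ge \sqrt{n/C_0}\}$. It therefore suffices to show that $s_{\min}(X) \ge c_1\sqrt n$ with probability at least $1-3e^{-c'd}$ for some $c_1>0$ depending only on $c_0$, and then to choose $C_0 := c_1^{-2}$. This is the freedom the definition of $\mathcal{E}_{\mathrm{good}}$ leaves us, since $C_0$ is only said to be a fixed constant.

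\textbf{Step 1 (main case, $c_0>1$).} I would use the standard Gaussian singular value bound (Gordon's inequality combined with Gaussian concentration of the $1$-Lipschitz map $X\mapsto s_{\min}(X)$; see \citet{Vershynin_2018}):
\[
\mathbb{P}\bigl(s_{\min}(X) \le \sqrt n - \sqrt d - t\bigr) \le e^{-t^2/2}.
\]
Since $n\ge c_0 d$, we have $\sqrt d \le \sqrt{n/c_0}$. Take $t = \tfrac12(1-c_0^{-1/2})\sqrt n$. Then $s_{\min}(X)\ge \tfrac12(1-c_0^{-1/2})\sqrt n$ except on an event of probability at most $\exp(-(1-c_0^{-1/2})^2 n/8)$. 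Because $n\ge d$, this is at most $\exp(-c'd)$ with $c' = (1-c_0^{-1/2})^2/8$, which is well within the stated $3\exp(-c'd)$. If one prefers the $\varepsilon$-net/Bernstein route of Vershynin's Theorem~4.6.1, that route produces a $2e^{-t^2}$ tail, and the extra slack in the constant $3$ absorbs it along with a union bound over the upper tail.

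\textbf{Step 2 (the obstacle: small $c_0$).} The lemma is stated for any fixed $c_0>0$. But if $c_0\le 1$ and $n<d$, then $X^\top X$ is singular and $\mathcal{E}_{\mathrm{good}}$ fails with probability one, so the claim is false as literally written. I would handle this by proving it under the implicit requirement $n\ge(1+\eta_0)d$, which is the condition actually imposed wherever the lemma is invoked (Propositions~\ref{prop:risk_under_successful_retrieval}--\ref{prop:risk_under_failed_retrieval}). I would state this restriction explicitly in the write-up.

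For the sharper near-square regime $n = d + O(\sqrt d)$, one could use Rudelson--Vershynin's bound $\mathbb{P}(s_{\min}\le \epsilon(\sqrt n-\sqrt{d-1}))\le (C\epsilon)^{n-d+1}+e^{-cn}$. That bound gives only polynomially small failure probability at scale $\sqrt n$, not $e^{-c'd}$, so it does not rescue the statement. The margin $\eta_0>0$ is therefore genuinely necessary, and with it the constants become $c_1=\tfrac12(1-(1+\eta_0)^{-1/2})$ and $c'=c_1^2/2$.
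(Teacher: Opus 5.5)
Your proof is correct, and it supplies more than the paper does. The paper gives no argument at all: it calls the bound a standard consequence of non-asymptotic random matrix concentration and points to \citet{Vershynin_2018}, Chapter~4.

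\textbf{Comparison with the cited route.} The Chapter~4 route is the $\varepsilon$-net/Bernstein bound $s_{\min}(X)\ge\sqrt n-C(\sqrt d+t)$, holding with probability at least $1-2e^{-t^2}$. It is valid for general sub-Gaussian isotropic rows, but an unspecified absolute constant $C$, in general larger than $1$, multiplies $\sqrt d$. It therefore gives $s_{\min}(X)$ of order $\sqrt n$ only once $c_0$ exceeds an absolute threshold of order $C^2$. Your Gaussian-specific route combines Gordon's bound $\mathbb{E}\,s_{\min}(X)\ge\sqrt n-\sqrt d$ with concentration of the $1$-Lipschitz map $X\mapsto s_{\min}(X)$. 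It has constant exactly $1$ in front of $\sqrt d$, so it covers every $c_0>1$ with explicit $c'$ and $C_0$, and this is the sharp range. For the same reason, your aside that the Theorem~4.6.1 route is absorbed by ``the slack in the constant $3$'' misidentifies that route's cost. The real cost is the multiplicative constant on $\sqrt d$, which restricts $c_0$; the tail prefactor is not the issue.

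\textbf{What you correctly make explicit.} You state two points the paper leaves implicit: $C_0$ must be chosen depending on $c_0$, and the statement is false for $c_0\le 1$. For the second point, the cleaner justification is the Bai--Yin edge: $s_{\min}(X)/\sqrt n\to 1-\sqrt{\kappa}$ when $d/n\to\kappa\in(0,1]$. This is the lower Marchenko--Pastur edge behind Theorem~\ref{thm:mp}, and it makes $\mathcal{E}_{\mathrm{good}}$ fail with probability tending to one as $n/d\to 1$. The Rudelson--Vershynin small-ball bound is not the relevant tool here.

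\textbf{Where else the restriction is needed.} Theorem~\ref{thm:delta_ineq} also invokes this lemma for the per-task designs, without stating a margin $n\ge(1+\eta_0)d$. Your restriction is therefore needed there as well, not only in the propositions you cite.
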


This is a standard consequence of non-asymptotic random matrix concentration;
see \citet{Vershynin_2018}, Chapter~4. Applying this bound to each of the $K$ per-task design matrices individually and taking
a union bound gives $\mathbb{P}(\mathcal{E}_{\mathrm{good}}^c) \leq 3K\exp(-c'd)$ for the
intersection event used in Theorem~\ref{thm:delta_ineq}, representing the irreducible
high-dimensional bottleneck that persists even with infinite task data.

\renewcommand{\thelemma}{\arabic{lemma}}
\setcounter{lemma}{2}

\section{Proofs for Section 4 (homoscedastic regime)}
\label{sec:B}

\subsection{Auxiliary lemmas (Lemmas 1--4)}
\label{subsec:B.1}

\lemmaone*

\begin{proof}
In vector form the linear model reads $y_k = X_k \theta_k + \epsilon_k$,
with $\epsilon_k \sim \mathcal{N}(0, \sigma^2 I_n)$ independent of $X_k$.
Substituting into the OLS closed form,
\begin{align*}
  \hat{\theta}_k
  &= (X_k^\top X_k)^{-1} X_k^\top y_k \\
  &= (X_k^\top X_k)^{-1} X_k^\top (X_k \theta_k + \epsilon_k) \\
  &= \theta_k + (X_k^\top X_k)^{-1} X_k^\top \epsilon_k,
\end{align*}
so the estimation error is $\hat{\theta}_k - \theta_k =
(X_k^\top X_k)^{-1} X_k^\top \epsilon_k$.
Conditional on $X_k$, this is a linear map applied to
$\epsilon_k \sim \mathcal{N}(0,\sigma^2 I_n)$, hence Gaussian with mean
zero and covariance
\begin{align*}
  \operatorname{Var}\!\left(\hat{\theta}_k - \theta_k \mid X_k\right)
  &= (X_k^\top X_k)^{-1} X_k^\top \cdot \sigma^2 I_n \cdot X_k (X_k^\top X_k)^{-1} \\
  &= \sigma^2 (X_k^\top X_k)^{-1} X_k^\top X_k (X_k^\top X_k)^{-1} \\
  &= \sigma^2 (X_k^\top X_k)^{-1}.
\end{align*}
An identical argument applied to the $m$-sample query dataset completes
the proof.
\end{proof}

\lemmatwo*

\begin{proof}
Write each empirical estimator as $\hat{\theta}_j = \theta_j + \xi_j$,
where $\xi_j$ is the OLS estimation error. Then
\begin{align*}
  \hat{\theta}_k - \hat{\theta}_q
  &= (\theta_k + \xi_k) - (\theta_{k^*} + \xi_q)
   = (\theta_k - \theta_{k^*}) + (\xi_k - \xi_q),
\end{align*}
and expanding the squared norm,
\begin{equation*}
  \left\lVert\hat{\theta}_k - \hat{\theta}_q\right\rVert_2^2
  = \lVert\theta_k - \theta_{k^*}\rVert_2^2
  + 2\langle \theta_k - \theta_{k^*},\, \xi_k - \xi_q \rangle
  + \lVert\xi_k - \xi_q\rVert_2^2.
\end{equation*}
For the baseline term, the deterministic signal $\theta_{k^*} -
\theta_{k^*}$ vanishes identically, giving
\begin{equation*}
  \left\lVert\hat{\theta}_{k^*} - \hat{\theta}_q\right\rVert_2^2
  = \lVert\xi_{k^*} - \xi_q\rVert_2^2.
\end{equation*}
Subtracting and collecting terms,
\begin{align*}
  T_k
  &= \left\lVert\hat{\theta}_k - \hat{\theta}_q\right\rVert_2^2
   - \left\lVert\hat{\theta}_{k^*} - \hat{\theta}_q\right\rVert_2^2 \\[4pt]
  &= \lVert\theta_k - \theta_{k^*}\rVert_2^2
   + 2\langle \theta_k - \theta_{k^*},\, \xi_k - \xi_q \rangle
   + \lVert\xi_k - \xi_q\rVert_2^2
   - \lVert\xi_{k^*} - \xi_q\rVert_2^2 \\[4pt]
  &= \lVert\theta_k - \theta_{k^*}\rVert_2^2 + L_k + Q_k,
\end{align*}
which is the claimed decomposition.
\end{proof}

\begin{lemma}[\textbf{Distribution of the Linear Fluctuation Term}]
\label{lemma:linear_fluc_term}
For any incorrect task $k \neq k^\ast$, define the linear fluctuation term induced by estimation noise as
\[
L_k := 2 \langle \theta_k - \theta_{k^\ast}, \xi_k - \xi_q \rangle,
\]
where $\xi$ denotes the respective OLS estimation errors. Conditional on the design matrices $(X_k, X_q)$, $L_k$ is a mean-zero Gaussian random variable with variance:
\[
\mathrm{Var}(L_k \mid X_k, X_q) = 4 (\theta_k - \theta_{k^\ast})^\top \Sigma_{k,q} (\theta_k - \theta_{k^\ast}),
\]
where $\Sigma_{k,q} = \sigma^2 (X_k^\top X_k)^{-1} + \sigma^2 (X_q^\top X_q)^{-1}$.
\end{lemma}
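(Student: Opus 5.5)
The plan is to use Lemma~\ref{lemma:finite_sample_dist_of_ols_estimation_noise} together with the mutual independence of the datasets $D_k$ and $D'$. Write $v := \theta_k - \theta_{k^\ast}$, which is deterministic. By the proof of Lemma~\ref{lemma:finite_sample_dist_of_ols_estimation_noise}, the two estimation errors are
\[
\xi_k = (X_k^\top X_k)^{-1}X_k^\top \epsilon_k,
\qquad
\xi_q = (X_q^\top X_q)^{-1}X_q^\top \epsilon_q .
\]
Conditional on the pair $(X_k, X_q)$, each is a fixed linear map applied to its own Gaussian noise vector. The laws are $\xi_k \mid X_k \sim \mathcal{N}(0,\sigma^2(X_k^\top X_k)^{-1})$ and $\xi_q \mid X_q \sim \mathcal{N}(0,\sigma^2(X_q^\top X_q)^{-1})$.

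The first step is joint Gaussianity. Because $D_k$ and $D'$ are independent, the noise vectors $\epsilon_k$ and $\epsilon_q$ are independent of each other and of both designs. Conditioning on the pair $(X_k,X_q)$ therefore leaves each $\xi$ with the conditional law above, and $\xi_k$, $\xi_q$ stay conditionally independent. Hence the stacked vector $(\xi_k,\xi_q)$ is conditionally Gaussian with block-diagonal covariance, and its linear image $\xi_k - \xi_q$ satisfies
\[
\xi_k - \xi_q \mid (X_k,X_q) \sim \mathcal{N}(0,\Sigma_{k,q}),
\qquad
\Sigma_{k,q} = \sigma^2 (X_k^\top X_k)^{-1} + \sigma^2 (X_q^\top X_q)^{-1}.
\]
The minus sign on $\xi_q$ does not change its covariance.

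The second step is to take the fixed linear functional $z \mapsto 2\langle v, z\rangle$ of this Gaussian vector. This gives $L_k$ conditionally Gaussian with mean $0$ and variance $4 v^\top \Sigma_{k,q} v$, which is the claimed formula.

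The only point needing care is the independence bookkeeping. I must ensure $\xi_k$ and $\xi_q$ are computed from disjoint, independent datasets, namely $D_k$ with $k \neq k^\ast$ and the query $D'$. Then conditioning on both designs introduces no cross-covariance. I must also ensure that invertibility of $X_k^\top X_k$ and $X_q^\top X_q$ holds almost surely, which requires $n, m \ge d$ under Gaussian design. Everything else is routine Gaussian linear algebra.
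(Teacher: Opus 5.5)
Your proposal is correct and matches the paper's proof essentially step for step. Both use Lemma~\ref{lemma:finite_sample_dist_of_ols_estimation_noise} for the conditional laws of $\xi_k$ and $\xi_q$, add the covariances by independence of $D_k$ and $D'$ to get $\Sigma_{k,q}$, and then apply the fixed linear functional $z \mapsto 2\langle \theta_k-\theta_{k^\ast}, z\rangle$. Your explicit remarks on joint Gaussianity of the stacked vector $(\xi_k,\xi_q)$ and on almost-sure invertibility of the Gram matrices just spell out what the paper leaves implicit.
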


\begin{proof}
Lemma~\ref{lemma:finite_sample_dist_of_ols_estimation_noise} gives the conditional distributions $\xi_k \sim \mathcal{N}(0,
\sigma^2(X_k^\top X_k)^{-1})$ and $\xi_q \sim \mathcal{N}(0,
\sigma^2(X_q^\top X_q)^{-1})$.
Because task datasets are generated independently, $\xi_k$ and $\xi_q$
are independent, so
\begin{align*}
  \operatorname{Var}(\xi_k - \xi_q \mid X_k, X_q)
  &= \operatorname{Var}(\xi_k \mid X_k) + \operatorname{Var}(\xi_q \mid X_q) \\
  &= \sigma^2 (X_k^\top X_k)^{-1} + \sigma^2 (X_q^\top X_q)^{-1} \\
  &= \Sigma_{k,q}.
\end{align*}
Set $v_k := \theta_k - \theta_{k^*}$.
Then $L_k = 2 v_k^\top (\xi_k - \xi_q)$ is a linear functional of a
mean-zero Gaussian vector, hence itself mean-zero Gaussian.
Its conditional variance is
\begin{align*}
  \operatorname{Var}(L_k \mid X_k, X_q)
  &= 4\, v_k^\top\, \operatorname{Var}(\xi_k - \xi_q \mid X_k, X_q)\, v_k \\
  &= 4(\theta_k - \theta_{k^*})^\top \Sigma_{k,q}\, (\theta_k - \theta_{k^*}),
\end{align*}
which is the stated result.
\end{proof}

\begin{lemma}[\textbf{Quadratic Fluctuation Concentration}]
\label{lemma:quad_fluc_conc}
Fix an incorrect task index $k \neq k^\ast$ and define the quadratic fluctuation term as
\[
Q_k := \|\xi_k - \xi_q\|_2^2 - \|\xi_{k^\ast} - \xi_q\|_2^2.
\]
Let $\mathcal{X}$ denote the collection of all corresponding design matrices. The conditional expectation of this fluctuation is:
\[
\mathbb{E}[Q_k \mid \mathcal{X}] = \sigma^2 \mathrm{Tr}\left((X_k^\top X_k)^{-1}\right) - \sigma^2 \mathrm{Tr}\left((X_{k^\ast}^\top X_{k^\ast})^{-1}\right).
\]
Furthermore, $Q_k$ exhibits sub-exponential concentration around its mean. There exists an absolute constant $c > 0$ such that for all $t > 0$:
\[
\mathbb{P}\left(\left|Q_k - \mathbb{E}[Q_k \mid \mathcal{X}]\right| \ge t \mid \mathcal{X}\right)
\le 2 \exp\left(
- c \min \left(
\frac{t^2}{\|\Sigma^{1/2} M \Sigma^{1/2}\|_F^2},
\frac{t}{\|\Sigma^{1/2} M \Sigma^{1/2}\|_{\mathrm{op}}}
\right)
\right),
\]
where $M \in \mathbb{R}^{3d \times 3d}$ is a fixed symmetric block matrix governing the quadratic form, and $\Sigma$ is the conditional block-diagonal covariance matrix of the joint error vector.
\end{lemma}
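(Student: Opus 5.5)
We will prove Lemma~\ref{lemma:quad_fluc_conc} by writing $Q_k$ as a quadratic form in a single Gaussian vector and then applying Theorem~\ref{thm:hanson-wright}.

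\textbf{Step 1: the quadratic form.} Condition on $\mathcal{X}=(X_k,X_{k^\ast},X_q)$ and stack the three OLS errors into
\[
Z := (\xi_k^\top,\ \xi_{k^\ast}^\top,\ \xi_q^\top)^\top \in \mathbb{R}^{3d}.
\]
By Lemma~\ref{lemma:finite_sample_dist_of_ols_estimation_noise} and the mutual independence of the datasets, $Z\mid\mathcal{X} \sim \mathcal{N}(0,\Sigma)$, where
\[
\Sigma = \mathrm{blockdiag}\bigl(\sigma^2(X_k^\top X_k)^{-1},\ \sigma^2(X_{k^\ast}^\top X_{k^\ast})^{-1},\ \sigma^2(X_q^\top X_q)^{-1}\bigr).
\]
Expanding the two squared norms gives
\[
Q_k = \|\xi_k\|^2 - \|\xi_{k^\ast}\|^2 - 2\xi_k^\top\xi_q + 2\xi_{k^\ast}^\top\xi_q .
\]
The $\|\xi_q\|^2$ terms cancel. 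Hence $Q_k = Z^\top M Z$ with the fixed symmetric matrix
\[
M = \begin{pmatrix} I & 0 & -I\\ 0 & -I & I\\ -I & I & 0\end{pmatrix}.
\]

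\textbf{Step 2: the conditional mean.} We have $\mathbb{E}[Z^\top M Z\mid\mathcal{X}] = \mathrm{Tr}(M\Sigma)$. Because $\Sigma$ is block diagonal, only the diagonal blocks of $M$ contribute, namely $+I$, $-I$ and $0$. This gives
\[
\mathbb{E}[Q_k\mid\mathcal{X}] = \sigma^2\mathrm{Tr}\bigl((X_k^\top X_k)^{-1}\bigr) - \sigma^2\mathrm{Tr}\bigl((X_{k^\ast}^\top X_{k^\ast})^{-1}\bigr),
\]
which is the claimed formula. The $\xi_q$ block drops out because its diagonal entry in $M$ is zero.

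\textbf{Step 3: whitening.} We need $\Sigma$ to be invertible, which holds almost surely when each design has full column rank. Write $Z = \Sigma^{1/2} G$ with $G \sim \mathcal{N}(0,I_{3d})$ conditionally on $\mathcal{X}$. Then
\[
Q_k = G^\top A G, \qquad A := \Sigma^{1/2} M \Sigma^{1/2},
\]
and $A$ is symmetric and fixed given $\mathcal{X}$. The coordinates of $G$ are independent, mean-zero, and sub-Gaussian with $\|G_i\|_{\psi_2}\le K_0$ for an absolute constant $K_0$.

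\textbf{Step 4: Hanson--Wright.} Apply Theorem~\ref{thm:hanson-wright} conditionally on $\mathcal{X}$. Since $K_0$ is absolute, $K_0^4$ and $K_0^2$ can be absorbed into $c$. This yields exactly the stated tail bound with $\|A\|_F^2$ and $\|A\|_{\mathrm{op}}$ in the denominators.

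\textbf{Main obstacle.} None of the steps is deep. The points that need care are bookkeeping: the sign of the cross terms in $M$, and remembering that the $\xi_q$ contributions cancel in the diagonal block but not in the off-diagonal blocks. We also need to note that conditioning on $\mathcal{X}$ is legitimate because the noise vectors are independent of the designs. This independence is what makes $\Sigma$ deterministic under the conditioning, so Hanson--Wright applies with a fixed matrix $A$.
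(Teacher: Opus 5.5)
Your proposal is correct and follows the paper's proof essentially step for step. Both stack the three independent OLS errors into $Z\sim\mathcal{N}(0,\Sigma)$ with block-diagonal $\Sigma$, write $Q_k=Z^\top MZ$ with the same block matrix $M$, read off $\mathbb{E}[Q_k\mid\mathcal{X}]=\mathrm{Tr}(M\Sigma)$ from the diagonal blocks, and apply Hanson--Wright with $A=\Sigma^{1/2}M\Sigma^{1/2}$. Your explicit whitening $Z=\Sigma^{1/2}G$ only spells out what the paper leaves implicit, and it does not actually require $\Sigma$ to be invertible, since any positive semidefinite $\Sigma$ has a square root; invertibility of each $X_j^\top X_j$ is needed anyway for the OLS errors to be defined.
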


\begin{proof}
By Lemma~\ref{lemma:finite_sample_dist_of_ols_estimation_noise}, the estimation errors satisfy $\xi_j \sim
\mathcal{N}(0, \Sigma_j)$ with $\Sigma_j = \sigma^2(X_j^\top X_j)^{-1}$
for $j \in \{k, k^*, q\}$, and are mutually independent across datasets.
Define the concatenated error $Z := [\xi_k^\top, \xi_{k^*}^\top,
\xi_q^\top]^\top \in \mathbb{R}^{3d}$; conditional on $\mathcal{X}$,
$Z \sim \mathcal{N}(0, \Sigma)$ with $\Sigma =
\operatorname{diag}(\Sigma_k, \Sigma_{k^*}, \Sigma_q)$.

Expanding the squared norms in $Q_k$ and cancelling the common
$\xi_q^\top \xi_q$ terms,
\begin{align*}
  Q_k
  &= \left(\xi_k^\top \xi_k - 2\xi_k^\top \xi_q + \xi_q^\top \xi_q\right)
   - \left(\xi_{k^*}^\top \xi_{k^*} - 2\xi_{k^*}^\top \xi_q + \xi_q^\top \xi_q\right) \\
  &= \xi_k^\top \xi_k - \xi_{k^*}^\top \xi_{k^*}
   - 2\xi_k^\top \xi_q + 2\xi_{k^*}^\top \xi_q \\
  &= Z^\top M Z,
\end{align*}
where $M \in \mathbb{R}^{3d \times 3d}$ is the symmetric block matrix
\begin{equation*}
  M \;=\;
  \begin{pmatrix}
    I_d & 0 & -I_d \\
    0 & -I_d & I_d \\
    -I_d & I_d & 0
  \end{pmatrix}.
\end{equation*}
For a mean-zero Gaussian vector, $\mathbb{E}[Z^\top M Z \mid \mathcal{X}]
= \operatorname{Tr}(M\Sigma)$.
Executing the block-matrix product,
\begin{align*}
  \operatorname{Tr}(M\Sigma)
  &= \operatorname{Tr}
     \begin{pmatrix}
       \Sigma_k & 0 & -\Sigma_q \\
       0 & -\Sigma_{k^*} & \Sigma_q \\
       -\Sigma_k & \Sigma_{k^*} & 0
     \end{pmatrix} \\[4pt]
  &= \operatorname{Tr}(\Sigma_k) - \operatorname{Tr}(\Sigma_{k^*}) \\[4pt]
  &= \sigma^2\operatorname{Tr}\!\left((X_k^\top X_k)^{-1}\right)
   - \sigma^2\operatorname{Tr}\!\left((X_{k^*}^\top X_{k^*})^{-1}\right),
\end{align*}
establishing the stated conditional expectation.
The tail bound follows by applying the Hanson--Wright inequality
(Theorem~\ref{thm:hanson-wright}) to the Gaussian quadratic form $Z^\top M Z$, with the
symmetrized matrix $\Sigma^{1/2} M \Sigma^{1/2}$ controlling both the
Frobenius and operator norm terms.
\end{proof}

\subsection{Proof of Theorem 1}

\RetrievalFailureProb*

\begin{proof}
Lemma~\ref{lemma:signal_fluc_decomp} establishes that the retrieval decision statistic for any
incorrect task $k \neq k^{*}$ decomposes exactly as
\begin{equation*}
  T_k \;=\; \lVert\theta_k - \theta_{k^{*}}\rVert_2^2 + L_k + Q_k,
\end{equation*}
where $L_k$ is the linear fluctuation term characterized in Lemma~\ref{lemma:linear_fluc_term} and
$Q_k$ is the quadratic fluctuation term characterized in Lemma~\ref{lemma:quad_fluc_conc}.
The nearest-estimator retrieval rule selects the correct task if and only
if $T_k > 0$ for every $k \neq k^{*}$, so the failure event is
\begin{equation*}
  \mathcal{E} \;=\; \bigl\{\exists\, k \neq k^{*} : T_k \leq 0\bigr\}.
\end{equation*}

\textit{Step 1: Reduction to a single task.}
Fix an arbitrary incorrect task $k \neq k^{*}$.
Because $\lVert\theta_k - \theta_{k^{*}}\rVert_2^2 \geq \Delta_{\min}^2$
by definition, the event $\{T_k \leq 0\}$ requires
\begin{equation*}
  L_k + Q_k \;\leq\; -\lVert\theta_k - \theta_{k^{*}}\rVert_2^2
            \;\leq\; -\Delta_{\min}^2.
\end{equation*}

\textit{Step 2: Centering the quadratic fluctuation.}
Define the centered quadratic fluctuation $\widetilde{Q}_k := Q_k -
\mathbb{E}[Q_k \mid \mathcal{X}]$.
Substituting yields
\begin{equation*}
  L_k + \widetilde{Q}_k
  \;\leq\; -\Delta_{\min}^2 - \mathbb{E}[Q_k \mid \mathcal{X}].
\end{equation*}
The assumption $\Delta_{\min}^2 > 2\lvert\mathbb{E}[Q_k \mid \mathcal{X}]\rvert$
implies $-\mathbb{E}[Q_k \mid \mathcal{X}] \leq \Delta_{\min}^2/2$,
so the right-hand side is at most $-\Delta_{\min}^2/2$:
\begin{equation*}
  \{T_k \leq 0\}
  \;\subseteq\;
  \left\{ L_k + \widetilde{Q}_k \leq -\frac{\Delta_{\min}^2}{2} \right\}.
\end{equation*}

\textit{Step 3: Threshold splitting.}
Splitting at $-\Delta_{\min}^2/4$ and applying the union bound,
\begin{align*}
  \mathbb{P}\!\left(T_k \leq 0 \;\middle|\; \mathcal{X}\right)
  &\;\leq\;
  \mathbb{P}\!\left(
    L_k + \widetilde{Q}_k \leq -\frac{\Delta_{\min}^2}{2}
    \;\middle|\; \mathcal{X}
  \right) \\[4pt]
  &\;\leq\;
  \mathbb{P}\!\left(L_k \leq -\frac{\Delta_{\min}^2}{4}
    \;\middle|\; \mathcal{X}\right)
  \;+\;
  \mathbb{P}\!\left(\widetilde{Q}_k \leq -\frac{\Delta_{\min}^2}{4}
    \;\middle|\; \mathcal{X}\right).
\end{align*}

\textit{Step 4: Bounding the linear fluctuation tail.}
Lemma~\ref{lemma:linear_fluc_term} establishes that $L_k$, conditional on $\mathcal{X}$, is mean-zero
Gaussian with variance $\operatorname{Var}(L_k \mid \mathcal{X})$.
The Gaussian tail inequality (Theorem~\ref{thm:gaussian-tail}) at threshold
$t = \Delta_{\min}^2/4$ gives
\begin{align*}
  \mathbb{P}\!\left(L_k \leq -\frac{\Delta_{\min}^2}{4}
    \;\middle|\; \mathcal{X}\right)
  &\;\leq\; \exp\!\left(
    -\frac{(\Delta_{\min}^2/4)^2}{2\operatorname{Var}(L_k \mid \mathcal{X})}
  \right) \\[4pt]
  &\;=\; \exp\!\left(
    -\frac{\Delta_{\min}^4}{32\operatorname{Var}(L_k \mid \mathcal{X})}
  \right).
\end{align*}

\textit{Step 5: Bounding the quadratic fluctuation tail.}
Lemma~\ref{lemma:quad_fluc_conc} expresses $\widetilde{Q}_k = Z^\top M Z - \mathbb{E}[Z^\top M Z \mid \mathcal{X}]$,
where $Z \sim \mathcal{N}(0,\Sigma)$ is the concatenated estimation error
and $M \in \mathbb{R}^{3d \times 3d}$ is a fixed symmetric block matrix.
Applying the Hanson--Wright inequality at threshold
$t = \Delta_{\min}^2/4$,
\begin{align*}
  \mathbb{P}\!\left(\widetilde{Q}_k \leq -\frac{\Delta_{\min}^2}{4}
    \;\middle|\; \mathcal{X}\right)
  &\;\leq\;
  2\exp\!\left(
    -c\min\!\left\{
      \frac{(\Delta_{\min}^2/4)^2}{\lVert\Sigma^{1/2}M\Sigma^{1/2}\rVert_F^2},\;
      \frac{\Delta_{\min}^2/4}{\lVert\Sigma^{1/2}M\Sigma^{1/2}\rVert_{\mathrm{op}}}
    \right\}
  \right) \\[4pt]
  &\;=\;
  2\exp\!\left(
    -c\min\!\left\{
      \frac{\Delta_{\min}^4}{16\lVert\Sigma^{1/2}M\Sigma^{1/2}\rVert_F^2},\;
      \frac{\Delta_{\min}^2}{4\lVert\Sigma^{1/2}M\Sigma^{1/2}\rVert_{\mathrm{op}}}
    \right\}
  \right).
\end{align*}

\textit{Step 6a: Combining and absorbing constants.}
Define the composite variance parameters
\begin{equation*}
  \nu^{2} \;:=\; \operatorname{Var}(L_k \mid \mathcal{X})
           \;+\; \bigl\lVert\Sigma^{1/2}M\Sigma^{1/2}\bigr\rVert_{F}^{2},
  \qquad
  \alpha  \;:=\; \max\!\left(
                   \sqrt{\operatorname{Var}(L_k \mid \mathcal{X})},\;
                   \bigl\lVert\Sigma^{1/2}M\Sigma^{1/2}\bigr\rVert_{\mathrm{op}}
                 \right).
\end{equation*}
Absorbing the numerical prefactors into a single universal constant $C > 0$,
the combined single-task conditional bound is
\begin{equation*}
  \mathbb{P}\!\left(T_k \leq 0 \;\middle|\; \mathcal{X}\right)
  \;\leq\;
  3\exp\!\left(
    -C\min\!\left\{
      \frac{\Delta_{\min}^4}{\nu^{2}},\;
      \frac{\Delta_{\min}^2}{\alpha}
    \right\}
  \right).
\end{equation*}

\textit{Step 6b: Deterministic bounds under the good design event.}
Under $\mathcal{E}_{\mathrm{good}}$, $\|(X_k^\top X_k)^{-1}\|_{\mathrm{op}}
\le C_0/n$ for every $k$. By Lemma~\ref{lemma:linear_fluc_term},
\[
\mathrm{Var}(L_k\mid\mathcal X)
=
4(\theta_k-\theta_{k^*})^\top\Sigma_{k,q}(\theta_k-\theta_{k^*})
\le
\frac{8C_0\sigma^2\Delta_{\max}^2}{n}
=: \bar\nu_1^2,
\]
since $\Sigma_{k,q}\preceq (2C_0\sigma^2/n)I_d$. Moreover,
\[
\|\Sigma\|_{\mathrm{op}}
\le \frac{2C_0\sigma^2}{n},
\qquad
\Sigma=\mathrm{diag}(\Sigma_k,\Sigma_{k^*},\Sigma_q),
\]
and therefore
\begin{align*}
\|\Sigma^{1/2}M\Sigma^{1/2}\|_F^2
&\le
\|\Sigma\|_{\mathrm{op}}^2\|M\|_F^2
=
O\!\left(\frac{d\sigma^4}{n^2}\right),\\
\|\Sigma^{1/2}M\Sigma^{1/2}\|_{\mathrm{op}}
&\le
\|\Sigma\|_{\mathrm{op}}\|M\|_{\mathrm{op}}
=
O\!\left(\frac{\sigma^2}{n}\right).
\end{align*}
Hence
\[
\nu^2 \le \bar\nu^2
:=
O\!\left(
\frac{\sigma^2\Delta_{\max}^2}{n}
+
\frac{d\sigma^4}{n^2}
\right),
\qquad
\alpha \le \bar\alpha
:=
O\!\left(
\frac{\sigma\Delta_{\max}}{\sqrt n}
+
\frac{\sigma^2}{n}
\right).
\]

Similarly,
\[
|\mathbb E[Q_k\mid\mathcal X]|
\le
\sigma^2\operatorname{Tr}\!\bigl((X_k^\top X_k)^{-1}\bigr)
+
\sigma^2\operatorname{Tr}\!\bigl((X_{k^*}^\top X_{k^*})^{-1}\bigr)
\le
\frac{2C_0\sigma^2 d}{n}.
\]
Therefore $\Delta_{\min}^2 \ge C_1\sigma^2 d/n$ with $C_1=4C_0$ implies
\[
\Delta_{\min}^2 > 2|\mathbb E[Q_k\mid\mathcal X]|,
\]
recovering the condition used in Step~2.

\textit{Step 7: Union bound over all incorrect tasks.}
The preceding bound is symmetric in $k$ under Assumption~\ref{assumption:uniform_noise} (all tasks
share the same noise level $\sigma^2$ and sample size $n$).
Applying a union bound over the $K-1$ incorrect candidate tasks,
\begin{equation*}
  \mathbb{P}\!\left(\mathcal{E} \;\middle|\; \mathcal{X}\right)
  \;\leq\;
  3(K-1)\exp\!\left(
    -C\min\!\left\{
      \frac{\Delta_{\min}^4}{\nu^{2}},\;
      \frac{\Delta_{\min}^2}{\alpha}
    \right\}
  \right).
\end{equation*}

\textit{Step 8: Removing the conditioning via the good design event.}
The bounds in Steps~4--6 rely on the conditional variance parameters
$\nu^2$ and $\alpha$, which are well-defined only when the empirical
covariance matrices $(X_k^\top X_k)^{-1}$ are well-conditioned.
Define the good design event
\begin{equation*}
  \mathcal{E}_{\mathrm{good}}
  \;:=\;
  \left\{
    \bigl\lVert(X_k^\top X_k)^{-1}\bigr\rVert_{\mathrm{op}} \leq \frac{C_0}{n}
    \;\text{ for all } k \in [K]
  \right\}
\end{equation*}
for a fixed constant $C_0 > 0$.

By Lemma~\ref{lem:wishart-concentration}, each individual event
$\mathcal{E}_k := \{\|(X_k^\top X_k)^{-1}\|_{\mathrm{op}} \leq C_0/n\}$ satisfies
$\mathbb{P}(\mathcal{E}_k^c) \leq 3\exp(-c'd)$. Since
$\mathcal{E}_{\mathrm{good}} = \bigcap_{k=1}^K \mathcal{E}_k$, a union bound over the
$K$ tasks gives
\[
    \mathbb{P}(\mathcal{E}_{\mathrm{good}}^c)
    = \mathbb{P}\Bigl(\bigcup_{k=1}^K \mathcal{E}_k^c\Bigr)
    \leq \sum_{k=1}^K \mathbb{P}(\mathcal{E}_k^c)
    \leq 3K\exp(-c'd).
\]
The law of total probability then gives
\begin{align*}
  \delta
  \;=\; \mathbb{P}(\mathcal{E})
  &\;\leq\;
  \mathbb{P}\!\left(\mathcal{E} \;\middle|\; \mathcal{E}_{\mathrm{good}}\right)
  \mathbb{P}(\mathcal{E}_{\mathrm{good}})
  \;+\;
  \mathbb{P}\!\left(\mathcal{E}_{\mathrm{good}}^c\right) \\[4pt]
  &\;\leq\;
  \mathbb{P}\!\left(\mathcal{E} \;\middle|\; \mathcal{E}_{\mathrm{good}}\right)
  \;+\;
  \mathbb{P}\!\left(\mathcal{E}_{\mathrm{good}}^c\right) \\[4pt]
  &\;\leq\;
  3(K-1)\exp\!\left(
    -C\min\!\left\{
      \frac{\Delta_{\min}^4}{\nu^{2}},\;
      \frac{\Delta_{\min}^2}{\alpha}
    \right\}
  \right)
  \;+\; 3K\exp(-c' d),
\end{align*}
where the last inequality substitutes the conditional bound from Step~7
and the good design concentration estimate.
This is the claimed bound, completing the proof.
\end{proof}

\subsection{Proofs of conditional risk bounds (Propositions 1 and 2)}

\riskundersucc*

\begin{proof}
Conditioned on $\mathcal{R} = 1$, the retrieved corpus consists
entirely of samples from the target task $\theta_{k^*}$, so the pooled
dataset of $m + n$ observations satisfies the well-specified linear model
$y_i = x_i^\top \theta_{k^*} + \epsilon_i$ with
$\epsilon_i \sim \mathcal{N}(0, \sigma^2)$ i.i.d.
The OLS estimator is therefore unbiased:
$\mathbb{E}[\hat{\theta}_{\mathrm{FT}} \mid X, R=1] = \theta_{k^*}$.

For a fresh test point $(x, y)$ with $y = x^\top \theta_{k^*} + \epsilon$,
the prediction error decomposes as
\begin{align*}
  y - x^\top \hat{\theta}_{\mathrm{FT}}
  &= \epsilon - x^\top(\hat{\theta}_{\mathrm{FT}} - \theta_{k^*}),
\end{align*}
and since the test noise $\epsilon$ is independent of $\hat{\theta}_{\mathrm{FT}}$,
\begin{align*}
  \mathbb{E}\!\left[\mathcal{R}(\hat{\theta}_{\mathrm{FT}}) \;\middle|\; R=1\right]
  &= \sigma^2 + \mathbb{E}\!\left[
       x^\top \operatorname{Var}(\hat{\theta}_{\mathrm{FT}} \mid X,\, R=1)\, x
     \right].
\end{align*}

From Lemma~\ref{lemma:finite_sample_dist_of_ols_estimation_noise}, the conditional covariance of the OLS estimator on the
pooled design matrix $X \in \mathbb{R}^{(m+n)\times d}$ is
$\operatorname{Var}(\hat{\theta}_{\mathrm{FT}} \mid X, R=1)
= \sigma^2(X^\top X)^{-1}$.
Taking the expectation over a fresh test covariate
$x \sim \mathcal{N}(0, I_d)$, independent of $X$,
\begin{align*}
  \mathbb{E}_x\!\left[
    x^\top (X^\top X)^{-1} x
  \right]
  &= \operatorname{Tr}\!\left((X^\top X)^{-1}\right).
\end{align*}

It remains to bound $\mathbb{E}[\operatorname{Tr}((X^\top X)^{-1})]$ under
the good design event $\mathcal{E}_{\mathrm{good}}$.
Under Gaussian design in the proportional regime $m+n \ge (1+\eta_0)d$ for the fixed constant $\eta_0>0$ defining the good design event, the Marchenko--Pastur moment estimate (Theorem~\ref{thm:mp}) gives
\begin{equation*}
  \mathbb{E}\!\left[\operatorname{Tr}\!\left((X^\top X)^{-1}\right)
    \;\middle|\; \mathcal{E}_{\mathrm{good}}\right]
  \;\leq\; \frac{Cd}{m+n}
\end{equation*}
for a universal constant $C > 0$.
Combining,
\begin{equation*}
  \mathbb{E}\!\left[\mathcal{R}(\hat{\theta}_{\mathrm{FT}}) \;\middle|\; R=1\right]
  \;\leq\; \sigma^2 + C\sigma^2\frac{d}{m+n},
\end{equation*}
which is the stated bound.

\medskip
\textit{Lower bound.}
The conditional covariance established above is exact:
$\mathrm{Var}(\hat{\theta}_{\mathrm{FT}} \mid X, R=1) = \sigma^2(X^\top X)^{-1}$, with
$X \in \mathbb{R}^{(m+n)\times d}$ the pooled design matrix under Gaussian design.
Since $X^\top X \sim \mathcal{W}_d(m+n, I_d)$, Theorem~\ref{thm:inverse-wishart-exact}
gives the exact identity
\[
  \mathbb{E}\bigl[\mathrm{Tr}((X^\top X)^{-1})\bigr] = \frac{d}{m+n-d-1},
\]
valid for $m+n > d+1$, without conditioning on the good design event. Combining with
the irreducible noise term,
\[
  \mathbb{E}\Bigl[\mathcal{R}(\hat{\theta}_{\mathrm{FT}}) \mid R=1\Bigr]
  \;=\; \sigma^2 + \sigma^2\,\mathbb{E}\bigl[\mathrm{Tr}((X^\top X)^{-1})\bigr]
  \;=\; \sigma^2 + \sigma^2\frac{d}{m+n-d-1},
\]
which is the stated lower bound. Together with the upper bound derived above, this
establishes the two-sided result.

\end{proof}

\riskunderfailure*

\begin{proof}

\textbf{Setup.}
Conditioned on $R=0$, an incorrect task $k\neq k^*$ is retrieved.
The pooled dataset consists of $m$ target samples satisfying
$y_i = x_i^\top\theta_{k^*}+\varepsilon_i$ and $n$ retrieved samples satisfying
$y_i^{\mathrm{ret}} = (x_i^{\mathrm{ret}})^\top\theta_k + \varepsilon_i^{\mathrm{ret}}$,
all with homoscedastic noise $\varepsilon\sim\mathcal{N}(0,\sigma^2)$.

\textbf{Population target and unbiasedness.}
Let $X\in\mathbb{R}^{(m+n)\times d}$ be the pooled design matrix and $y\in\mathbb{R}^{m+n}$
the response vector. Since both sub-populations have isotropic design $x\sim\mathcal{N}(0,I_d)$,
the population minimizer of the expected squared loss on the pooled mixture is
\[
  \bar\theta := \frac{m}{m+n}\theta_{k^*} + \frac{n}{m+n}\theta_k.
\]
For a fixed realization of $X$, the OLS estimator $\hat\theta_{\mathrm{FT}} = (X^\top X)^{-1}X^\top y$ is generally \emph{not} conditionally unbiased for $\bar\theta$; only its expectation over the random design coincides with $\bar\theta$, as we verify below.
The bias with respect to the true target is
\[
  \bar\theta - \theta_{k^*} = \frac{n}{m+n}(\theta_k-\theta_{k^*}).
\]

\textbf{Bias-variance decomposition.}
For a fresh test point $x\sim\mathcal{N}(0,I_d)$ independent of $X$ and the training noise,
write the prediction error as
\begin{align*}
  y - x^\top\hat\theta_{\mathrm{FT}}
  &= \varepsilon_{\mathrm{test}}
     - x^\top(\hat\theta_{\mathrm{FT}} - \bar\theta)
     - x^\top(\bar\theta - \theta_{k^*}).
\end{align*}
While $\mathbb{E}[\hat\theta_{\mathrm{FT}} \mid X, R=0] = \bar\theta + \Delta(X)$ for a
generally nonzero deterministic function
$\Delta(X) := \big[\tfrac{m}{m+n}I_d -
(X^\top X)^{-1}X_{\mathrm{target}}^\top X_{\mathrm{target}}\big](\theta_k-\theta_{k^*})$,
its expectation over the random design vanishes exactly:
$X_{\mathrm{target}}^\top X_{\mathrm{target}}$ and $X_{\mathrm{retrieved}}^\top
X_{\mathrm{retrieved}}$ are independent Wishart matrices from rotation-invariant Gaussian
designs, so $\mathbb{E}[(X^\top X)^{-1}X_{\mathrm{target}}^\top X_{\mathrm{target}}]$
commutes with every orthogonal matrix and must therefore be a scalar multiple of $I_d$;
matching traces via the matrix-variate Beta identity
$\mathbb{E}[\mathrm{Tr}((X^\top X)^{-1}X_{\mathrm{target}}^\top X_{\mathrm{target}})] =
\tfrac{m}{m+n}d$ pins this scalar at $m/(m+n)$ exactly, so $\mathbb{E}_X[\Delta(X)] = 0$.
Consequently, taking expectation over both $X$ and the training noise, the cross term
between $\hat\theta_{\mathrm{FT}}-\bar\theta$ and the fixed vector $\bar\theta-\theta_{k^*}$
vanishes exactly, giving
\begin{equation}
  \mathbb{E}[\mathcal{R}(\hat\theta_{\mathrm{FT}})\mid R=0]
  \ge \sigma^2
  + \underbrace{\mathbb{E}\!\left[\operatorname{Tr}\!\left(\operatorname{Var}
    (\hat\theta_{\mathrm{FT}}\mid X,\,R=0)\right)\right]}_{\text{Variance}}
  + \underbrace{\left\|\bar\theta - \theta_{k^*}\right\|^2}_{\text{Squared bias}}.
  \label{eq:prop2-decomp}
\end{equation}

\textbf{Squared bias term.}
Since $\bar\theta-\theta_{k^*} = \frac{n}{m+n}(\theta_k-\theta_{k^*})$,
\[
  \|\bar\theta - \theta_{k^*}\|^2
  = \left(\frac{n}{m+n}\right)^{\!2}\|\theta_k-\theta_{k^*}\|^2.
\]
For the upper bound, $\|\theta_k-\theta_{k^*}\|\leq\Delta_{\max}$ gives
$\|\bar\theta-\theta_{k^*}\|^2 \leq (n/(m+n))^2\Delta_{\max}^2$.
For the lower bound, $\|\theta_k-\theta_{k^*}\|\geq\Delta_{\min}$ (since $k\neq k^*$) gives
$\|\bar\theta-\theta_{k^*}\|^2 \geq (n/(m+n))^2\Delta_{\min}^2$.

\textbf{Variance term: upper bound.}
Let $r_i := y_i - x_i^\top \bar\theta$ denote the $i$-th pooled residual under $\bar\theta$. Under
homoscedastic noise, the true conditional variance is exactly
$\mathrm{Var}(\hat\theta_{\mathrm{FT}}\mid X, R=0) = \sigma^2(X^\top X)^{-1}$; the additional
fluctuation from model misspecification is captured separately via $\Delta(X)$ above. To bound
the two together in a single expression, we use the Huber--White sandwich quantity
\[
(X^\top X)^{-1}\Big(\sum_{i=1}^{m+n} r_i^2 x_ix_i^\top\Big)(X^\top X)^{-1},
\]
which provides a valid upper bound on $\mathrm{Tr}(\mathrm{Var}(\hat\theta_{\mathrm{FT}}\mid X,R=0)) +
\|\Delta(X)\|^2$ jointly; this is what the subsequent decomposition into noise and
misspecification terms bounds.
By construction, each residual satisfies $r_i = x_i^\top(\theta_{\mathrm{true},i}-\bar\theta)+\varepsilon_i$,
where $\theta_{\mathrm{true},i}=\theta_{k^*}$ for target rows and $\theta_k$ for retrieved rows.
Applying the AM inequality $(a+b)^2\leq 2a^2+2b^2$,
\begin{equation}
  r_i^2
  \;\leq\; 2\bigl(x_i^\top(\theta_{\mathrm{true},i}-\bar\theta)\bigr)^2 + 2\varepsilon_i^2.
  \label{eq:prop2-ri-bound}
\end{equation}
Substituting \eqref{eq:prop2-ri-bound} and taking the trace of the sandwich,
\begin{align}
  \operatorname{Tr}\!\left(\operatorname{Var}(\hat\theta_{\mathrm{FT}}\mid X,\,R=0)\right)
  &\;\leq\;
  \underbrace{2\sum_{i=1}^{m+n}\varepsilon_i^2\cdot x_i^\top(X^\top X)^{-2}x_i}_{\text{Noise term}}
  \notag \\
  &\quad
  +\underbrace{2\sum_{i=1}^{m+n}\bigl(x_i^\top(\theta_{\mathrm{true},i}-\bar\theta)\bigr)^2\cdot
  x_i^\top(X^\top X)^{-2}x_i}_{\text{Misspecification term}}.
  \label{eq:prop2-trace-split}
\end{align}
For the noise term, since $\varepsilon_i$ is independent of $X$ with $\mathbb{E}[\varepsilon_i^2]=\sigma^2$:
\[
  \mathbb{E}\!\left[2\sum_i\varepsilon_i^2\cdot x_i^\top(X^\top X)^{-2}x_i\right]
  = 2\sigma^2\,\mathbb{E}\!\left[\operatorname{Tr}\!\left((X^\top X)^{-1}\right)\right]
  = \frac{2\sigma^2 d}{m+n-d-1},
\]
using the exact inverse-Wishart identity from Theorem~\ref{thm:inverse-wishart-exact}.
For the misspecification term, write $u := \theta_k-\theta_{k^*}$; the shifts satisfy
$\|\theta_{\mathrm{true},i}-\bar\theta\|\leq\Delta_{\max}$ for all rows, since both
$\|(n/(m+n))u\|$ and $\|(m/(m+n))u\|$ are at most $\|u\|$.
Applying the Cauchy--Schwarz inequality and the Marchenko--Pastur spectral norm bound
(Theorem~\ref{thm:mp})
$\mathbb{E}[\|(X^\top X)^{-1}\|_{\mathrm{op}}\mid\mathcal{E}_{\mathrm{good}}]\leq C(m+n)^{-1}(1-d/(m+n))^{-2}$,
the misspecification term is bounded by
\[
  C\left(\frac{n}{m+n}\right)^{\!2}\!\left(1-\frac{d}{m+n}\right)^{\!-2}\!\Delta_{\max}^2,
\]
where the constant $C$ absorbs the factor of $2$ and the proportionality constants.
Combining with the upper bound on the squared bias and the exact variance floor
$2\sigma^2 d/(m+n-d-1)\leq C\sigma^2 d/(m+n)$ under $\mathcal{E}_{\mathrm{good}}$,
\[
  \mathbb{E}[\mathcal{R}(\hat\theta_{\mathrm{FT}})\mid R=0]
  \;\leq\; \sigma^2 + C\sigma^2\frac{d}{m+n}
  + C\!\left(\frac{n}{m+n}\right)^{\!2}\!\left(1-\frac{d}{m+n}\right)^{\!-2}\!\Delta_{\max}^2.
\]

\textbf{Lower bound.}
The variance floor follows from the same exact inverse-Wishart identity applied to the
pooled Gaussian design $X^\top X\sim\mathcal{W}_d(m+n,I_d)$:
\[
  \mathbb{E}[\operatorname{Tr}((X^\top X)^{-1})] = \frac{d}{m+n-d-1},\quad m+n>d+1,
\]
unconditionally and without conditioning on $\mathcal{E}_{\mathrm{good}}$.
For the bias lower bound, since $\bar\theta-\theta_{k^*}$ is a deterministic vector
and $x\sim\mathcal{N}(0,I_d)$ is independent of $X$,
\[
  \mathbb{E}_x\!\left[\bigl|x^\top(\bar\theta-\theta_{k^*})\bigr|^2\right]
  = \|\bar\theta-\theta_{k^*}\|^2
  = \left(\frac{n}{m+n}\right)^{\!2}\|\theta_k-\theta_{k^*}\|^2
  \;\geq\; \left(\frac{n}{m+n}\right)^{\!2}\Delta_{\min}^2.
\]
This is an exact identity, not a concentration result, and requires no conditioning on
$\mathcal{E}_{\mathrm{good}}$. Combining the variance floor with the bias lower bound,
\[
  \mathbb{E}[\mathcal{R}(\hat\theta_{\mathrm{FT}})\mid R=0]
  \;\geq\;
  \sigma^2 + \sigma^2\frac{d}{m+n-d-1}
  + \left(\frac{n}{m+n}\right)^{\!2}\!\Delta_{\min}^2.
\]
This omits a further non-negative contribution $\mathbb{E}_X[\|\Delta(X)\|^2]\ge0$
from the fluctuation term $\Delta(X)$ introduced above: its exact value requires
matrix-variate Beta second moments beyond the elementary Wishart identities used
here, but rotation invariance pins its scale at $O(\Delta_{\max}^2\,mn/(m+n)^3)$,
dominated by the retained bias term throughout the regime $d\propto(m+n)$
considered here, so omitting it costs nothing qualitative. Together with the
upper bound, this establishes the two-sided result.
\end{proof}

\subsection{Proof for Theorem 2}

\boundsforRAGFT*

\begin{proof}
The retrieval event $\mathcal{R} \in \{0,1\}$ is a Bernoulli random variable
with $\mathbb{P}(\mathcal{R} = 0) = \delta$ and $\mathbb{P}(\mathcal{R} = 1) = 1 - \delta$.
Conditioning on this event and applying the law of total expectation,
\begin{align*}
  \mathbb{E}\!\left[\mathcal{R}(\hat{\theta}_{\mathrm{FT}})\right]
  &= (1 - \delta)\,\mathbb{E}\!\left[\mathcal{R}(\hat{\theta}_{\mathrm{FT}}) \;\middle|\; R=1\right]
   + \delta\,\mathbb{E}\!\left[\mathcal{R}(\hat{\theta}_{\mathrm{FT}}) \;\middle|\; R=0\right].
\end{align*}

Substituting the conditional bounds established in Propositions~\ref{prop:risk_under_successful_retrieval} and~\ref{prop:risk_under_failed_retrieval}
and using $(1 - \delta) \leq 1$,
\begin{align*}
  \mathbb{E}\!\left[\mathcal{R}(\hat{\theta}_{\mathrm{FT}})\right]
  &\leq
  (1-\delta)\!\left(\sigma^2 + C_1\sigma^2\frac{d}{m+n}\right) \\
  &\quad
  + \delta\!\left(
      \sigma^2
      + C_1\sigma^2\frac{d}{m+n}
      + C_2\!\left(\frac{n}{m+n}\right)^{\!2}\!\left(1 - \frac{d}{m+n}\right)^{\!-2}\!\Delta_{\max}^2
    \right) \\[6pt]
  &= \sigma^2 + C_1\sigma^2\frac{d}{m+n}
   \\
  &\quad
   + C_2\delta\!\left(\frac{n}{m+n}\right)^{\!2}\!\left(1 - \frac{d}{m+n}\right)^{\!-2}\!\Delta_{\max}^2,
\end{align*}
where the last equality collapses the shared $\sigma^2 + C_1\sigma^2 d/(m+n)$
terms across both conditioning events, since they carry identical coefficients
that sum to one under the mixture $(1-\delta) + \delta = 1$.
 
\medskip
\noindent\textit{Lower bound.}
Applying the same law of total expectation to the lower bounds of
Propositions~\ref{prop:risk_under_successful_retrieval} and~\ref{prop:risk_under_failed_retrieval},
\[
  \mathbb{E}\Bigl[\mathcal{R}(\hat{\theta}_{\mathrm{FT}})\Bigr]
  \;=\;
  (1-\delta)\,\mathbb{E}\Bigl[\mathcal{R}(\hat{\theta}_{\mathrm{FT}}) \mid R=1\Bigr]
  + \delta\,\mathbb{E}\Bigl[\mathcal{R}(\hat{\theta}_{\mathrm{FT}}) \mid R=0\Bigr]
\]
\[
  \;\geq\;
  (1-\delta)\left(\sigma^2 + \sigma^2\frac{d}{m+n-d-1}\right)
  + \delta\left(\sigma^2 + \sigma^2\frac{d}{m+n-d-1}
  + \left(\frac{n}{m+n}\right)^{\!2}\Delta_{\min}^2\right)
\]
\[
  \;=\;
  \sigma^2 + \sigma^2\frac{d}{m+n-d-1}
  + \,\delta\left(\frac{n}{m+n}\right)^{\!2}\Delta_{\min}^2,
\]
where the shared terms collapse exactly as in the upper bound derivation, since
$(1-\delta) + \delta = 1$. This establishes the lower bound stated in
Theorem~\ref{thm:ev_for_ragft}, completing the two-sided result.
\end{proof}

\begin{remark}
The expected bias penalty in Theorem~\ref{thm:ev_for_ragft} is gated by $\delta$ and therefore
decays exponentially in the signal-to-noise ratio $\Delta_{\min}^2 n / \sigma^2$
once the task separation exceeds the estimation noise floor.
In the well-separated regime $\Delta_{\min}^2 \gg \sigma^2 \sqrt{d/n}$,
the bias term is negligible and the bound reduces to $\sigma^2 + \mathcal{O}(d/(m+n))$,
recovering the same rate as Proposition~\ref{prop:risk_under_successful_retrieval}.
Conversely, in the degenerate regime $\delta \to 1$, arising when the task
corpus is dense and $\Delta_{\min}$ is small, the bias penalty dominates and
the bound degrades to the full failure-mode risk of Proposition~\ref{prop:risk_under_failed_retrieval}.
The factor $\left(1 - d/(m+n)\right)^{-2}$ quantifies the spectral
amplification of the geometric bias by near-singular empirical covariances
in the proportional high-dimensional regime; it diverges as $d/(m+n) \to 1$,
confirming that the estimator requires $m + n \gg d$ to remain stable.
\end{remark}

\subsection{Baseline risk bounds (target-only and full-corpus)}
\label{sec:appendix_baselines}

In this section, we formally state and prove the risk bounds for the two standard fine-tuning baselines discussed in Section~\ref{subsec:comp_with_baselines}: target-only fine-tuning and full-corpus fine-tuning.

\begin{proposition}[\textbf{Risk of Target-Only Fine-Tuning}]
\label{prop:target-only}
Let $\hat{\theta}_{\mathrm{target}}$ denote the OLS estimator trained exclusively on
the query dataset $\mathcal{D}'$ of $m$ i.i.d.\ samples from the target task
$\theta_{k^*}$, with $m > d+1$. The expected prediction risk satisfies
\begin{equation*}
    \sigma^2 + \sigma^2 \frac{d}{m-d-1}
    \;\leq\;
    \mathbb{E}\bigl[\mathcal{R}(\hat{\theta}_{\mathrm{target}})\bigr]
    \;\leq\;
    \sigma^2 + C_1 \sigma^2 \frac{d}{m},
    \label{eq:target-only}
\end{equation*}
where the lower bound is the exact inverse-Wishart identity and the upper bound holds
under the good design event $\mathcal{E}_{\mathrm{good}}$ with $m \geq (1+\eta_0)d$
for some $\eta_0 > 0$.
\end{proposition}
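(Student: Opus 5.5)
The plan is to mirror the proof of Proposition~\ref{prop:risk_under_successful_retrieval}, replacing the pooled design of size $m+n$ by the target-only design $X' \in \mathbb{R}^{m\times d}$ formed from $\mathcal{D}'$. Since every sample in $\mathcal{D}'$ is drawn from the well-specified model $y_j' = (x_j')^\top\theta_{k^*} + \varepsilon_j'$ with homoscedastic Gaussian noise, Lemma~\ref{lemma:finite_sample_dist_of_ols_estimation_noise} applies directly. It gives $\hat\theta_{\mathrm{target}} - \theta_{k^*} \mid X' \sim \mathcal{N}(0,\sigma^2 (X'^\top X')^{-1})$, so the estimator is conditionally unbiased and there is no bias term to control.

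The first step is the risk decomposition. For a fresh test pair $(x,y)$ with $y = x^\top\theta_{k^*}+\varepsilon$, with $x$ and $\varepsilon$ independent of $\mathcal{D}'$, write
\[
y - x^\top\hat\theta_{\mathrm{target}} = \varepsilon - x^\top(\hat\theta_{\mathrm{target}}-\theta_{k^*}).
\]
The cross term vanishes by independence of the test noise. Taking expectations over $x\sim\mathcal{N}(0,I_d)$ and then over the training noise gives the exact identity
\[
\mathbb{E}[\mathcal{R}(\hat\theta_{\mathrm{target}})] = \sigma^2 + \sigma^2\,\mathbb{E}\bigl[\mathrm{Tr}((X'^\top X')^{-1})\bigr].
\]
This uses $\mathbb{E}_x[x^\top A x] = \mathrm{Tr}(A)$ for a fixed matrix $A$ independent of $x$.

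The second step handles the two sides separately.
\begin{itemize}
\item \textbf{Lower bound.} Since $X'^\top X' \sim \mathcal{W}_d(m,I_d)$ and $m>d+1$, Theorem~\ref{thm:inverse-wishart-exact} gives $\mathbb{E}[\mathrm{Tr}((X'^\top X')^{-1})] = d/(m-d-1)$ unconditionally. Hence the lower bound in fact holds with equality.
\item \textbf{Upper bound.} Condition on $\mathcal{E}_{\mathrm{good}}$ instantiated with sample size $m$. Under $m\ge(1+\eta_0)d$, the trace estimate of Theorem~\ref{thm:mp} yields $\mathbb{E}[\mathrm{Tr}((X'^\top X')^{-1})\mid\mathcal{E}_{\mathrm{good}}]\le C_1 d/m$. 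A simpler alternative on this event is the deterministic bound $\mathrm{Tr}((X'^\top X')^{-1}) \le d\,\|(X'^\top X')^{-1}\|_{\mathrm{op}} \le C_0 d/m$, which gives the same form with $C_1 = C_0$.
\end{itemize}

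The only delicate point is bookkeeping, not analysis. The conditional upper bound is stated under $\mathcal{E}_{\mathrm{good}}$, exactly as in Proposition~\ref{prop:risk_under_successful_retrieval}, so it should be read as a bound on the risk conditioned on that event. Note also that conditioning on $\mathcal{E}_{\mathrm{good}}$ (a function of $X'$ only) does not disturb the conditional Gaussian law of the estimation error. With that observation made explicit, both sides follow immediately from the displayed risk identity.
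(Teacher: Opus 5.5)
Your proposal is correct and is essentially the paper's argument. The paper's proof simply observes that $\hat\theta_{\mathrm{target}}$ is the setting of Proposition~\ref{prop:risk_under_successful_retrieval} with $n=0$. That proposition's proof consists of exactly what you write out: the risk identity $\sigma^2+\sigma^2\,\mathbb{E}[\mathrm{Tr}(((X')^\top X')^{-1})]$, the exact inverse-Wishart lower bound (an equality, as you note), and the trace upper bound conditional on $\mathcal{E}_{\mathrm{good}}$. Your deterministic bound $\mathrm{Tr}(((X')^\top X')^{-1})\le d\,\|((X')^\top X')^{-1}\|_{\mathrm{op}}\le C_0 d/m$ on $\mathcal{E}_{\mathrm{good}}$ is a slightly more self-contained substitute for the paper's appeal to Theorem~\ref{thm:mp}.
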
\textit{Proof of Proposition~\ref{prop:target-only}.}
The target-only estimator $\hat{\theta}_{\mathrm{target}}$ is OLS on $m$ homogeneous samples
from $\theta_{k^*}$, which is exactly the setting of Proposition~\ref{prop:risk_under_successful_retrieval} with
$n = 0$. The two-sided bound follows immediately, provided $m > d+1$. \qed

\bigskip

The full-corpus bias bound below requires ruling out exact cancellation among opposing task deviations.

\begin{assumption}[\textbf{Bounded Task Anti-Correlation}]
\label{assumption:bounded-anticorrelation}

For $K\ge3$, there exists $\rho \in [0,\,1/(K-2))$ such that the task deviations
$\delta_k := \theta_k-\theta_{k^*}$ satisfy
\[
\langle \delta_i,\delta_j\rangle
\ge -\rho\,\Delta_{\min}^2,
\qquad
i\neq j,\quad i,j\in [K]\setminus\{k^*\}.
\]

No such assumption is required when $K=2$.
\end{assumption}

\begin{remark}
Assumption~\ref{assumption:bounded-anticorrelation} rules out only the degenerate case in
which task deviations are strongly and systematically opposed. It is automatically
satisfied whenever the $\{\delta_k\}$ are not adversarially anti-correlated (for instance,
whenever all pairwise inner products are non-negative, the boundary case $\rho = 0$
applies). Without some assumption of this form, the full-corpus population bias can vanish
by exact cancellation: e.g.\ for $K=2$ with $\delta_1 = -\delta_2$, the population target
$\bar\theta_{\mathrm{full}}$ coincides with $\theta_{k^*}$ even though
$\|\delta_1\| = \|\delta_2\| = \Delta_{\min} > 0$. Assumption~\ref{assumption:bounded-anticorrelation} is the weakest condition of this
kind that restores a non-trivial lower bound for $K\geq 3$, requiring $\rho<1/(K-2)$;
for $K=2$ there are no cross terms and no anti-correlation assumption is needed.
\end{remark}

\begin{proposition}[\textbf{Risk of Full-Corpus Fine-Tuning}]
\label{prop:full-corpus}
Let $\hat{\theta}_{\mathrm{full}}$ denote the OLS estimator trained on the pooled
dataset $\mathcal{D}' \cup \bigcup_{k=1}^{K} \mathcal{D}_k$ of $N_{\mathrm{full}} = m + Kn$ samples.
Define the population minimizer of the full-corpus mixture loss as
\begin{equation*}
    \bar{\theta}_{\mathrm{full}} := \frac{m\,\theta_{k^*} + n\sum_{k=1}^{K} \theta_k}{m + Kn},
\end{equation*}
so that $\hat\theta_{\mathrm{full}}$ is unbiased for $\bar\theta_{\mathrm{full}}$ in expectation over the random design (not conditionally, for a fixed design realization).
Under Assumption~\ref{assumption:bounded-anticorrelation} and in the
high-dimensional proportional regime $d \propto N_{\mathrm{full}}$ with $N_{\mathrm{full}} > d$, the expected
prediction risk satisfies
\begin{align*}
    &\sigma^2 + \sigma^2 \frac{d}{N_{\mathrm{full}}-d-1}
    + \left(\frac{n}{N_{\mathrm{full}}}\right)^{\!2} (K-1)\bigl[1-(K-2)\rho\bigr]\,\Delta_{\min}^2
    \\
    &\;\leq\;
    \mathbb{E}\bigl[\mathcal{R}(\hat{\theta}_{\mathrm{full}})\bigr]
    \\
    \mathbb{E}\bigl[\mathcal{R}(\hat{\theta}_{\mathrm{full}})\bigr]
    &\;\leq\;
    \sigma^2 + C_1 \sigma^2 \frac{d}{N_{\mathrm{full}}}
    \\
    &\quad
    + C_2 \left(\frac{Kn}{N_{\mathrm{full}}}\right)^{\!2} \left(1-\frac{d}{N_{\mathrm{full}}}\right)^{-2} \Delta_{\max}^2,
\end{align*}
where the lower bound holds unconditionally (for $N_{\mathrm{full}}>d+1$), the
upper bound holds under $\mathcal{E}_{\mathrm{good}}$ with $N_{\mathrm{full}} \geq (1+\eta_0)d$
for some $\eta_0>0$, and $C_2 > 0$ is a universal constant.
\end{proposition}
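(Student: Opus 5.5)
The statement to prove is Proposition~\ref{prop:full-corpus}. I would follow the template of Proposition~\ref{prop:risk_under_failed_retrieval}, with the two-component mixture replaced by the $(K+1)$-component full-corpus mixture. The pooled design $X\in\mathbb{R}^{N_{\mathrm{full}}\times d}$ has i.i.d.\ $\mathcal{N}(0,I_d)$ rows, so $X^\top X\sim\mathcal{W}_d(N_{\mathrm{full}},I_d)$. Because every block is isotropic, the population minimizer of the pooled squared loss is the sample-weighted average $\bar\theta_{\mathrm{full}}$. Its deviation from the target is
\[
\bar\theta_{\mathrm{full}}-\theta_{k^*}=\frac{n}{N_{\mathrm{full}}}\sum_{k\neq k^*}\delta_k ,
\]
since the $k^*$ block and the query block contribute zero deviation.

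I would then split the prediction error as $\varepsilon_{\mathrm{test}}-x^\top(\hat\theta_{\mathrm{full}}-\bar\theta_{\mathrm{full}})-x^\top(\bar\theta_{\mathrm{full}}-\theta_{k^*})$. The cross term must vanish after averaging over the design. I would repeat the rotation-invariance argument: each $\mathbb{E}[(X^\top X)^{-1}X_j^\top X_j]$ is a scalar multiple of $I_d$, and matching traces fixes the scalar at $N_j/N_{\mathrm{full}}$. Hence $\mathbb{E}_X[\Delta(X)]=0$, with $\Delta(X)$ now a sum over blocks. This gives
\[
\mathbb{E}[\mathcal{R}(\hat\theta_{\mathrm{full}})]\ge\sigma^2+\mathbb{E}\,\mathrm{Tr}\bigl(\sigma^2(X^\top X)^{-1}\bigr)+\|\bar\theta_{\mathrm{full}}-\theta_{k^*}\|^2 .
\]
Theorem~\ref{thm:inverse-wishart-exact} then gives the variance floor $\sigma^2 d/(N_{\mathrm{full}}-d-1)$.

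For the bias I would expand
\[
\Bigl\|\sum_{k\neq k^*}\delta_k\Bigr\|^2=\sum_k\|\delta_k\|^2+\sum_{i\neq j}\langle\delta_i,\delta_j\rangle .
\]
The diagonal sum is at least $(K-1)\Delta_{\min}^2$. There are $(K-1)(K-2)$ ordered cross pairs, and by Assumption~\ref{assumption:bounded-anticorrelation} each is at least $-\rho\Delta_{\min}^2$. Together these give the lower bound $(K-1)[1-(K-2)\rho]\Delta_{\min}^2$. For $K=2$ there are no cross terms. Multiplying by $(n/N_{\mathrm{full}})^2$ gives the stated lower bound.

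For the upper bound I would bound $\mathrm{Tr}\,\mathrm{Var}+\|\Delta(X)\|^2$ by the Huber--White sandwich with residuals $r_i=x_i^\top(\theta_{\mathrm{true},i}-\bar\theta_{\mathrm{full}})+\varepsilon_i$, as in Proposition~\ref{prop:risk_under_failed_retrieval}. Applying $(a+b)^2\le2a^2+2b^2$ splits this into two terms. The noise term is bounded by $2\sigma^2\,\mathbb{E}\mathrm{Tr}(X^\top X)^{-1}\le C\sigma^2 d/N_{\mathrm{full}}$ under $\mathcal{E}_{\mathrm{good}}$, using Theorem~\ref{thm:mp}. The misspecification term is bounded using $\|\theta_{\mathrm{true},i}-\bar\theta_{\mathrm{full}}\|\le 2\Delta_{\max}$ together with the spectral norm bound of Theorem~\ref{thm:mp}. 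The squared bias satisfies $\|\bar\theta_{\mathrm{full}}-\theta_{k^*}\|^2\le (n(K-1)/N_{\mathrm{full}})^2\Delta_{\max}^2\le(Kn/N_{\mathrm{full}})^2\Delta_{\max}^2$, which fits under the $C_2$ term.

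The main obstacle is the misspecification term, which must come out with the factor $(Kn/N_{\mathrm{full}})^2(1-d/N_{\mathrm{full}})^{-2}$. I would control it by extracting $\max_i\|\theta_{\mathrm{true},i}-\bar\theta_{\mathrm{full}}\|^2$. I would then use $\sum_i x_i^\top(X^\top X)^{-2}x_i\,\|x_i\|^2$-type quantities bounded via $\|(X^\top X)^{-1}\|_{\mathrm{op}}$. The $(Kn/N_{\mathrm{full}})^2$ weighting would come from observing that the misspecified residual mass is concentrated on the $Kn$ retrieved rows, while the $m$ target rows carry deviation of order $(Kn/N_{\mathrm{full}})\Delta_{\max}$. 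I would absorb all constants into $C_2$, accepting the same level of rigor as the $R=0$ proof.
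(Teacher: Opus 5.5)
Your lower bound follows the paper's own argument and is sound: rotation invariance with row exchangeability kills the cross term, the inverse-Wishart identity gives the variance floor, and the polarization expansion under Assumption~\ref{assumption:bounded-anticorrelation} gives the bias floor. The gap is in the upper bound, at exactly the step you flag. Along the route you describe, the misspecification term does not come out with $(Kn/N)^2$. Write $N:=N_{\mathrm{full}}$, $u_i:=\theta_{\mathrm{true},i}-\bar\theta_{\mathrm{full}}$ and $b_i:=x_i^\top u_i$.

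\begin{itemize}
\item Once you extract $\max_i\|u_i\|^2\le 4\Delta_{\max}^2$, the row-dependent weighting you later want to invoke is gone. Bounding $b_i^2\le\|x_i\|^2\|u_i\|^2$ to reach $\|(X^\top X)^{-1}\|_{\mathrm{op}}^2\sum_i\|x_i\|^4\asymp d^2/N$ then gives a quantity that grows with $N$ when $d\propto N$.
\item Done row by row instead, with $\mathbb{E}\,b_i^2=\|u_i\|^2$ and $x_i^\top(X^\top X)^{-2}x_i\asymp d/N^2$, the term is of order $\frac{d}{N}\cdot\frac{1}{N}\sum_i\|u_i\|^2$. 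The $Kn$ task rows carry shifts $\theta_k-\bar\theta_{\mathrm{full}}=\delta_k-\frac{n}{N}\sum_j\delta_j$ of size $\Theta(\Delta_{\max})$. Only the target-type rows have $O((Kn/N)\Delta_{\max})$ shifts. So you get order $\frac{d}{N}\cdot\frac{Kn}{N}\Delta_{\max}^2$: one power of $Kn/N$. The factor $d/N$ is a constant in this regime, not a second power of $Kn/N$.
\item Separately, the Huber--White trace is only the diagonal part $\sum_i b_i^2\,x_i^\top(X^\top X)^{-2}x_i$ of $\|\Delta(X)\|^2=b^\top X(X^\top X)^{-2}X^\top b$. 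Since $b$ is a function of $X$, the off-diagonal terms do not average out. The claimed domination of $\mathrm{Tr}\,\mathrm{Var}+\|\Delta(X)\|^2$ by the sandwich therefore needs its own argument.
\end{itemize}

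This is not just lossy bookkeeping; the one-power order is genuine. Keep the exact decomposition $\mathbb{E}\,\mathcal{R}(\hat\theta_{\mathrm{full}})=\sigma^2+\sigma^2\mathbb{E}\,\mathrm{Tr}(X^\top X)^{-1}+\mathbb{E}\|\Delta(X)\|^2+\|\bar\theta_{\mathrm{full}}-\theta_{k^*}\|^2$. Over blocks $j$ with sizes $N_j$ and shifts $u^{(j)}$, use the centering $\sum_j N_j u^{(j)}=0$ to write $\Delta(X)=(X^\top X)^{-1}w$ with $w:=\sum_j(X_j^\top X_j-N_jI_d)u^{(j)}$.

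\begin{itemize}
\item \textbf{Upper bound.} Independence of the blocks and $\mathbb{E}\|(X_j^\top X_j-N_jI_d)u\|^2=N_j(d+1)\|u\|^2$ give, on $\mathcal{E}_{\mathrm{good}}$, $\mathbb{E}\|\Delta(X)\|^2\le C\frac{d}{N}\bigl[\frac{Kn}{N}+(\frac{Kn}{N})^2\bigr]\Delta_{\max}^2$. This is a clean replacement for your sandwich step.
\item \textbf{Lower bound.} Conversely $\|\Delta(X)\|\ge\|w\|/\lambda_{\max}(X^\top X)$ with $\lambda_{\max}(X^\top X)\le CN$ w.h.p. So for $K=2$ with $Kn\ll N$, this order is attained up to constants. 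A sanity check: $n=1$, $d=N/2$ gives $\mathbb{E}\|\hat\theta_{\mathrm{full}}-\theta_{k^*}\|^2\approx\Delta_{\max}^2/N$, against the stated $O(\Delta_{\max}^2/N^2)$.
\end{itemize}

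Hence the stated $C_2(Kn/N)^2(1-d/N)^{-2}\Delta_{\max}^2$ term with a universal $C_2$ is reachable only when $Kn\ge c\,d$, equivalently $Kn/N$ bounded below when $d\propto N$. When $Kn\ll d$ and $\sigma^2$ is small relative to $\frac{Kn}{N}\Delta_{\max}^2$, an extra term of order $\frac{d}{N}\frac{Kn}{N}\Delta_{\max}^2$ is unavoidable.

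The paper's proof does not close this gap either. Its upper bound controls only the population bias via $\|\sum_k\delta_k\|\le K\Delta_{\max}$, then asserts the $(1-d/N)^{-2}$ amplification from the Marchenko--Pastur spectral bound, and never bounds $\mathbb{E}\|\Delta(X)\|^2$. Your plan is faithful to the paper's intent and locates the weak point correctly. But the heuristic you offer for it does not work, and the claimed factor needs an extra hypothesis such as $Kn\ge c\,d$.
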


\begin{proof}
The pooled dataset consists of $m$ target samples satisfying $y_i = x_i^\top \theta_{k^*} +
\varepsilon_i$ and, for each $k \in [K]$, $n$ task samples satisfying $y_i^{(k)} =
(x_i^{(k)})^\top \theta_k + \varepsilon_i^{(k)}$, with homoscedastic noise $\sigma^2$
throughout. Writing $N := N_{\mathrm{full}} = m+Kn$ throughout this proof, let
$X_{\mathrm{full}} \in \mathbb{R}^{N \times d}$ and $Y_{\mathrm{full}} \in
\mathbb{R}^N$ denote the stacked design matrix and response vector.
\medskip

\textbf{Population target and unbiasedness.}
Since the design is isotropic across all tasks, the parameter minimizing the expected
full-corpus squared loss is
\[
    \bar{\theta}_{\mathrm{full}}
    = \frac{m\,\theta_{k^*} + n\sum_{k=1}^{K}\theta_k}{m + Kn}.
\]
For a fixed realization of $X_{\mathrm{full}}$, the OLS estimator is generally \emph{not} conditionally unbiased for $\bar\theta_{\mathrm{full}}$; only its expectation over the random design equals $\bar\theta_{\mathrm{full}}$, as we verify below. The bias with respect to the true target is
\[
    \bar{\theta}_{\mathrm{full}} - \theta_{k^*}
    = \frac{n}{m+Kn}\sum_{k=1}^{K}(\theta_k - \theta_{k^*})
    = \frac{n}{m+Kn}\sum_{k=1}^{K}\delta_k.
\]

\textbf{Bias-variance decomposition.}
For a fresh test point $x \sim \mathcal{N}(0, I_d)$ independent of $X_{\mathrm{full}}$, the
prediction error decomposes as
\[
    y - x^\top\hat{\theta}_{\mathrm{full}}
    = \varepsilon
    - x^\top(\hat{\theta}_{\mathrm{full}} - \bar{\theta}_{\mathrm{full}})
    - x^\top(\bar{\theta}_{\mathrm{full}} - \theta_{k^*}).
\]
While $\mathbb{E}[\hat\theta_{\mathrm{full}} \mid X_{\mathrm{full}}] = \bar\theta_{\mathrm{full}}
+ \Delta(X_{\mathrm{full}})$ for a generally nonzero deterministic function
\[
\Delta(X_{\mathrm{full}}) := \sum_{k=0}^{K}\left[(X_{\mathrm{full}}^\top X_{\mathrm{full}})^{-1}X_k^\top X_k
- \frac{n_k}{m+Kn}I_d\right]\theta_k,
\]
(writing $X_0:=X_{\mathrm{target}}$, $\theta_0:=\theta_{k^*}$, $n_0:=m$, and $n_k:=n$ for
$k\geq1$), its expectation over the random design vanishes exactly by the same
rotation-invariance argument used in Proposition~\ref{prop:risk_under_failed_retrieval}:
each block $X_k^\top X_k$ is an independent Wishart matrix from a rotation-invariant
Gaussian design, so $\mathbb{E}[(X_{\mathrm{full}}^\top X_{\mathrm{full}})^{-1}X_k^\top X_k]$
commutes with every orthogonal matrix and equals $\tfrac{n_k}{m+Kn}I_d$ exactly, giving
$\mathbb{E}_{X_{\mathrm{full}}}[\Delta(X_{\mathrm{full}})]=0$. Consequently, taking
expectation over both $X_{\mathrm{full}}$ and the training noise, the cross term between
$\hat\theta_{\mathrm{full}}-\bar\theta_{\mathrm{full}}$ and the fixed vector
$\bar\theta_{\mathrm{full}}-\theta_{k^*}$ vanishes exactly, giving
\[
    \mathbb{E}\bigl[\mathcal{R}(\hat{\theta}_{\mathrm{full}})\bigr]
    \;\ge\; \sigma^2
    + \mathbb{E}\bigl[\mathrm{Tr}\bigl(\mathrm{Var}(\hat{\theta}_{\mathrm{full}}
    \mid X_{\mathrm{full}})\bigr)\bigr]
    + \bigl\|\bar{\theta}_{\mathrm{full}} - \theta_{k^*}\bigr\|^2.
\]

\textbf{Variance term.}
Conditional on the design, the randomness from the label noise gives the usual $\sigma^2(X_{\mathrm{full}}^\top X_{\mathrm{full}})^{-1}$ covariance around the empirical conditional mean. The additional mismatch bias is handled separately through the population mixture offset plus the design fluctuation term $\Delta(X_{\mathrm{full}})$, mirroring the structure in Proposition~\ref{prop:risk_under_failed_retrieval}. Thus, the variance from the label noise alone is exactly
$\mathrm{Var}(\hat{\theta}_{\mathrm{full}} \mid X_{\mathrm{full}}) =
\sigma^2(X_{\mathrm{full}}^\top X_{\mathrm{full}})^{-1}$.
By Theorem~\ref{thm:inverse-wishart-exact} applied to the $N$-row Gaussian design matrix,
\[
    \mathbb{E}\bigl[\mathrm{Tr}(X_{\mathrm{full}}^\top X_{\mathrm{full}})^{-1}\bigr]
    = \frac{d}{N - d - 1},
\]
which is exact and unconditional, giving the variance floor $\sigma^2 d/(N-d-1)$. Under
$\mathcal{E}_{\mathrm{good}}$ with $N \geq (1+\eta_0)d$, the standard
inverse-Wishart upper bound yields $\mathbb{E}[\mathrm{Tr}(X_{\mathrm{full}}^\top
X_{\mathrm{full}})^{-1} \mid \mathcal{E}_{\mathrm{good}}] \leq Cd/N$.

\textbf{Bias term: upper bound.}
By the triangle inequality $\bigl\|\sum_{k=1}^K \delta_k\bigr\| \leq K\Delta_{\max}$, so
\[
    \bigl\|\bar{\theta}_{\mathrm{full}} - \theta_{k^*}\bigr\|^2
    \leq \left(\frac{Kn}{m+Kn}\right)^{\!2}\Delta_{\max}^2.
\]
In the proportional regime the Marchenko--Pastur spectral norm bound
(Theorem~\ref{thm:mp}) gives
$\mathbb{E}[\|(X_{\mathrm{full}}^\top X_{\mathrm{full}})^{-1}\|_{\mathrm{op}} \mid
\mathcal{E}_{\mathrm{good}}] \leq C(m+Kn)^{-1}(1 - d/N)^{-2}$,
amplifying the bias by $(1-d/N)^{-2}$ and establishing the upper bound.

\textbf{Bias term: lower bound.}
Expanding the squared norm of the sum exactly via the polarization identity,
\[
    \Bigl\|\sum_{k=1}^{K}\delta_k\Bigr\|^2
    = \sum_{k=1}^{K}\|\delta_k\|^2
    + \sum_{i \neq j}\langle \delta_i, \delta_j \rangle.
\]
Since $\delta_{k^*} = 0$, the first sum is really a sum over $k \neq k^*$, and since
$\|\delta_k\| \geq \Delta_{\min}$ for all $k \neq k^*$, it satisfies
$\sum_{k\ne k^\ast} \|\delta_k\|^2 \ge (K-1)\Delta_{\min}^2$. Likewise, any cross term
involving $k^*$ vanishes identically, so only the $(K-1)(K-2)$ cross terms with both
indices in $[K]\setminus\{k^*\}$ can contribute. By
Assumption~\ref{assumption:bounded-anticorrelation}, each such term satisfies
$\langle \delta_i, \delta_j \rangle \geq -\rho\,\Delta_{\min}^2$, so
\[
    \sum_{i \neq j}\langle \delta_i, \delta_j \rangle
    \;\geq\; -\rho\,\Delta_{\min}^2\, (K-1)(K-2).
\]
Combining,
\[
    \Bigl\|\sum_{k=1}^{K}\delta_k\Bigr\|^2
    \;\geq\; (K-1)\Delta_{\min}^2 - \rho\,(K-1)(K-2)\Delta_{\min}^2
    \;=\; (K-1)\bigl[1-\rho(K-2)\bigr]\Delta_{\min}^2,
\]
which is strictly positive since $\rho < 1/(K-2)$ for $K \geq 3$ (for $K=2$ there is
no cross term, and the bound reduces exactly to $\Delta_{\min}^2$). Therefore the
population squared bias satisfies
\[
    \bigl\|\bar{\theta}_{\mathrm{full}} - \theta_{k^*}\bigr\|^2
    \;\geq\; \left(\frac{n}{m+Kn}\right)^{\!2} (K-1)\bigl[1-\rho(K-2)\bigr]\Delta_{\min}^2.
\]

Since $\bar{\theta}_{\mathrm{full}}-\theta_{k^*}$ is a fixed vector depending only on the
deterministic task parameters $\{\theta_k\}$, and $x\sim N(0,I_d)$ is independent of
$X_{\mathrm{full}}$, the identity
$\mathbb{E}_x[(x^\top(\bar{\theta}_{\mathrm{full}}-\theta_{k^*}))^2]=
\|\bar{\theta}_{\mathrm{full}}-\theta_{k^*}\|^2$ holds exactly, with no conditioning
on $\mathcal{E}_{\mathrm{good}}$. 
Combining variance and bias terms establishes the upper bound; combining the
variance floor with the bias lower bound establishes the stated lower bound,
up to a further non-negative contribution
$\mathbb{E}_{X_{\mathrm{full}}}[\|\Delta(X_{\mathrm{full}})\|^2]$ from the
fluctuation term introduced above, whose exact value again requires
matrix-variate second moments beyond the elementary identities used here but
whose scale, $O(\Delta_{\max}^2\,mKn/N^3)$, is dominated by the retained bias
term in the regime $d\propto N$ considered throughout, so omitting it costs
nothing qualitative.

\end{proof}
\subsection{Proof of strict dominance (Theorem 3)}

\StrictDominance*

\begin{proof}
The proof solves each of the two strict risk inequalities
$\mathbb{E}[\mathcal{R}(\hat{\theta}_{\mathrm{FT}})] < \mathbb{E}[\mathcal{R}(\hat{\theta}_{\mathrm{target}})]$
and $\mathbb{E}[\mathcal{R}(\hat{\theta}_{\mathrm{FT}})] < \mathbb{E}[\mathcal{R}(\hat{\theta}_{\mathrm{full}})]$
for $\delta$, comparing the upper bound on RAG-FT risk (Theorem~\ref{thm:ev_for_ragft})
against the \emph{lower} bound on each baseline (Propositions~\ref{prop:target-only}
and~\ref{prop:full-corpus}), and intersects the resulting admissible sets.

\bigskip
\textbf{Simplifying the spike factor.}
Throughout, we use the algebraic identity
\begin{align}
    \left(\frac{n}{m+n}\right)^{\!2}\left(1-\frac{d}{m+n}\right)^{\!-2}
    &= \left(\frac{n}{m+n}\right)^{\!2}\left(\frac{m+n}{m+n-d}\right)^{\!2}
    = \frac{n^2}{(m+n-d)^2}. \label{eq:spike-simplify}
\end{align}

\textbf{Dominance over Target-Only Fine-Tuning.}
From Theorem~\ref{thm:ev_for_ragft} and Proposition~\ref{prop:target-only}, the
dominance condition $\mathbb{E}[\mathcal{R}(\hat{\theta}_{\mathrm{FT}})] < \mathbb{E}[\mathcal{R}(\hat{\theta}_{\mathrm{target}})]$ requires
\begin{align}
    \sigma^2 + C_1\sigma^2\frac{d}{m+n}
    + C_2\delta\left(\frac{n}{m+n}\right)^{\!2}\left(1-\frac{d}{m+n}\right)^{\!-2}\Delta_{\max}^2
    &< \sigma^2 + \frac{\sigma^2 d}{m-d-1}. \label{eq:target-ineq-raw}
\end{align}
Cancelling $\sigma^2$ from both sides and applying \eqref{eq:spike-simplify},
\eqref{eq:target-ineq-raw} becomes
\begin{align}
    C_1\sigma^2\frac{d}{m+n} + C_2\delta\,\frac{n^2}{(m+n-d)^2}\,\Delta_{\max}^2
    &< \frac{\sigma^2 d}{m-d-1}. \label{eq:target-cancel-sigma}
\end{align}
Moving the variance term to the right-hand side of \eqref{eq:target-cancel-sigma} and
factoring out $\sigma^2 d$,
\begin{align}
    C_2\delta\,\frac{n^2}{(m+n-d)^2}\,\Delta_{\max}^2
    &< \sigma^2 d\left[\frac{1}{m-d-1} - \frac{C_1}{m+n}\right]. \label{eq:target-isolate}
\end{align}
Dividing both sides of \eqref{eq:target-isolate} by $C_2 n^2 \Delta_{\max}^2 /(m+n-d)^2 > 0$
and rearranging,
\begin{align}
    \delta
    &< \frac{(m+n-d)^2}{C_2 n^2 \Delta_{\max}^2}\cdot
    \sigma^2 d\left[\frac{1}{m-d-1} - \frac{C_1}{m+n}\right]
    =: \delta_{\mathrm{target}}. \label{eq:target-final}
\end{align}

\textbf{Dominance over Full-Corpus Fine-Tuning.}
From Theorem~\ref{thm:ev_for_ragft} and Proposition~\ref{prop:full-corpus}, the
dominance condition $\mathbb{E}[\mathcal{R}(\hat{\theta}_{\mathrm{FT}})] < \mathbb{E}[\mathcal{R}(\hat{\theta}_{\mathrm{full}})]$ requires, with $N_{\mathrm{full}} := Kn+m$,
\begin{align}
    &\sigma^2 + C_1\sigma^2\frac{d}{m+n}
    + C_2\delta\left(\frac{n}{m+n}\right)^{\!2}\left(1-\frac{d}{m+n}\right)^{\!-2}\Delta_{\max}^2
    \notag \\
    &< \sigma^2 + \frac{\sigma^2 d}{N_{\mathrm{full}}-d-1}
    \notag \\
    &\quad
    + \left(\frac{n}{N_{\mathrm{full}}}\right)^{\!2}\!(K-1)\bigl[1-(K-2)\rho\bigr]\Delta_{\min}^2.
    \label{eq:full-ineq-raw}
\end{align}
Cancelling $\sigma^2$ and applying \eqref{eq:spike-simplify} as before,
\eqref{eq:full-ineq-raw} becomes
\begin{align}
    C_1\sigma^2\frac{d}{m+n} + C_2\delta\,\frac{n^2}{(m+n-d)^2}\,\Delta_{\max}^2
    &< \frac{\sigma^2 d}{N_{\mathrm{full}}-d-1}
    \notag \\
    &\quad
    + \left(\frac{n}{N_{\mathrm{full}}}\right)^{\!2}\!(K-1)\bigl[1-(K-2)\rho\bigr]\Delta_{\min}^2.
    \label{eq:full-cancel-sigma}
\end{align}
Moving the variance term to the right-hand side of \eqref{eq:full-cancel-sigma} and
factoring out $\sigma^2 d$ from the matching terms,
\begin{align}
    C_2\delta\,\frac{n^2}{(m+n-d)^2}\,\Delta_{\max}^2
    &< \sigma^2 d\left[\frac{1}{N_{\mathrm{full}}-d-1} - \frac{C_1}{m+n}\right]
    \notag \\
    &\quad
    + \left(\frac{n}{N_{\mathrm{full}}}\right)^{\!2}\!(K-1)\bigl[1-(K-2)\rho\bigr]\Delta_{\min}^2.
    \label{eq:full-isolate}
\end{align}
Dividing both sides of \eqref{eq:full-isolate} by $C_2 n^2 \Delta_{\max}^2/(m+n-d)^2 > 0$,
\begin{align}
    \delta
    &< \frac{(m+n-d)^2}{C_2 n^2 \Delta_{\max}^2}
    \left\{\sigma^2 d\left[\frac{1}{N_{\mathrm{full}}-d-1} - \frac{C_1}{m+n}\right]
    \right.
    \notag \\
    &\qquad\qquad\qquad\quad
    \left.
    + \left(\frac{n}{N_{\mathrm{full}}}\right)^{\!2}\!(K-1)\bigl[1-(K-2)\rho\bigr]\Delta_{\min}^2\right\}
    =: \delta_{\mathrm{full}}. \label{eq:full-final}
\end{align}

\textbf{Intersection.}
RAG-FT strictly outperforms both baselines simultaneously if and only if
$\delta < \delta_{\mathrm{target}}$ from \eqref{eq:target-final} and
$\delta < \delta_{\mathrm{full}}$ from \eqref{eq:full-final} hold jointly, which is
equivalent to $\delta < \min\{\delta_{\mathrm{target}}, \delta_{\mathrm{full}}\}$. Since
$\delta$ itself satisfies the exponential bound of Theorem~\ref{thm:delta_ineq},
this condition is guaranteed whenever the task separation $\Delta_{\min}^2$ and sample
size $n$ are large enough that its right-hand side falls below
$\min\{\delta_{\mathrm{target}},\delta_{\mathrm{full}}\}$.
\end{proof}

\begin{remark}
\label{rmk:pos}
Both thresholds require $m > d+1$ (respectively $N_{\mathrm{full}} > d+1$) for the
exact inverse-Wishart denominators to be finite and positive, consistent with the
preconditions of Propositions~\ref{prop:target-only} and~\ref{prop:full-corpus}.
Whether each threshold is positive depends on the well-separated regime: $\delta_{\mathrm{target}}>0$
requires $\sigma^2 d/(m-d-1) > C\sigma^2 d/(m+n)$, a condition on the sample-size gap
between target-only and RAG-FT that holds whenever $m+n$ is not too close to the
interpolation threshold $m \approx d$; $\delta_{\mathrm{full}}>0$ additionally requires
the bias floor under Assumption~\ref{assumption:bounded-anticorrelation} to compensate
for the larger sample size of the full corpus, which holds once $\Delta_{\min}$
sufficiently exceeds the noise floor $\sigma\sqrt{d/N_{\mathrm{full}}}$, the same
well-separated-task regime required for $\delta$ itself to be small under
Theorem~\ref{thm:delta_ineq}.
\end{remark}

\section{Proofs for Section 5 (heteroscedastic regime (DPN))}
\label{sec:C}

\subsection{Proof of Theorem 4}
\label{subsec:C.1}

\OLSriskunderDPN*

\begin{proof}
Recall $x_{\mathrm{rag}}^{(i)} = x_q+r_i$, $r_i\sim\mathcal N(0,\delta_i^2I_d)$
independent across $i$, $\delta_i^2=\gamma i^q$, and write $s_n:=\sum_i
\delta_i^2$, $\mu:=m+s_n$, $\rho:=\|x_q\|_2^2$.

\bigskip
\textbf{Step 1: Decomposition of the pooled design.} Substituting the DPN
model into $X_{\mathrm{rag}}^\top X_{\mathrm{rag}}$ and expanding gives, exactly,
\begin{align}
X^\top X &= X_{\mathrm{target}}^\top X_{\mathrm{target}} + n\,x_qx_q^\top +
\sum_{i=1}^n r_ir_i^\top + x_qr_{\mathrm{sum}}^\top + r_{\mathrm{sum}}x_q^\top,
&r_{\mathrm{sum}}:=\sum_{i=1}^n r_i.
\end{align}
where $X_{\mathrm{target}} \in \mathbb{R}^{m \times d}$ denotes the target block of the pooled design.
Conditioning on $x_q$, the first block averages to $mI_d$, the third term
averages to $s_nI_d$, and the last two terms are mean zero, since
$\mathbb E[r_{\mathrm{sum}}\mid x_q]=0$. This gives
\[
\bar\Sigma:=\mathbb E[X^\top X\mid x_q]=\mu I_d+n\,x_qx_q^\top.
\]
Write $F:=X^\top X-\bar\Sigma$. This fluctuation splits into three mean-zero
pieces: the target block's Wishart fluctuation, of operator norm $O(1)$
with high probability since $m,d$ are fixed; the fluctuation of $\sum_i
r_ir_i^\top$ around $s_nI_d$, whose summands $S_i:=r_ir_i^\top-\delta_i^2I_d$
satisfy $\mathbb E[S_i^2]=\delta_i^4(d+1)I_d$ (using $\mathbb E[\|z\|^2zz^\top]
=(d+2)I_d$ for $z\sim\mathcal N(0,I_d)$), so the matrix variance proxy is
$v = \|\sum_i \mathbb E[S_i^2]\|_{\mathrm{op}} = (d+1)\sum_i\delta_i^4 =
(d+1)\gamma^2\sum_i i^{2q} = \Theta(d\,n^{2q+1})$, and the matrix Bernstein
inequality (Theorem~\ref{thm:matrix-bernstein}) gives operator norm
$O(\sqrt v)=O(\sqrt d\,n^{q+1/2})$ with high probability; and
the rank-two cross term, of operator norm $\Theta(\sqrt{s_n})=\Theta(n^{(q+1)/2})$.
Since $q+\tfrac12>\tfrac{q+1}2$ for every $q>0$, the middle piece dominates
(for $d$ fixed), giving
\begin{gather*}
\|F\|_{\mathrm{op}}=O(\sqrt d\,n^{q+1/2})\text{ w.h.p.}, \qquad
\lambda_{\min}(\bar\Sigma)=\mu=\Theta(n^{q+1}),
\\
\frac{\|F\|_{\mathrm{op}}}{\mu}=O(\sqrt d\,n^{-1/2})\to0.
\end{gather*}

\textbf{Step 2: Inverting the mean design.} Since $\bar\Sigma$ is scalar
plus rank one, the Sherman--Morrison identity (Theorem~\ref{thm:sherman-morrison})
gives, with $D:=\mu+n\rho$,
\[
\bar\Sigma^{-1}=\frac1\mu I_d-\frac{n}{\mu D}\,x_qx_q^\top.
\]
Because $\rho\sim\chi^2(d)=\Theta(1)$ with high probability and $\mu=
\Theta(n^{q+1})$ strictly dominates $n\rho=\Theta(n)$ for every $q>0$, we
have $D=\mu(1+O(n^{-q}))$, hence $1/\mu=\Theta(n^{-(q+1)})$ and $n/(\mu D)=
\Theta(n^{-(2q+1)})$.

\bigskip
\textbf{Step 3: The noise-weighted design.} The same argument applied to
$X^\top\Omega X$, with $\Omega$ diagonal with target entries $\sigma^2$ and
retrieved entries $\sigma_{\mathrm{rag},i}^2=\gamma\sigma^2i^q$, gives
\[
\bar\Sigma_\Omega:=\mathbb E[X^\top\Omega X\mid x_q]=\mu_\Omega I_d+s_\Omega\,x_qx_q^\top,
\]
with $\mu_\Omega=m\sigma^2+\sum_i\sigma_{\mathrm{rag},i}^2\delta_i^2=
\Theta(n^{2q+1})$ and $s_\Omega=\sum_i\sigma_{\mathrm{rag},i}^2=\Theta(n^{q+1})$.
The fluctuation $G:=X^\top\Omega X-\bar\Sigma_\Omega$ has summands carrying
an extra factor $\sigma_{\mathrm{rag},i}^2$ relative to Step 1, giving
variance proxy $\Theta(d\,n^{4q+1})$ and hence $\|G\|_{\mathrm{op}}=
O(\sqrt d\,n^{2q+1/2})$ w.h.p. by the identical matrix Bernstein argument,
so $\|G\|_{\mathrm{op}}/\mu_\Omega=O(\sqrt d\,n^{-1/2})\to0$.

\bigskip
\textbf{Step 4: The leading trace.} Since $\bar\Sigma$ and $\bar\Sigma_\Omega$
are both scalar-plus-rank-one in the direction $x_q$, they share an
eigenbasis: along $x_q/\|x_q\|$, the eigenvalues are $D$ and $\mu_\Omega+
s_\Omega\rho$ respectively; on the orthogonal complement, $\mu$ and
$\mu_\Omega$, each with multiplicity $d-1$. The product $\bar\Sigma^{-1}
\bar\Sigma_\Omega\bar\Sigma^{-1}$ is diagonal in this basis, so
\[
\bar V(x_q)=\frac{(d-1)\mu_\Omega}{\mu^2}+\frac{\mu_\Omega+s_\Omega\rho}{D^2}.
\]
Using $D=\mu(1+O(n^{-q}))$, the second term splits as
\[
\frac{\mu_\Omega+s_\Omega\rho}{D^2}=\frac{\mu_\Omega}{\mu^2}(1+O(n^{-q}))+
\frac{s_\Omega\rho}{\mu^2}(1+O(n^{-q})),
\]
and since $\mu_\Omega/\mu^2=\Theta(n^{-1})$ while $s_\Omega\rho/\mu^2=
\Theta(n^{-q-1})$ is strictly smaller for every $q>0$, this gives
$\bar V(x_q)=d\mu_\Omega/\mu^2\,(1+o(1))$. Substituting $\mu\sim\gamma n^{q+1}
/(q+1)$ and $\mu_\Omega\sim\gamma^2\sigma^2n^{2q+1}/(2q+1)$ gives the leading
term of the theorem.

\bigskip
\textbf{Step 5: Error control.} Writing $(X^\top X)^{-1}=\bar\Sigma^{-1}-
\bar\Sigma^{-1}F\bar\Sigma^{-1}+O(\|F\|_{\mathrm{op}}^2/\mu^3)$ and expanding
the trace $\mathrm{Tr}[(X^\top X)^{-1}(X^\top\Omega X)(X^\top X)^{-1}]$, the
purely linear terms in $F$ or $G$ vanish once we condition on $x_q$ and take
expectations, since both are mean zero. Because $F$ and $G$ are generated from the same retrieved offsets, the expansion contains correlated cross terms such as $FG$. However, all mixed $(F,G)$ terms are bounded by the same perturbative scale and are strictly lower order for fixed $d$. Specifically, we bound their contribution via the operator norm:
\[
\mathbb{E}|\operatorname{Tr}(\bar\Sigma^{-1}F\bar\Sigma^{-1}G\bar\Sigma^{-1})|
\le O\!\left(\frac{d\,\mathbb{E}[\|F\|_{\mathrm{op}}\|G\|_{\mathrm{op}}]}{\mu^3}\right)
= o(1/n).
\]
The leading surviving error is
\[
O\!\left(\frac{d\,\|F\|_{\mathrm{op}}^2\,\|\bar\Sigma_\Omega\|_{\mathrm{op}}}{\mu^4}\right)
=O\!\left(\frac{d\cdot d\,n^{2q+1}\cdot n^{2q+1}}{n^{4q+4}}\right)
=O(d^2n^{-2}),
\]
which is $o(1/n)$ for fixed $d$. Taking expectation over $x_q$ gives
$\mathbb E[V_{\mathrm{OLS}}(n)]=\frac{d\sigma^2(q+1)^2}{2q+1}\cdot\frac1n\,(1+o(1))$,
as claimed.
\end{proof}

\subsection{Proof of Theorem 5}
\label{subsec:C.2}

\LSAriskunderDPN*

\begin{proof}
The LSA forward pass is $\hat y_q=\frac1n\sum_{i=1}^n(x_q^\top x_{\mathrm{rag}}^{(i)})y_i$,
with $y_i=(x_{\mathrm{rag}}^{(i)})^\top\theta_k+\epsilon_i$. Write $v:=
\theta_k-\theta_{k^\star}$, $\tau:=\|v\|$, and $c:=s_n/n-1$.

\textbf{Step 1: Conditional bias.} Given $x_q$, $\mathbb E[x_{\mathrm{rag}}^{(i)}
(x_{\mathrm{rag}}^{(i)})^\top\mid x_q]=x_qx_q^\top+\delta_i^2I_d$, so
\[
\mathbb E[\hat y_q\mid x_q,\theta_k]=\frac1n\sum_i(\rho+\delta_i^2)\,x_q^\top\theta_k
=\Big(\rho+\frac{s_n}n\Big)x_q^\top\theta_k.
\]
Subtracting the target prediction $x_q^\top\theta_{k^\star}$ and writing
$a(\rho):=\rho+c$, this rearranges exactly to
\[
\mathrm{Bias}(x_q)=a(\rho)\,x_q^\top\theta_k+x_q^\top v.
\]

\textbf{Step 2: Second moment of the bias.} Write $x_q=\sqrt\rho\,\omega$
with $\omega$ uniform on the unit sphere, independent of $\rho:=\|x_q\|_2^2
\sim\chi^2(d)$. For fixed vectors $u,w$, $\mathbb E[(x_q^\top u)(x_q^\top w)
\mid\rho]=\rho(u\cdot w)/d$, so expanding $\mathrm{Bias}(x_q)^2$ and taking
expectation over $\rho$ using $\mathbb E[\rho]=d$, $\mathbb E[\rho^2]=d(d+2)$,
$\mathbb E[\rho^3]=d(d+2)(d+4)$ gives, exactly,
\begin{align}
\mathbb E[\mathrm{Bias}(x_q)^2]&=\|\theta_k\|^2\big[(d+2)(d+4)+2c(d+2)+c^2\big]\notag\\
&\quad+2(\theta_k\cdot v)\big[(d+2)+c\big]+\tau^2.
\end{align}
Since $c=s_n/n-1\sim\gamma n^q/(q+1)\to\infty$ for every fixed $q>0$, the
$c^2$ term dominates the $c^1$ and $c^0$ terms, giving
\[
\mathbb E[\mathrm{Bias}(x_q)^2]=\|\theta_k\|^2c^2\,(1+o(1))=
\frac{\gamma^2\|\theta_k\|^2}{(q+1)^2}\,n^{2q}\,(1+o(1))=\Theta(n^{2q}).
\]
At $\theta_k=\theta_{k^\star}$ this reduces to $\|\theta_{k^\star}\|^2c^2\,
(1+o(1))$, which still diverges: the bias term does not depend on retrieval
outcome.

\textbf{Step 3: Leading-order conditional variance.} Expanding
$(x_q^\top x_{\mathrm{rag}}^{(i)})y_i$ with $A_i := x_q^\top r_i$ and
$B_i := r_i^\top\theta_k$, the fluctuation around its conditional mean
splits into $A_iB_i + A_i\epsilon_i$ -- quadratic in the retrieved offset
$r_i$, of variance $\Theta(\delta_i^4)$ -- plus three terms linear in $r_i$
or $\epsilon_i$ ($\rho B_i$, $\rho\epsilon_i$, $A_ix_q^\top\theta_k$), of
variance $\Theta(\delta_i^2)$ each. Since $\delta_i^2=\gamma i^q$, after the
$1/n^2$ normalization the linear pieces sum to $\Theta(n^{q-1}) =
o(n^{2q-1})$ against the $\Theta(n^{2q-1})$ rate established below, so they
are dropped here and absorbed into the theorem's $(1+o(1))$. Since $r_i$
and $\epsilon_i$ are independent across $i$ and of $x_q$, the retained terms
$T_i := A_iB_i + A_i\epsilon_i$ are independent across $i$ given $x_q$, so
\[
\mathrm{Var}(\hat y_q\mid x_q) = \frac1{n^2}\sum_i \mathrm{Var}(T_i\mid x_q)
= \frac1{n^2}\sum_i\Big[\mathrm{Var}(A_iB_i\mid x_q) +
\mathbb{E}[A_i^2\mid x_q]\,\sigma_{\mathrm{rag},i}^2\Big],
\]
using $\mathrm{Cov}(A_iB_i, A_i\epsilon_i\mid x_q) = 0$ (odd moment of
$\epsilon_i$) and $\mathbb{E}[A_i\epsilon_i\mid x_q] = 0$.

Since $r_i\sim\mathcal N(0,\delta_i^2 I_d)$ given $x_q$, $A_i$ and $B_i$ are
jointly Gaussian with $\mathbb{E}[A_i^2]=\delta_i^2\rho$,
$\mathbb{E}[B_i^2]=\delta_i^2\|\theta_k\|^2$,
$\mathbb{E}[A_iB_i]=\delta_i^2(x_q^\top\theta_k)$. The Gaussian identity
$\mathrm{Var}(AB) = \mathbb{E}[A^2]\mathbb{E}[B^2] + \mathbb{E}[AB]^2$ for
jointly mean-zero Gaussian $A,B$ gives
\[
\mathrm{Var}(A_iB_i\mid x_q) = \delta_i^4\rho\|\theta_k\|^2 +
\delta_i^4(x_q^\top\theta_k)^2.
\]
Taking expectation over $\rho\sim\chi^2(d)$ as in Step~2, using
$\mathbb{E}[\rho]=d$ and $\mathbb{E}[\rho(x_q^\top\theta_k)^2] =
(d+2)\|\theta_k\|^2$ (by the same $\omega$-uniform-on-sphere argument),
\[
\mathbb{E}[\mathrm{Var}(A_iB_i\mid x_q)] = (d+1)\,\delta_i^4\,\|\theta_k\|^2.
\]
Combining with $\mathbb{E}[A_i^2\mid x_q]\sigma_{\mathrm{rag},i}^2 =
\delta_i^2\rho\,\sigma_{\mathrm{rag},i}^2$, whose expectation over $\rho$ is
$d\,\delta_i^2\sigma_{\mathrm{rag},i}^2$, the leading-order finite-$n$ variance is
\begin{equation}
\label{eq:lsa-exact-variance}
\mathbb{E}[\mathrm{Var}(\hat y_q\mid x_q)] = \frac{(d+1)\|\theta_k\|^2}{n^2}
\sum_{i=1}^n \delta_i^4 + \frac{d}{n^2}\sum_{i=1}^n
\delta_i^2\sigma_{\mathrm{rag},i}^2.
\end{equation}

\textbf{Step 4: Asymptotic rate.} Substituting $\delta_i^2=\gamma i^q$,
$\sigma_{\mathrm{rag},i}^2=\gamma\sigma^2 i^q$ into
\eqref{eq:lsa-exact-variance} and using $\sum_i i^{2q} =
\Theta(n^{2q+1})$,
\[
\mathbb{E}[\mathrm{Var}(\hat y_q\mid x_q)] =
\frac{\gamma^2\big[(d+1)\|\theta_k\|^2 + d\sigma^2\big]}{2q+1}\,n^{2q-1}\,
(1+o(1)) = \Theta(d\cdot n^{2q-1}),
\]
where the $(d+1)\|\theta_k\|^2$ (retrieved-offset) and $d\sigma^2$
(label-noise) contributions are the same order in $n$, so neither can be
dropped.

\textbf{Step 5: Combining terms.} Comparing the two growing terms using the
exact bias formula from Step~2 and the leading-order variance formula
\eqref{eq:lsa-exact-variance},
\[
\frac{\mathbb{E}[\mathrm{Bias}(x_q)^2]}{\mathbb{E}[\mathrm{Var}(\hat
y_q\mid x_q)]} = \Theta\!\left(\frac{n^{2q}}{d\cdot n^{2q-1}}\right) =
\Theta\!\left(\frac{n}{d}\right)\to\infty
\]
for fixed $d$, so the bias term dominates for every $q>0$. Summing
$\sigma^2 + \mathbb{E}[\mathrm{Bias}(x_q)^2] +
\mathbb{E}[\mathrm{Var}(\hat y_q\mid x_q)]$ gives $\mathcal{R}_{\mathrm{LSA}}(n) =
\sigma^2 + \Theta(n^{2q}) + \Theta(d\cdot n^{2q-1})$, which diverges as
$n\to\infty$ for every fixed $q>0$.

\end{proof}
\section{Additional empirical validation}
\label{app:empirical-details}

This section collects settings and estimator definitions omitted from
Section~\ref{sec:empirics} for space.

The divergence of the literal LSA forward pass under distance-proportional noise carries no leading-order dependence on ambient dimension, and is specific to unweighted aggregation rather than to retrieval noise itself. Figure~\ref{fig:app_no-ceiling-reweight} confirms both properties directly.

\begin{figure}[htbp]
    \centering
    \begin{subfigure}[b]{0.48\textwidth}
        \includegraphics[width=\textwidth]{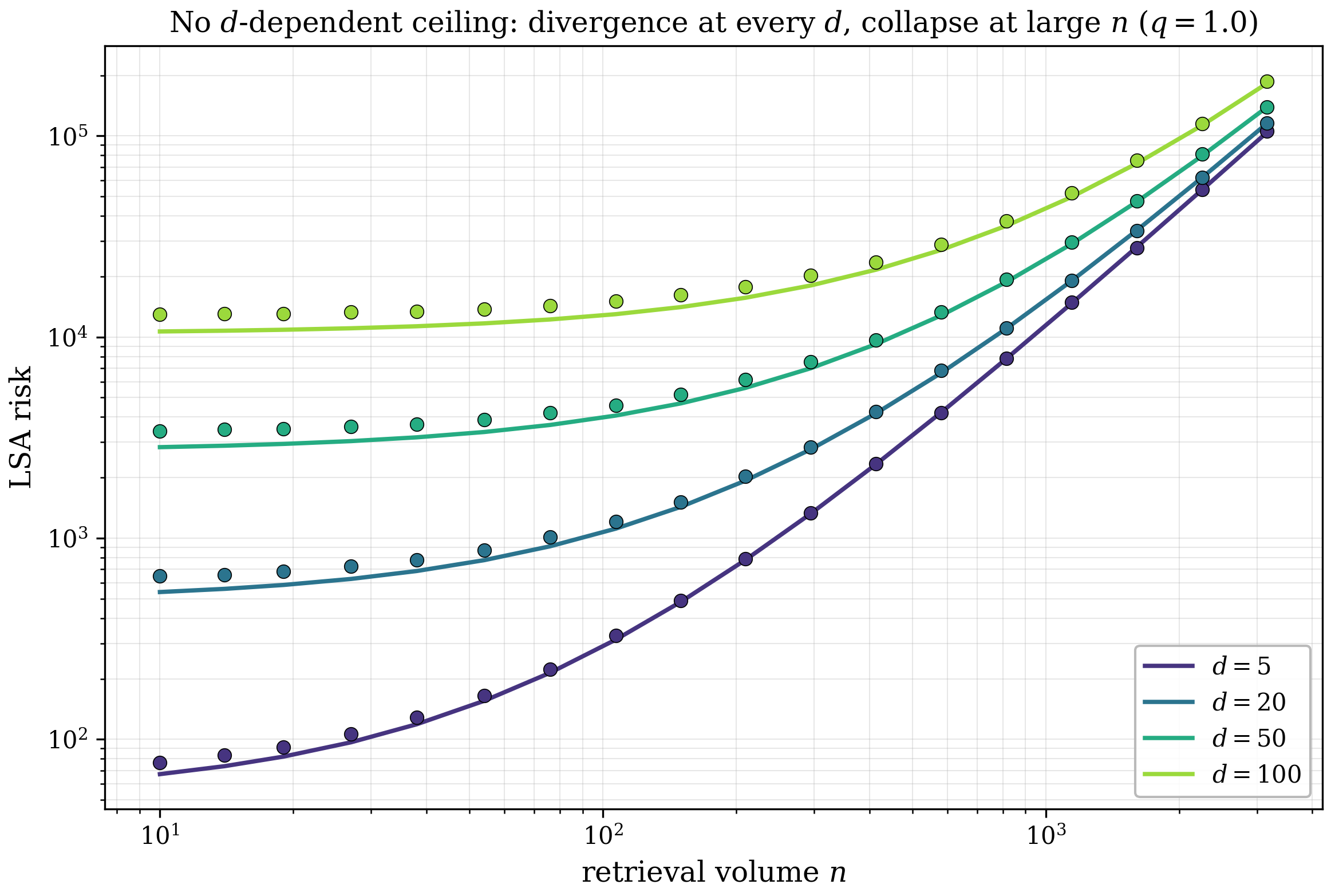}
        \caption{No dimension-dependent ceiling.}
        \label{fig:app_no-ceiling}
    \end{subfigure}
    \hfill
    \begin{subfigure}[b]{0.48\textwidth}
        \includegraphics[width=\textwidth]{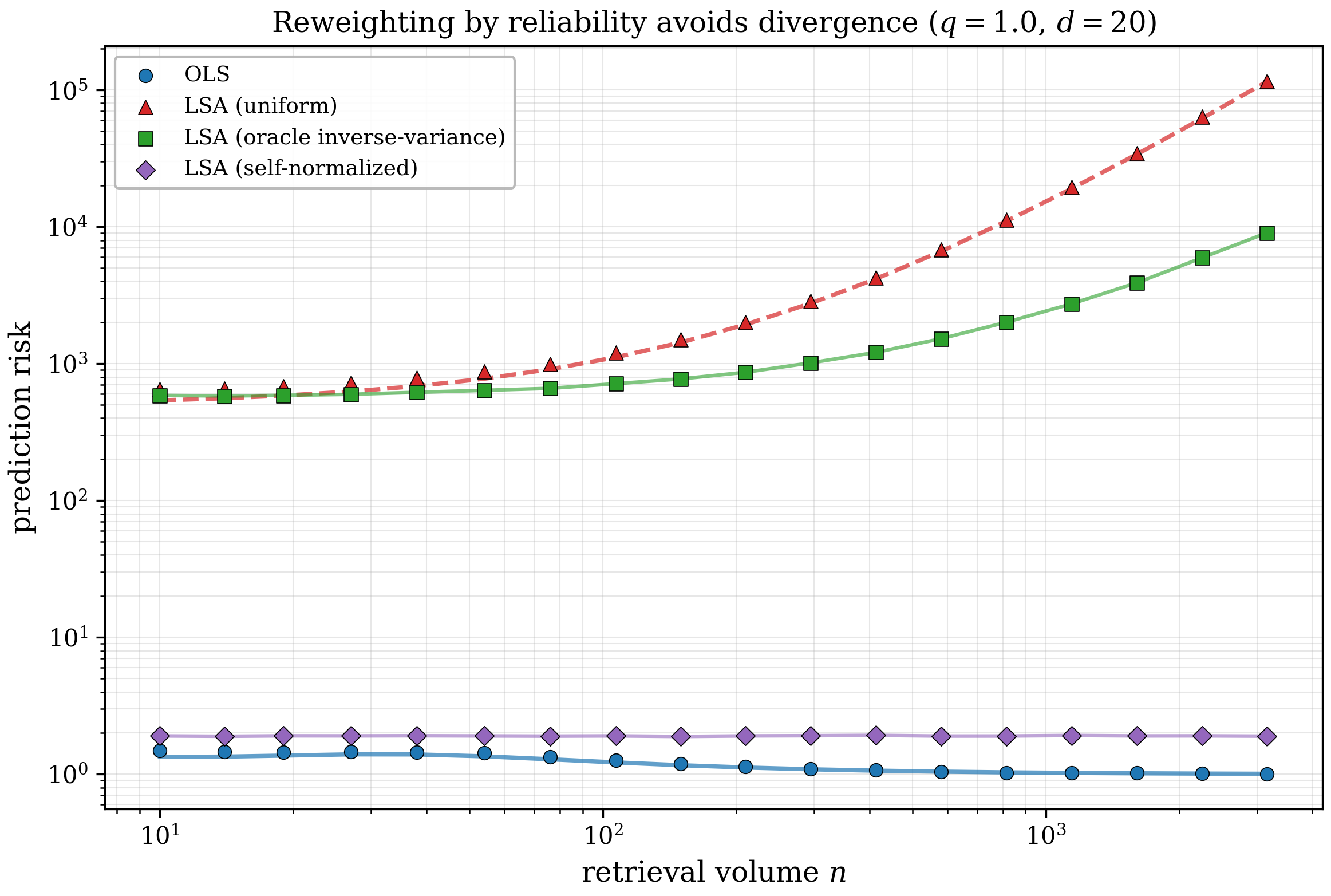}
        \caption{Reweighting recovers stability.}
        \label{fig:app_reweight}
    \end{subfigure}
    \caption{(a) LSA risk vs.\ $n$ across $d \in \{5,20,50,100\}$: curves
    separate where variance scales with $d$, then collapse as the
    dimension-independent bias term takes over. (b) Oracle inverse-variance
    weighting and self-normalization both remain stable where uniform
    averaging diverges.}
    \label{fig:app_no-ceiling-reweight}
\end{figure}

\paragraph{Methodology.} All Monte Carlo points are obtained by directly
simulating the generative model of Section~\ref{subsec:dpn-model} and fitting or evaluating
each estimator from its literal defining formula (least-squares for OLS,
the explicit forward-pass average for LSA and its variants); no Monte Carlo
point is computed by numerically evaluating the closed-form risk
expressions plotted as theory curves. Each point in Figure~\ref{fig:headline} and~\ref{fig:app_no-ceiling}
averages held-out prediction risk over at least 300 independent seeds, with
fresh test queries drawn per seed; Figure~\ref{fig:app_reweight} averages over 300 seeds per
retrieval volume $n$. Error bars are standard errors across seeds, clipped
below at $0.1\times$ the mean so that the lower whisker stays positive on the
log axis.

\paragraph{Interpretation.} The plotted curves are the
\emph{finite-$n$} risk expressions derived in
Appendix~\ref{subsec:C.1}--\ref{subsec:C.2}, namely the semi-analytic
$\mathbb{E}[V_{\mathrm{OLS}}(n)]$ of Theorem~\ref{thm:ols-dpn} and the exact
bias plus leading-order variance of Theorem~\ref{thm:lsa-dpn}, not the
asymptotic rates $\Theta(d/n)$ and $\Theta(n^{2q})$ those expressions reduce
to. This matters for interpretation: the theorems are statements about
$n\to\infty$, so plotting the rates alone would leave the small-$n$ region
untested, whereas the finite-$n$ expressions are falsifiable at every $n$ on
the grid. The LSA curve is therefore non-monotone at small $n$, where the
$c^0$ and $c^1$ terms of the exact bias still compete with $c^2$; the
$\Theta(n^{2q})$ slope emerges only once $c=s_n/n-1$ grows large.

\paragraph{Reweighted LSA estimators.} Figure~\ref{fig:app_reweight} evaluates two variants of
the forward pass that replace uniform averaging with a reliability-weighted
average,
\[
\hat y_q = \frac{\sum_{i=1}^n w_i\,(x_q^\top x_{\mathrm{rag}}^{(i)})\,y_i}
{\sum_{i=1}^n w_i \cdot z_i},
\]
differing only in the weights $w_i$ and normalizer $z_i$:
\begin{itemize}
\item \emph{Oracle inverse-variance}: $w_i = 1/\sigma_{\mathrm{rag},i}^2$
with $z_i=1$, i.e., the estimator is given the true DPN noise variance at
each rank and normalizes by $\sum_i w_i$.
\item \emph{Self-normalized}: $w_i=1$ with $z_i = (x_q^\top
x_{\mathrm{rag}}^{(i)})^2$, i.e., the estimator normalizes by the total
similarity mass $\sum_i(x_q^\top x_{\mathrm{rag}}^{(i)})^2$ rather than by
$n$, using no information beyond what the uniform forward pass already
computes.
\end{itemize}

\paragraph{Parameter settings.} The table provided below lists the
fixed parameters underlying each figure; $n$ is swept over a
log-spaced grid in each case, and $\theta_{k^*}$ is a fixed unit vector
throughout.

\begin{table}[h]
\centering
\begin{tabular}{lccccc}
\toprule
Figure & $d$ & $m$ & $q$ & $\gamma$ & $n$ range \\
\midrule
1a     & 20            & 50 & 0.5 & 0.3 & $10$--$10^4$ \\
1a     & 20            & 50 & 1   & 0.2 & $10$--$10^{3.5}$ \\
1a     & 20            & 50 & 2   & 0.1 & $10$--$10^{2.5}$ \\
1b     & 20            & 50 & 1             & 0.2               & $10$--$10^{3.5}$ \\
2a       & $\{5,20,50,100\}$ & 50 & 1         & 0.2               & $10$--$10^{3.5}$ \\
2b       & 20            & 50 & 1             & 0.2               & $10$--$10^{3.5}$ \\
\bottomrule
\end{tabular}
\caption{Fixed parameters per figure. $\sigma=1$ throughout. Panel 1a sweeps
three $(q,\gamma)$ pairs, each over its own $n$ range (narrower at larger
$q$, since risk grows faster and the largest $n$ values become numerically
extreme).}
\label{tab:empirics-params}
\end{table}


\newpage

\end{document}